\documentclass{article}

\usepackage[numbers]{natbib}
\usepackage[preprint]{neurips_2026}

\usepackage[utf8]{inputenc} 
\usepackage[T1]{fontenc}    
\usepackage{hyperref}       
\usepackage{url}            
\usepackage{booktabs}       
\usepackage{amsfonts}       
\usepackage{nicefrac}       
\usepackage{microtype}      
\usepackage{xcolor}         

\usepackage{enumitem}
\usepackage{graphicx}
\usepackage{booktabs}
\usepackage{multirow}
\usepackage{diagbox}
\usepackage{makecell}
\usepackage{array}
\usepackage{arydshln}
\usepackage{hhline}
\usepackage{tabularx}
\usepackage{wrapfig}
\usepackage{colortbl}
\usepackage{algorithm}
\usepackage{algpseudocode}
\usepackage{adjustbox}
\usepackage{float}
\usepackage{amsmath, amssymb, amsthm, amsfonts}
\usepackage[capitalize,noabbrev]{cleveref}
\usepackage{subcaption}
\usepackage{dsfont}
\usepackage[table]{xcolor}

\newtheorem{hypothesis}{Hypothesis}

\newtheorem{proposition}{Proposition}

\usepackage{algorithm}
\definecolor{commentcolor}{RGB}{110,154,155}   

\title{Self-Denoising: When Explanations Denoise Themselves in Self-Interpretable GNNs}
\title{When Explanations Denoise Themselves in Self-Interpretable GNNs}
\title{Why Self-Inconsistency Arises in GNN Explanations and How to Exploit It}

\author{%
  Wenxin Tai, Yaqian Liu, Ting Zhong, Fan Zhou\thanks{Corresponding author.} \\
  University of Electronic Science and Technology of China \\
}

\begin{document}

\maketitle

\begin{abstract}
    Recent work has observed that explanations produced by Self-Interpretable Graph Neural Networks (SI-GNNs) can be self-inconsistent: when the model is reapplied to its own explanatory graph subset, it may produce a different explanation. However, why self-inconsistency arises remains poorly understood. In this work, we first identify re-explanation-induced context perturbation as the direct cause of score variation. We then introduce a latent signal assignment hypothesis to explain why only some edges are sensitive to this perturbation, and analyze how conciseness regularization affects latent signal assignment. Given that self-inconsistent edges do not provide stable evidence for the model's prediction, we propose \textit{Self-Denoising (SD)}, a model-agnostic and training-free post-processing strategy that calibrates explanations with only one additional forward pass. Experiments across representative SI-GNN frameworks, backbone architectures, and benchmark datasets support our hypothesis and show that SD consistently improves explanation quality while adding only about 4--6\% computational overhead in practice. 
\end{abstract}

\begin{figure}[ht]    
    \centering    
    \includegraphics[width=\linewidth]{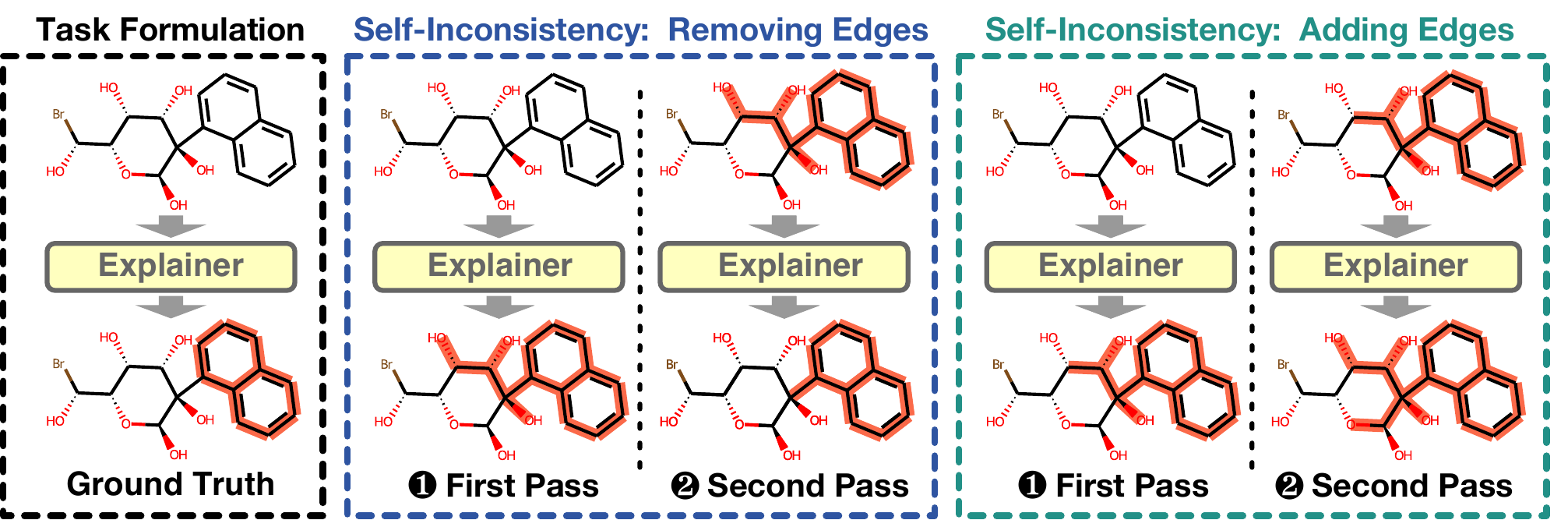}    
    \caption{Illustration of self-inconsistency in SI-GNNs, where re-explanation may remove selected edges or add new ones. This issue has been observed in all representative SI-GNN frameworks~\cite{tai2026self}.} 
    \label{fig:intro}
    \vspace{-0.2cm}
\end{figure}

\section{Introduction}

Self-Interpretable Graph Neural Networks (SI-GNNs)~\cite{velivckovic2018graph,sui2022causal,azzolin2025beyond,miao2022interpretable} produce explanations as part of prediction: they first identify an explanatory graph subset and then make predictions based on this subset. Unlike post-hoc explainers~\cite{ying2019gnnexplainer,luo2020parameterized,luo2024towards} that interpret a pretrained model from the outside, SI-GNNs provide built-in explanations, which are therefore generally considered more trustworthy~\cite{rudin2019stop,ragodos2024model}. However, recent work~\cite{tai2026self} shows that such trust is not always warranted. As illustrated in \cref{fig:intro}, when an SI-GNN is reapplied to its own explanatory graph subset, the newly produced explanation may differ from the original one, either removing previously selected edges or adding new ones. Such contradictory behavior, known as self-inconsistency, weakens the faithfulness and practical utility of SI-GNN explanations~\cite{tai2026self}, yet why it arises remains poorly understood.

In this work, we revisit the message-passing mechanism of GNNs and show that re-explanation perturbs local message-passing contexts, which is the direct cause of self-inconsistency (\cref{subsec:local-context}). To explain why some edges remain stable, especially important edges defined by ground-truth explanations, we introduce a latent signal assignment hypothesis (\cref{subsec:latent-signal}): the model assigns latent signals to edges it deems important, and these signals determine their edge scores. In contrast, edges without such signals are more susceptible to contextual perturbations. We further analyze how conciseness regularization shapes signal allocation through the explanation budget, with theoretical analysis and empirical observations supporting this perspective (\cref{subsec:budget-signal}).

Based on this hypothesis, we propose \textit{Self-Denoising (SD)}, a model-agnostic, simple, and efficient post-processing strategy to calibrate explanations (\cref{subsec:sd}). SD penalizes edges whose importance varies across repeated explanations, aiming to suppress context-driven noise while preserving signal-supported structures. We further discuss when SD improve explanation quality and what side effects it may introduce (\cref{subsec:ood}), and provide a practical strategy for selecting its denoising strength (\cref{subsec:eta-select}). Extensive experiments across representative SI-GNN frameworks, GNN backbones, and benchmark datasets demonstrate that SD consistently improves explanation quality. Moreover, its complementarity with Explanation Ensemble (EE)~\cite{tai2025redundancy}, an existing post-hoc explanation calibration strategy, suggests that self-inconsistency captures a distinct source of explanation noise. 

Code will be publicly released in a future version.

\section{Preliminaries}

In this section, we introduce SI-GNNs and the notion of self-inconsistency.

\subsection{Self-Interpretable Graph Neural Networks (SI-GNNs)}

A graph is denoted as $G = (\mathcal{V}, \mathcal{E})$, where $\mathcal{V}$ and $\mathcal{E}$ represent the sets of nodes and edges, respectively. 
For graph classification tasks, a standard GNN can be viewed as a composition of two functional components: 
a \textit{GNN encoder} $h_Z: G \rightarrow \mathbb{R}^d$ that maps the input graph to a latent representation, 
and a \textit{classifier} $h_{\hat{Y}}: \mathbb{R}^d \rightarrow \mathbb{R}^c$ that produces the final prediction. 
The overall mapping can be expressed as $f = h_{\hat{Y}} \circ h_Z$. SI-GNNs extend this formulation by introducing an \textit{explainer} $h_{G_s}$, which assigns edge importance scores through a soft edge mask $M \in [0,1]^{|\mathcal{E}|}$. 
Each element in $M$ indicates the importance of an edge for the prediction. 
The explanatory graph is then represented as $G_s = G \odot M$, where $\odot$ denotes edge-wise masking. The predictive process of SI-GNNs can be reformulated as $f = h_{\hat{Y}} \circ h_Z \circ h_{G_s}$. Following prior studies~\citep{tai2025redundancy,tai2026self}, we focus on instance-level explanations that emphasize the importance of structural patterns, i.e., edge-level explanations. Given a labeled sample $(G,Y)$, the objective of SI-GNNs is to learn an explanatory graph $G_s$ that preserves label-relevant information while remaining concise. 
This objective is commonly formulated as a mutual information (MI) maximization problem with a conciseness regularization term:
\begin{align}
    \max_{G_s} \; I(G_s; Y) - \beta R(G_s),
\end{align}
where $I(\cdot;\cdot)$ denotes mutual information, 
$R(G_s)$ enforces explanation conciseness, 
and $\beta$ controls the strength of conciseness regularization. 
Different SI-GNNs implement $R(G_s)$ in different ways, and we review the most common ones below~\cite{tai2025redundancy}.

\textbf{(1) Attention-based methods} assign edge importance scores through attention coefficients~\cite{vaswani2017attention}. A representative example is GAT~\cite{velivckovic2018graph}, which is trained solely with the classification loss:
\begin{align} \label{eq:type_1}
    \mathcal{L}_{\mathrm{GE}} = \mathcal{L}_{\mathrm{CE}}(Y, \hat{Y}|G_s).
\end{align}

\textbf{(2) Causal-based methods} use causal inference~\cite{pearl2014interpretation} to identify causal structures beyond spurious correlations~\cite{wu2022discovering}. 
A representative example is CAL~\cite{sui2022causal}, which uses disentanglement and intervention to assess whether selected edges are causally responsible for the model's output:
\begin{align} \label{eq:type_2}
    \mathcal{L}_{\mathrm{GE}} = \mathcal{L}_{\mathrm{CE}}(Y, \hat{Y}|G_s) + \beta \cdot \mathbb{D}_{\mathrm{KL}}(\mathbb{P}_{\boldsymbol{\theta}}(\bar{Y} | \bar{G}_s) || \mathbb{Q}(\bar{Y})) + \gamma \cdot \mathcal{L}_{\mathrm{CE}}(Y, \hat{Y}' | G_s \cup \bar{G}_s'),
\end{align}
where $\bar{G}_s = G \setminus G_s$ denotes the complementary subgraph, and $\bar{G}_s'$ is obtained via intervention (e.g., replacing latent representations with those from other samples in the same batch). 
The auxiliary prior distribution $\mathbb{Q}(\bar{Y})$ is typically chosen as uniform.

\textbf{(3) Size-constrained methods} encourage concise explanations by penalizing the size of the selected subgraph~\cite{lin2020graph,luo2024towards}. A representative example is SMGNN~\cite{azzolin2025beyond}, which uses a sparsity regularizer:
\begin{align} \label{eq:type_3}
    \mathcal{L}_{\mathrm{GE}} = \mathcal{L}_{\mathrm{CE}}(Y, \hat{Y}|G_s) + \beta \cdot \frac{|G_s|}{|G|},
\end{align}
where $|G_s|$ is the total edge-mask mass, and $|G|$ is the number of edges in the original graph.

\textbf{(4) MI-constrained methods} adopt an information-theoretic perspective, aiming to extract a minimal yet sufficient subgraph by limiting the MI between the explanation and the original graph~\cite{yu2021graph,yu2022improving}. A representative example is GSAT~\cite{miao2022interpretable}, which approximates MI minimization via KL divergence:
\begin{align} \label{eq:type_4}
    \mathcal{L}_{\mathrm{GE}} = \mathcal{L}_{\mathrm{CE}}(Y, \hat{Y}|G_s) + \beta \cdot \mathbb{D}_{\mathrm{KL}}\big(\mathbb{P}_{\boldsymbol{\theta}}(G_s|G) || \mathbb{Q}(G_s)\big),
\end{align}
where $\mathbb{Q}(G_s)$ is typically defined as a Bernoulli prior. Other works are discussed in \cref{app:related}.

\subsection{Explanation Self-Consistency}

\begin{figure}[t]  
\centering  
\begin{subfigure}{0.24\linewidth}    
\includegraphics[width=\textwidth]{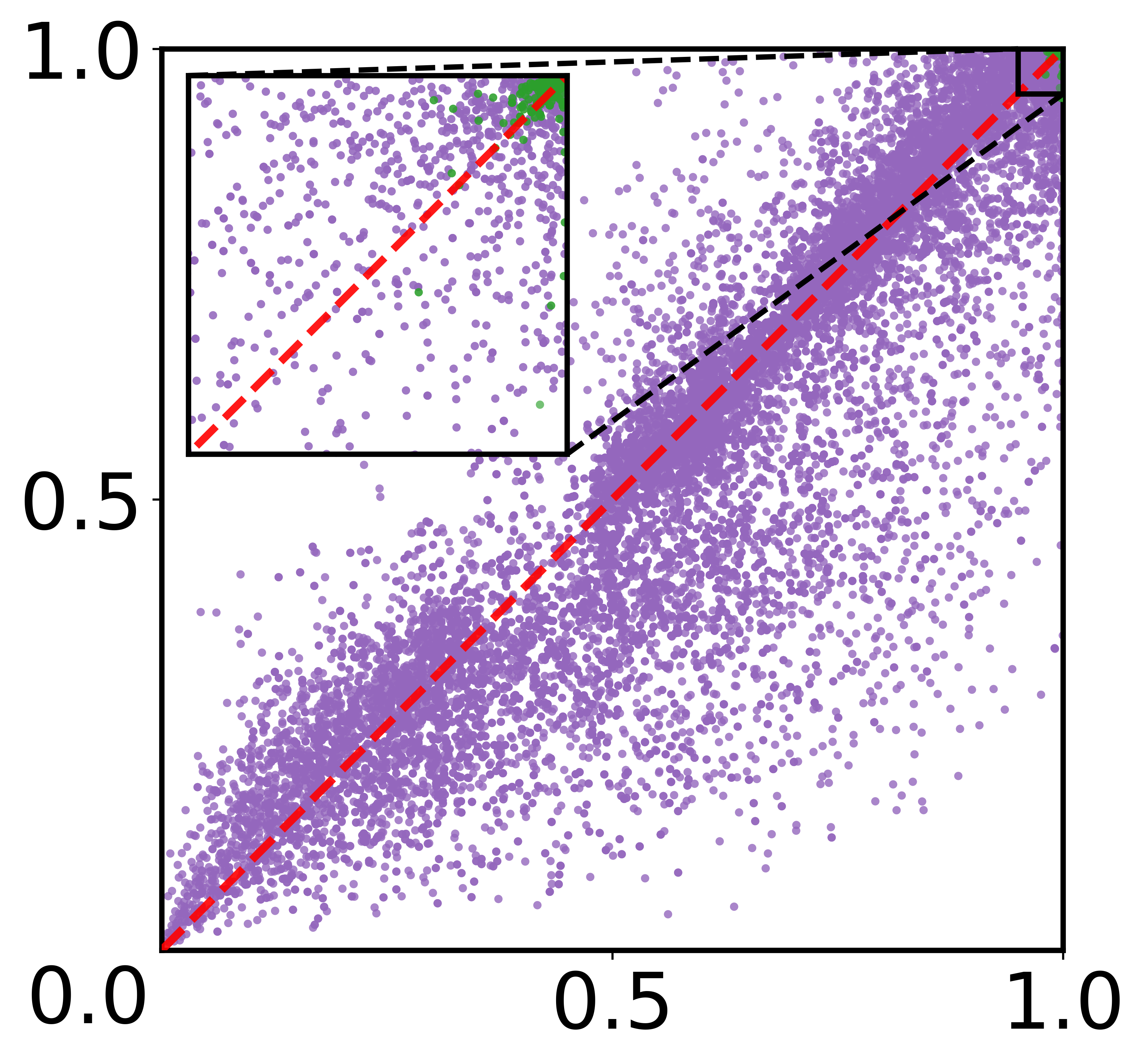}    
\caption{GAT}  
\end{subfigure}  
\hfill  
\begin{subfigure}{0.24\linewidth}    
\includegraphics[width=\textwidth]{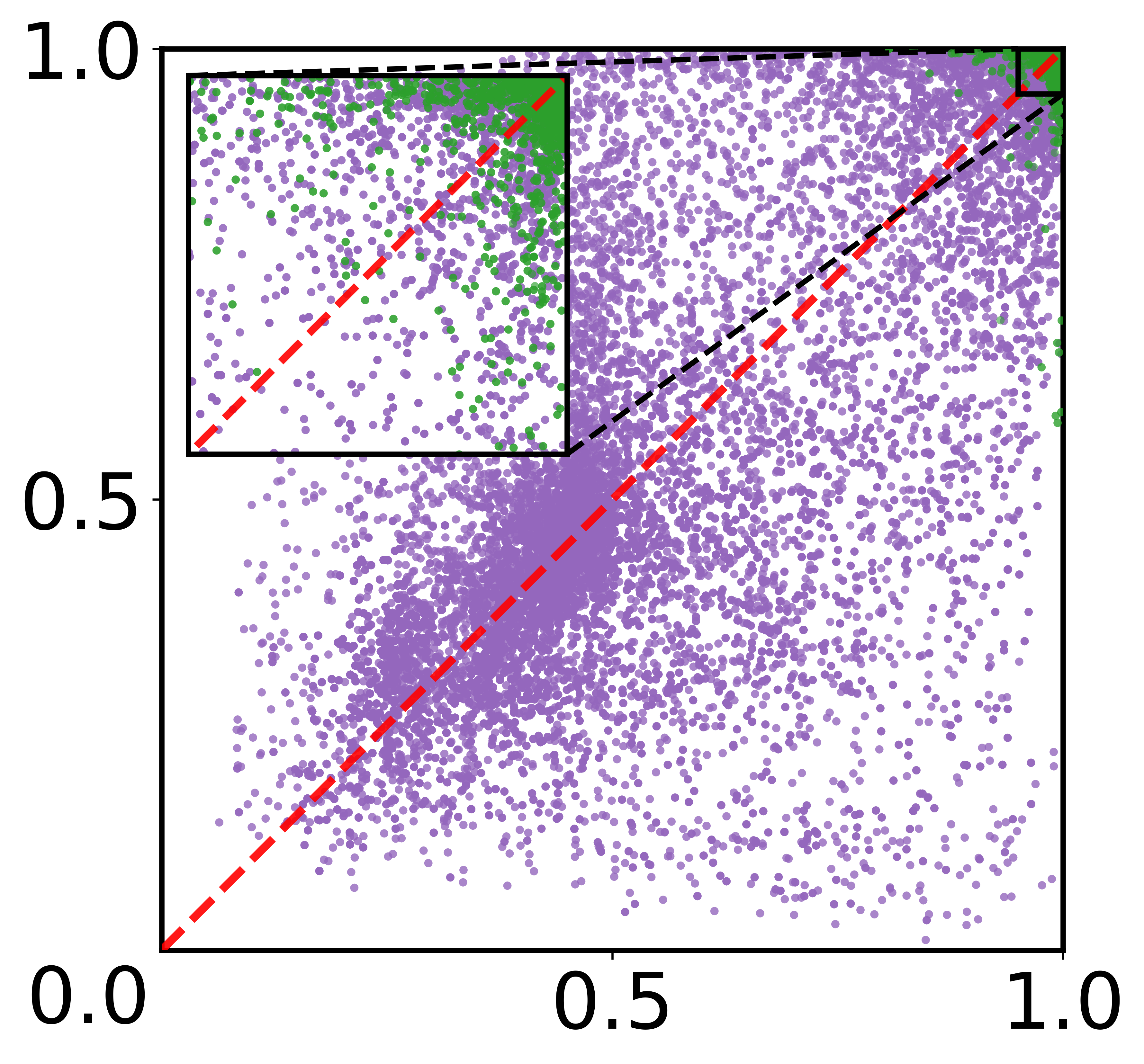}    
\caption{CAL}  
\end{subfigure}  
\hfill  
\begin{subfigure}{0.24\linewidth}    
\includegraphics[width=\textwidth]{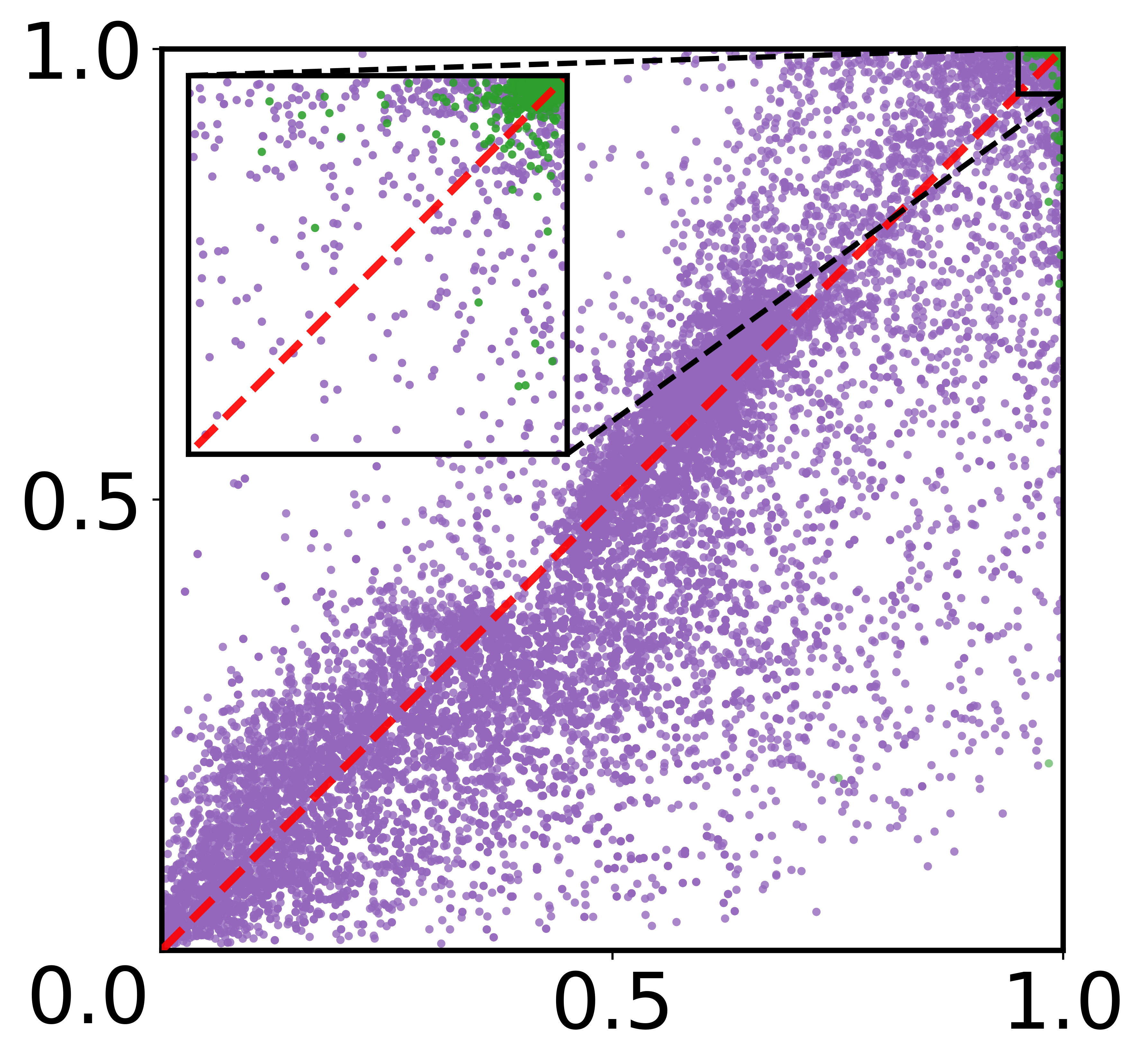}    
\caption{SMGNN}  
\end{subfigure}  
\hfill  
\begin{subfigure}{0.24\linewidth}    
\includegraphics[width=\textwidth]{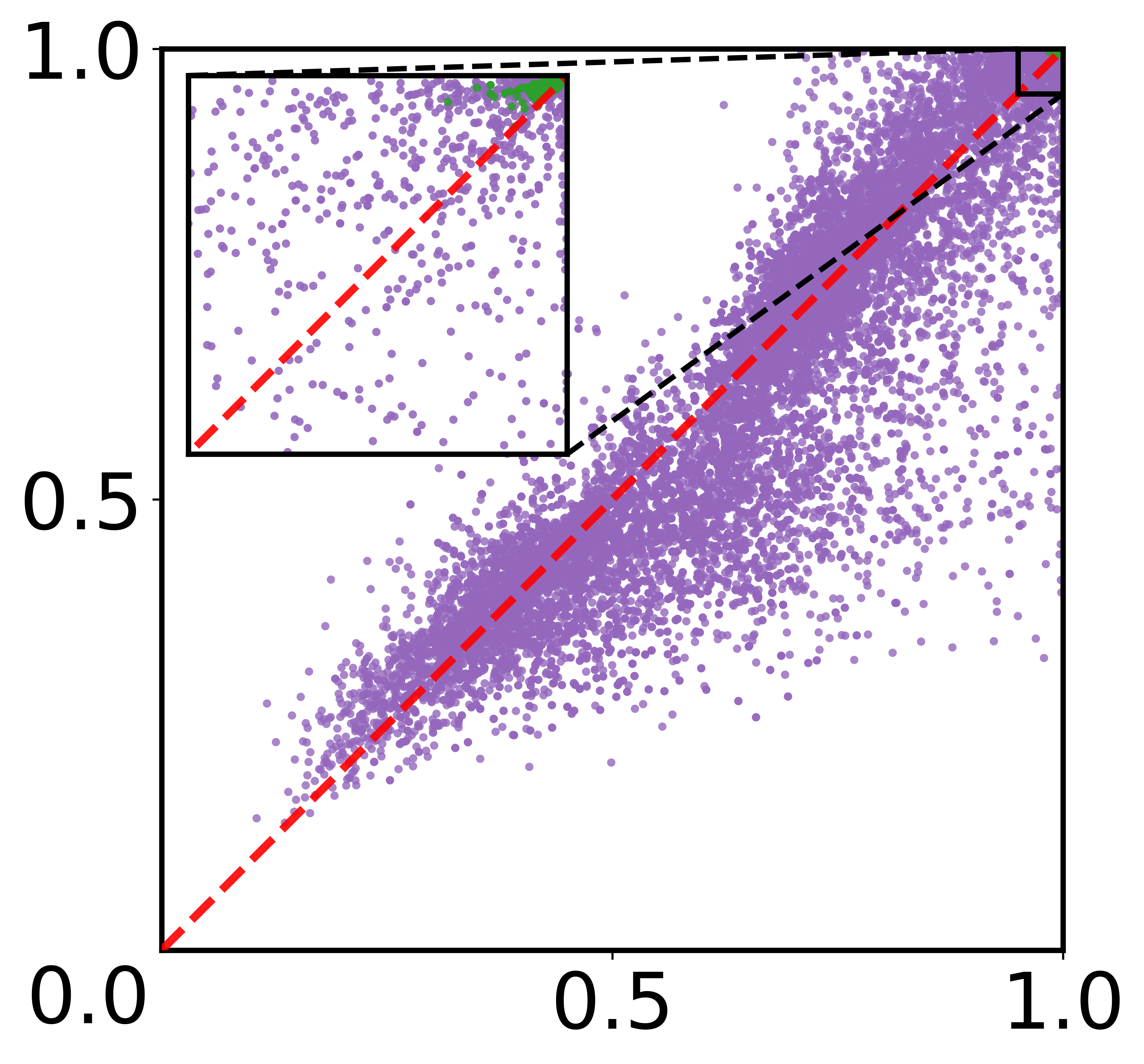} 
\caption{GSAT}  
\end{subfigure}  
\caption{Self-inconsistency patterns of four representative SI-GNNs on the BA-2MOTIFS dataset. Each plot compares first-pass (x-axis) and second-pass (y-axis) edge scores, with purple and green denoting unimportant and important edges (by ground-truth explanations), respectively. Deviations from the diagonal (red dashed line) mainly occur on a subset of unimportant edges.}
\label{fig:self-inconsistency}
\vspace{-0.2cm}
\end{figure}

Explanation self-consistency characterizes whether an SI-GNN can reproduce its own explanation under re-explanation. 
Given an input graph $G$, the SI-GNN first produces an explanation $G_s^{(1)} = h_{G_s}(G)$ and the corresponding explanatory graph $G_s^{(1)} = G \odot M^{(1)}$.
The model is then reapplied to $G_s^{(1)}$, yielding a second explanation $G_s^{(2)} = h_{G_s}(G_s^{(1)})= G \odot M^{(2)}$.
Explanation self-consistency is measured by the discrepancy between the two explanations:
\begin{align}
    \mathrm{ESC}(G_s^{(1)}, G_s^{(2)}) = d(G_s^{(1)}, G_s^{(2)}) = d(M^{(1)}, M^{(2)}),
\end{align}
where $d(\cdot,\cdot)$ denotes a discrepancy measure (e.g., mean absolute difference) between edge masks.

Recent work~\citep{tai2026self} showed that self-consistency is closely related to faithfulness~\citep{yuan2021explainability}: if an explanatory graph subset truly preserves the model's decision-relevant evidence, then reapplying the model to this graph subset should yield a consistent explanation. However, existing SI-GNNs were found to be self-inconsistent. On datasets with human-annotated ground-truth explanations, self-inconsistency mainly appears on unimportant edges, while important edges remain relatively stable across explanatory passes, as shown in \cref{fig:self-inconsistency}. Building on these observations, Tai et al. \cite{tai2026self} investigated whether training SI-GNNs toward self-consistency can improve explanation quality. Nevertheless, the mechanism behind self-inconsistency remains unclear. 
\section{Why Self-Inconsistency Arises in SI-GNNs}

In this section, we first show that re-explanation perturbs local message-passing contexts, and then use a latent signal hypothesis to explain why only some edges are sensitive to this perturbation. All proofs of theoretical results appearing from this section onward are deferred to \cref{app:proof}.

\subsection{Re-Explanation Perturbs Local Contexts}\label{subsec:local-context}

We begin by analyzing what changes during re-explanation. Formally, consider an $L$-layer GNN encoder $h_Z$. At the $l$-th layer, the representation of node $v_i$ is updated as
\begin{align}
    \mathbf{h}_i^{(l)}
    =
    \phi^{(l)}
    \Big(
        \mathbf{h}_i^{(l-1)},
        \{
        \mathbf{h}_j^{(l-1)}
        :
        v_j \in \mathcal{N}(v_i)
        \}
    \Big),
\end{align}
where $\mathcal{N}(v_i)$ denotes the neighbors of $v_i$, and $\phi^{(l)}$ represents the message aggregation function. 

During re-explanation, the explanatory graph $G_s^{(1)} = G \odot M^{(1)}$ perturbs the original graph through edge-wise masking. 
Although the underlying edge set may remain unchanged under soft masking, the message-passing context is altered because different nodes contribute with different weights. 
Since edge importance scores are computed from node representations produced by message passing (see \cref{app:setting} for implementation details), re-explanation naturally perturbs these scores.

The above analysis explains why self-inconsistency can arise even when the SI-GNN parameters remain unchanged.
However, it does not explain the heterogeneous behavior observed in \cref{fig:self-inconsistency}: only some unimportant edges exhibit score variations, whereas important edges remain stable across explanatory passes. 
We next address this question from a latent signal perspective.

\subsection{A Latent Signal Perspective}\label{subsec:latent-signal}

The stability of certain edges under re-explanation suggests that their scores are not determined by local context. Since re-explanation changes the surrounding message-passing context, the signals that determine their scores should arise from context-invariant components of the edge-scoring representation, such as intrinsic features derived from its incident nodes rather than information aggregated from changing neighborhoods. Edges associated with such signals are expected to remain stable across explanatory passes, whereas edges lacking these signals are more likely to exhibit substantial score variations. We formalize this intuition as follows.

\begin{hypothesis}[Latent edge signal assignment]
\label{assump:three_state}
Let $\mathcal{E}$ be the edge set of an input graph. 
We characterize the edge-scoring behavior of an SI-GNN through three latent states. 
The edge set is partitioned into three disjoint subsets
$\mathcal{E}_p$, $\mathcal{E}_n$, and $\mathcal{E}_c$, corresponding to positive-signal edges, negative-signal edges, and no-signal (context-driven) edges, respectively:
\begin{align}
\mathcal{E}=\mathcal{E}_p\cup\mathcal{E}_n\cup\mathcal{E}_c.
\end{align}
Edges in $\mathcal{E}_p$ receive high scores in $[x_+,1]$, edges in $\mathcal{E}_n$ receive low scores in $[0,x_-]$, and edges in $\mathcal{E}_c$ receive scores in $[0,1]$ determined primarily by their local message-passing context.
\end{hypothesis}

Prior evidence suggests that SI-GNNs typically assign high scores to important edges~\cite{tai2025redundancy,tai2026self}. Therefore, we expect ground-truth important edges to belong to $\mathcal{E}_p$, while unimportant edges may fall into any of the three states. When the explanatory graph subset is fed back into the same SI-GNN, the local message-passing context is perturbed. Under Hypothesis~\ref{assump:three_state}, since edges in $\mathcal{E}_p$ and $\mathcal{E}_n$ are supported by latent signals, their importance scores remain stable across explanatory passes. In contrast, edges in $\mathcal{E}_c$ are context-driven, and their scores may change as the structure changes.

\subsection{Conciseness Regularization Shapes Signal Allocation}\label{subsec:budget-signal}

The latent signal perspective above explains why only some edges exhibit self-inconsistency. A remaining question is what determines whether an edge becomes signal-driven or context-driven. Here, we analyze this question from the perspective of conciseness regularization, which constrains the total importance mass that an SI-GNN can assign to edges. We abstract this effect as an effective explanation budget $K$\footnote{This abstraction is natural and recognized by literature~\cite{zhang2022protgnn,tai2025redundancy} because most explanation methods implicitly impose a budget on explanations through constraints on size, sparsity, or related quantities.} and study how this budget constrains latent signal assignment.

Let $\mu_p$, $\mu_n$, and $\mu_c$ denote the average scores of edges in $\mathcal{E}_p$, $\mathcal{E}_n$, and $\mathcal{E}_c$, respectively. 
By Hypothesis~\ref{assump:three_state},
\begin{align}
    \mu_p\in[x_+,1], \qquad
    \mu_n\in[0,x_-], \qquad
    \mu_c\in[0,1],
\end{align}
where positive-signal edges contribute large importance mass, negative-signal edges contribute little importance mass, and context-driven edges contribute uncertain importance mass due to their sensitivity to local message-passing contexts. Consequently, under a fixed explanation budget, the proportions of positive-signal, negative-signal, and context-driven edges cannot vary independently. The following proposition formalizes this constraint.

\begin{proposition}[Budget-constrained signal allocation]
\label{prop:budget_feasibility}
Under Hypothesis~\ref{assump:three_state}, let $q\in(0,1)$ denote the target probability that the explanation budget is satisfied, and define:
\begin{align}
    c_q = \mu_c + \frac{1}{2}\sqrt{\frac{q}{1-q}} .
\end{align}
A sufficient condition for the explanation budget to be satisfied with probability at least $q$ is:
\begin{align}
K
\ge
|\mathcal{E}_p|\mu_p
+
|\mathcal{E}_n|\mu_n
+
|\mathcal{E}_c|c_q .
\label{eq:budget_basic}
\end{align}
Equivalently, since $\mathcal{E}_p$, $\mathcal{E}_n$, and $\mathcal{E}_c$ form a partition of $\mathcal{E}$, this condition can be written as:
\begin{align}
K
\ge
|\mathcal{E}|c_q
+
|\mathcal{E}_p|(\mu_p-c_q)
-
|\mathcal{E}_n|(c_q-\mu_n).
\label{eq:budget_tradeoff}
\end{align}
\end{proposition}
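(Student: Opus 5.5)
The plan is to model the total importance mass as $S=\sum_{e\in\mathcal{E}_p}s_e+\sum_{e\in\mathcal{E}_n}s_e+\sum_{e\in\mathcal{E}_c}s_e$. Under Hypothesis~\ref{assump:three_state} we treat the signal-supported parts as essentially deterministic, with total masses $|\mathcal{E}_p|\mu_p$ and $|\mathcal{E}_n|\mu_n$. The context-driven part $S_c=\sum_{e\in\mathcal{E}_c}s_e$ is treated as random, since each context-driven score is a $[0,1]$-valued quantity driven by perturbed local contexts. Its mean is $|\mathcal{E}_c|\mu_c$. The event "budget satisfied" is $\{S\le K\}$, so it suffices to find a threshold $t$ such that $\Pr(S_c\le t)\ge q$, and then require $K\ge |\mathcal{E}_p|\mu_p+|\mathcal{E}_n|\mu_n+t$.

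\textbf{Step 1 (variance bound).} Every context-driven score lies in $[0,1]$, so Popoviciu's inequality gives $\mathrm{Var}(s_e)\le 1/4$. I will bound $\mathrm{Var}(S_c)\le |\mathcal{E}_c|^2/4$. This needs no independence, because $\sqrt{\mathrm{Var}(\sum s_e)}\le\sum\sqrt{\mathrm{Var}(s_e)}$ by Minkowski's inequality. This is the right scaling, since the target constant multiplies $|\mathcal{E}_c|$ linearly rather than through $\sqrt{|\mathcal{E}_c|}$. Equivalently, one can work with the average context score $\bar s_c=S_c/|\mathcal{E}_c|\in[0,1]$, whose variance is at most $1/4$ directly.

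\textbf{Step 2 (one-sided tail).} Apply Cantelli's inequality to $\bar s_c$:
\[
\Pr\bigl(\bar s_c-\mu_c\ge \lambda\bigr)\le \frac{\sigma^2}{\sigma^2+\lambda^2},\qquad \sigma^2\le \tfrac14 .
\]
Requiring the right-hand side to be at most $1-q$ gives $\lambda^2\ge \sigma^2 q/(1-q)$. Hence $\lambda=\tfrac12\sqrt{q/(1-q)}$ suffices, and $\Pr(\bar s_c< c_q)\ge q$, so $\Pr(S_c\le |\mathcal{E}_c|c_q)\ge q$.

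\textbf{Step 3 (assembling).} Plugging $t=|\mathcal{E}_c|c_q$ into the sufficient condition gives \eqref{eq:budget_basic}. Writing $|\mathcal{E}_c|=|\mathcal{E}|-|\mathcal{E}_p|-|\mathcal{E}_n|$ and regrouping terms gives \eqref{eq:budget_tradeoff}; this is pure algebra.

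The main obstacle is modeling rather than computation. One must state precisely what is random: the context-driven scores, over the randomness of contexts or perturbations. One must also justify treating the $\mathcal{E}_p,\mathcal{E}_n$ masses as fixed at their averages. If they are also random, their ranges $[x_+,1]$ and $[0,x_-]$ still give bounded deviations that one could fold in, but the cleaner route is to state explicitly that only $\mathcal{E}_c$ is context-stochastic, as the hypothesis suggests.

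I would also take care to use the distribution-free Cantelli bound, not a Hoeffding bound. This is because the constant $\tfrac12\sqrt{q/(1-q)}$ matches Cantelli with variance $1/4$ exactly. It also avoids any need for independence across edges, which would be unrealistic under message passing.
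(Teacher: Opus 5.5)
Your proposal is correct and follows the paper's proof: the paper also treats the $\mathcal{E}_p,\mathcal{E}_n$ masses as deterministic, bounds $\mathrm{Var}(S_c)\le|\mathcal{E}_c|^2/4$ by applying Popoviciu directly to $S_c\in[0,|\mathcal{E}_c|]$ (your normalized-average variant), applies Cantelli with $a=T-|\mathcal{E}_c|\mu_c$, and finishes with the partition substitution. Your Minkowski justification of the variance bound and your ``pick a threshold, then use monotonicity'' framing are only cosmetic differences from that argument.
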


The proposition provides a budget-based perspective on how conciseness regularization shapes latent signal allocation. When the explanation budget is sufficient, important edges are expected to receive stable positive signals in order to support correct predictions. Since the budget remains sufficient, some unimportant edges may also retain positive signals, while many others remain context-driven. As the explanation budget gradually decreases, the model must reduce the total importance mass assigned to edges. Under Eq.~\eqref{eq:budget_tradeoff}, this increasingly favors reallocating unimportant edges from the positive-signal state toward the context-driven or negative-signal states, while some context-driven edges may further transition into the negative-signal state. During this stage, truly important edges can still preserve stable positive signals due to the supervision imposed by the downstream prediction objective. However, when the explanation budget becomes excessively tight, maintaining downstream prediction performance and satisfying the budget constraint become increasingly incompatible. In this regime, even some important edges may lose positive signals and become more context-sensitive, causing substantial degradation in downstream classification performance.

\textbf{Empirical support.} To provide empirical support for our latent signal hypothesis and signal allocation perspective, we analyze the relationship between edge score variation and surrounding context variation during re-explanation. For each edge $e_{ij}$, we define its score variation as
\begin{align}
    \Delta s_{ij}
    =
    \left|
    m_{ij}^{(1)}
    -
    m_{ij}^{(2)}
    \right|,
\end{align}
where $m_{ij}^{(1)} = M_{ij}^{(1)}$ and $m_{ij}^{(2)} = M_{ij}^{(2)}$ denote the importance scores of edge $e_{ij}$ in the first- and second-pass masks $M^{(1)}$ and $M^{(2)}$, respectively. We further quantify the variation of its surrounding message-passing context by averaging the score variations of neighboring edges:
\begin{align}
    \Delta c_{ij}
    =
    \frac{1}{|\mathcal{N}(e_{ij})|}
    \sum_{e_{uv}\in\mathcal{N}(e_{ij})}
    \left|
    m_{uv}^{(1)}
    -
    m_{uv}^{(2)}
    \right|,
\end{align}
where $\mathcal{N}(e_{ij})$ denotes the set of neighboring edges of $e_{ij}$, excluding the edge itself. Intuitively, $\Delta c_{ij}$ measures how strongly the local message-passing context surrounding $e_{ij}$ changes during re-explanation. We then compute Pearson, Spearman, and Kendall correlations between $\Delta s_{ij}$ and $\Delta c_{ij}$. Pearson correlation measures linear association, while Spearman and Kendall correlations measure rank-based monotonic association, with Kendall being more conservative and less sensitive to extreme ranks. If edges are primarily context-driven, edges experiencing larger context perturbations should also exhibit larger score variations. \cref{fig:context-correlation} reports these correlations under different conciseness regularization strengths, separately for important and unimportant edges.

As shown in \cref{fig:context-correlation}, the correlation between edge score variation and context variation differs significantly between important and unimportant edges and evolves with the regularization strength $\beta$. When $\beta$ is small, unimportant edges exhibit relatively higher correlation, indicating stronger context dependence, while important edges remain largely stable. As $\beta$ increases, the correlation of important edges gradually rises, suggesting that even important edges become more sensitive to contextual perturbations under stronger regularization. When the regularization becomes strong ($\beta=5\times10^{-1}$), the correlation of important edges increases substantially, and at $\beta=1$, it can even exceed that of unimportant edges. At the same time, downstream classification performance degrades noticeably: the accuracy, which remains consistently high (around $95\%-97\%$) under weaker regularization, drops to approximately $93\%$ at $\beta=5\times10^{-1}$ and further declines to $81\%$ at $\beta=1$. This behavior aligns with our budget-constrained signal allocation perspective: once the explanation budget becomes overly restrictive, preserving prediction performance and satisfying the budget constraint become incompatible, causing even important edges to lose stable positive signals and become increasingly context-sensitive.

Note that the moderate correlations observed for unimportant edges provide empirical support for the existence of negative signals in Hypothesis~\ref{assump:three_state}. If unimportant edges were purely context-driven, their score variations would be expected to show stronger dependence on context variations. Instead, the observed moderate correlations are more consistent with a mixture of context-driven edges and negative-signal edges, where the latter dilute the overall context sensitivity of unimportant edges.

\begin{figure}[t]  
\centering     
\includegraphics[width=\textwidth]{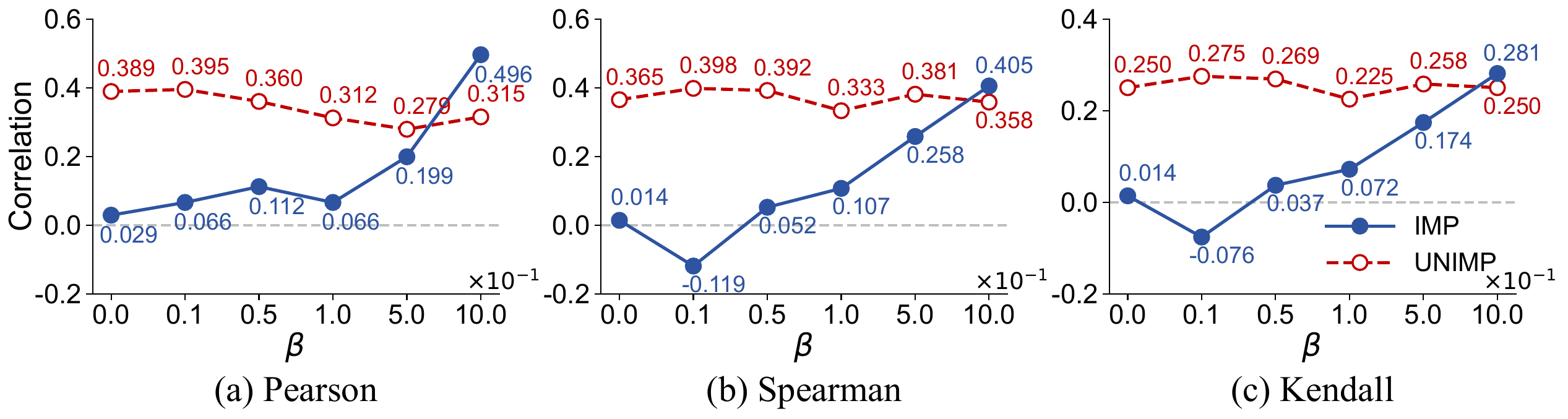}   
\vspace{-0.5cm}
\caption{Correlation analysis between edge score variation and local context variation on SMGNN over the BA-2MOTIFS dataset. Pearson, Spearman, and Kendall correlations are reported separately for important and unimportant edges under different conciseness regularization strengths.}
\label{fig:context-correlation}
\vspace{-0.2cm}
\end{figure}

\section{From Self-Inconsistency to Self-Denoising}

Existing literature defines an explanation as the set of high-scoring edges~\cite{ji2025comprehensive}. However, our analysis above suggests that some high-scoring edges are merely contextually amplified: their importance arises from surrounding message-passing contexts and is not reproduced under re-explanation. As a result, explanations inevitably mix positively signaled edges, which the model genuinely regards as important, with contextually amplified ones. Such contamination not only reduces explanation plausibility, but may also lead to serious consequences in high-stakes settings where explanations guide human decisions (e.g., scientific discovery or costly decision-making~\cite{wong2024discovery}).

\subsection{Self-Denoising: A Training-Free Explanation Calibration Strategy}\label{subsec:sd}

The self-inconsistency phenomenon provides an intrinsic signal for calibrating explanations. Intuitively, edges whose importance remains stable under re-explanation are more likely to be supported by latent signals. Motivated by this observation, we propose \textit{Self-Denoising (SD)}, a model-agnostic, simple, and efficient post-processing strategy to improve explanation quality. 

Specifically, for each edge $e_{ij}$, we measure its instability as
\begin{align}
    \Delta s_{ij} = \left| m_{ij}^{(1)} - m_{ij}^{(2)} \right|,
\end{align}
where $m_{ij}^{(1)}$ and $m_{ij}^{(2)}$ are the edge importance scores from the first and second explanatory passes, respectively. We then calibrate the edge scores by
\begin{align}\label{eq:sd}
    \tilde{m}_{ij}
    =
    \max\left(0, (1 - \eta\Delta s_{ij}) m_{ij}^{(1)}\right),
\end{align}
where $\eta \geq 0$ controls the denoising strength. In this way, SD shifts explanations from being driven by instantaneous scores to being supported by stable evidence.

\subsection{Self-Denoising vs. Out-of-Distribution}\label{subsec:ood}

SD improves explanation quality by suppressing context-driven edges. However, because SD directly modifies edge importance scores, it may introduce distribution shift. Below, we analyze the effect of the denoising strength $\eta$ from two perspectives: ranking correction and prediction stability.

\begin{proposition}[Ranking correction by SD]
\label{prop:sd_ranking}
Consider an important edge $e^+$ and an unimportant edge $e^-$. 
Let $m^+$ and $m^-$ denote their first-pass importance scores, and let $\Delta s^+$ and $\Delta s^-$ denote their instability scores under re-explanation. 
Suppose the original explanation ranks them incorrectly, i.e., $m^+ < m^-$, and the unimportant edge is more unstable, i.e.,
\begin{align}
    \Delta s^- > \Delta s^+ .
\end{align}
SD corrects their relative order, i.e., $\tilde{m}^+ > \tilde{m}^-$ whenever
\begin{align}
    \eta >
    \frac{m^- - m^+}
    {m^- \Delta s^- - m^+ \Delta s^+}.
\end{align}
\end{proposition}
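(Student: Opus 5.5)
The plan is a direct algebraic comparison of the two calibrated scores from Eq.~\eqref{eq:sd}. I first argue in the unclipped regime and then handle the $\max(0,\cdot)$ truncation separately. Write the pre-truncation scores as $r^{\pm} = (1-\eta\Delta s^{\pm})\,m^{\pm}$, so that $\tilde m^{\pm} = \max(0, r^{\pm})$.

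\textbf{Step 1 (sign of the denominator).} I show that $D := m^-\Delta s^- - m^+\Delta s^+ > 0$, so the threshold in the statement is well defined and dividing by $D$ preserves inequalities. All scores and instabilities are nonnegative, because $M\in[0,1]^{|\mathcal{E}|}$ and $\Delta s = |m^{(1)}-m^{(2)}|$. From $m^- > m^+ \ge 0$ we get $m^->0$. Combined with $\Delta s^- > \Delta s^+ \ge 0$, this gives $m^-\Delta s^- > m^-\Delta s^+ \ge m^+\Delta s^+$. Hence $D>0$.

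\textbf{Step 2 (unclipped comparison).} Expanding,
\begin{align}
r^+ - r^- = (m^+ - m^-) + \eta\,(m^-\Delta s^- - m^+\Delta s^+) = -(m^- - m^+) + \eta D .
\end{align}
Since $D>0$, this is strictly positive exactly when $\eta > (m^- - m^+)/D$, which is the stated condition. So $r^+ > r^-$.

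\textbf{Step 3 (truncation).} Because $x\mapsto\max(0,x)$ is nondecreasing, $r^+>r^-$ immediately gives $\tilde m^+ \ge \tilde m^-$. Strictness holds whenever $r^+>0$: either $r^->0$ and $\tilde m^+ = r^+ > r^- = \tilde m^-$, or $r^-\le 0$ and $\tilde m^+ = r^+ > 0 = \tilde m^-$.

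I expect this step to be the main obstacle. If $\eta$ is so large that $r^+\le 0$ (i.e.\ $\eta\Delta s^+\ge 1$), then both scores are clipped to $0$ and only a tie results. I would handle this by making explicit the implicit regime in which SD retains the important edge, namely $m^+>0$ and $\eta<1/\Delta s^+$ (vacuous if $\Delta s^+=0$), under which $r^+>0$ and the strict inequality follows. I would also remark that under the unclipped form of Eq.~\eqref{eq:sd} the statement holds verbatim with no extra assumption. This caveat naturally motivates the discussion of distribution shift and the choice of $\eta$ that follows.
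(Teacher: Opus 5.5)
Your proof is correct and follows the paper's argument. Like the paper, you expand $(1-\eta\Delta s^+)m^+ > (1-\eta\Delta s^-)m^-$, show that $m^-\Delta s^- - m^+\Delta s^+ > 0$, and divide by it. The paper avoids the issue you raise in Step 3 by restating the proposition in the appendix with an explicit ``ignoring the clipping operation'' qualifier. Your two extra points are therefore valid refinements of what its proof glosses over: you justify $D>0$ from nonnegativity of the scores, and you note that with $\max(0,\cdot)$ active the inequality degenerates to a tie when $m^+=0$ or $\eta\Delta s^+\ge 1$.
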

\begin{proposition}[Prediction stability under SD] \label{prop:sd_stability}
Let $G_s^{(1)} = G \odot M^{(1)}$ and $\tilde{G}_s = G \odot \tilde{M}$. 
Assume the prediction function $f$ is locally differentiable with respect to the edge-weight vector. 
If we require the prediction shift induced by SD to be bounded by $\varepsilon>0$, i.e.,
\begin{align}
    |f(\tilde{G}_s)-f(G_s^{(1)})| \le \varepsilon,
\end{align}
then a sufficient condition is
\begin{align}
    \eta
    \le
    \frac{\varepsilon}
    {
    \|\nabla f(M^{(1)})\|_\infty
    \sum_{e_{ij}\in\mathcal{E}} m_{ij}^{(1)}\Delta s_{ij}
    }.
\end{align}
\end{proposition}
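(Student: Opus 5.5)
The plan is a first-order (mean-value) argument in the edge-weight vector, followed by a Hölder bound. I will write $\delta = \tilde{M} - M^{(1)}$ for the edge-wise change produced by SD and proceed in three steps.

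\textbf{Step 1: bound each coordinate of $\delta$.} By Eq.~\eqref{eq:sd}, when $1-\eta\Delta s_{ij}\ge 0$ we get $\tilde m_{ij} = m_{ij}^{(1)} - \eta\Delta s_{ij} m_{ij}^{(1)}$, so $|\delta_{ij}| = \eta\, m^{(1)}_{ij}\Delta s_{ij}$. When the clipping at $0$ is active, $\tilde m_{ij}=0$, so $|\delta_{ij}| = m^{(1)}_{ij} \le \eta\, m^{(1)}_{ij}\Delta s_{ij}$, since here $\eta\Delta s_{ij} > 1$. In both cases
\begin{align}
|\delta_{ij}| \le \eta\, m^{(1)}_{ij}\Delta s_{ij}, \qquad \text{hence} \qquad \|\delta\|_1 \le \eta \sum_{e_{ij}\in\mathcal{E}} m^{(1)}_{ij}\Delta s_{ij}.
\end{align}

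\textbf{Step 2: linearize $f$.} View $f$ as a function of the edge-weight vector and use the local differentiability assumption. By the mean value theorem along the segment from $M^{(1)}$ to $\tilde M$, or to first order,
\begin{align}
f(\tilde G_s) - f(G_s^{(1)}) = \nabla f(\xi)^\top \delta
\end{align}
for some $\xi$ on that segment.

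\textbf{Step 3: apply Hölder.} This gives $|f(\tilde G_s)-f(G_s^{(1)})| \le \|\nabla f\|_\infty \|\delta\|_1$. Combining with Step 1, the shift is at most $\eta\,\|\nabla f\|_\infty \sum_{e_{ij}} m^{(1)}_{ij}\Delta s_{ij}$. Requiring this to be $\le\varepsilon$ and solving for $\eta$ yields exactly the stated sufficient condition.

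The main obstacle is Step 2. The statement uses the gradient at $M^{(1)}$, not at an intermediate point $\xi$, so I cannot claim an exact bound without more assumptions. I will handle this by stating that the argument holds under the local (first-order) approximation implied by ``locally differentiable'', with the remainder being $O(\|\delta\|^2)$. Alternatively, one can interpret $\|\nabla f(M^{(1)})\|_\infty$ as a bound on the gradient over the local neighborhood containing the segment, which makes the mean-value step exact. The other detail to verify is the clipping case in Step 1, which was checked above.
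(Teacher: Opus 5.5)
Your proposal follows the paper's proof essentially step for step: view $f$ as $F(M)=f(G\odot M)$, linearize at $M^{(1)}$, apply H\"older to get $\|\nabla f(M^{(1)})\|_\infty\,\|\tilde M-M^{(1)}\|_1$, and bound $\|\tilde M-M^{(1)}\|_1$ by $\eta\sum_{e_{ij}\in\mathcal{E}} m^{(1)}_{ij}\Delta s_{ij}$. If anything you are more careful than the paper in two places. First, the paper ignores the clipping in Eq.~\eqref{eq:sd}, whereas you handle it; your two-case argument is the one the paper uses only later, for the stochastic-mask bound in \cref{prop:sd_stochastic_stability}. Second, the paper passes from the first-order approximation to an inequality without comment, whereas you correctly note that, with the gradient evaluated at $M^{(1)}$, the bound holds only to first order unless $\|\nabla f(M^{(1)})\|_\infty$ is read as a bound over the whole segment.
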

Proposition~\ref{prop:sd_ranking} shows that $\eta$ should be sufficiently large to correct context-driven ranking errors, while Proposition~\ref{prop:sd_stability} shows that $\eta$ should not be overly large to avoid excessive prediction shift. Together, they reveal a trade-off between explanation calibration and prediction stability. A complementary stability bound under stochastic masking is provided in \cref{prop:sd_stochastic_stability} in \cref{app:proof}.

\subsection{Practical Consideration}\label{subsec:eta-select}

The above trade-off raises a practical question: \textit{how should $\eta$ be selected? }

A common way to select hyperparameters is to use validation accuracy: if an explanation is informative, the model should still make correct predictions based on it. However, due to the out-of-distribution (OOD) issue introduced by SD, this strategy becomes unreliable without adaptation. Therefore, it is necessary to first mitigate the impact of distribution shift on the model before using validation accuracy for selection. Recall that an SI-GNN predicts by first generating an explanatory graph, then encoding it with a GNN encoder, and finally feeding the graph representation into a classifier. Since the encoder is trained with stochastically sampled subgraphs, it is generally considered to be relatively robust to moderate structural perturbations~\cite{miao2022interpretable,luo2024towards,tai2025redundancy}. 

In practice, we freeze the explainer and encoder, and fine-tune only the classifier on SD-calibrated explanations for a few epochs (10 epochs in our experiments). After this lightweight adaptation, validation accuracy can better reflect whether SD preserves task-relevant information. We select $\eta$ based on the adapted validation accuracy and empirically examine this strategy in the next section.
\section{Experiments}

\textbf{Baselines.} We integrate SD into four representative SI-GNNs: GAT~\cite{velivckovic2018graph}, CAL~\cite{sui2022causal}, SMGNN~\cite{azzolin2025beyond}, and GSAT~\cite{miao2022interpretable}, which respectively represent the \textit{attention-based}, \textit{causal-based}, \textit{size-constrained}, and \textit{MI–based} design paradigms of SI-GNNs. We also compare with EE~\citep{tai2025redundancy}, a post-hoc explanation calibration method, and study whether SD provides complementary gains.

\textbf{Metrics.} We adopt four complementary metrics to comprehensively evaluate explanation quality. ROC-AUC (AUC), computed against the ground-truth explanations, measures the \textit{accuracy} (a.k.a \textit{plausibility}) of the generated explanations. Fidelity$^-$ (FID) assesses the \textit{faithfulness} of the explanation, indicating whether it truly corresponds to the model's internal decision process. Accuracy (ACC) evaluates the downstream classification performance based on the explanatory subset, reflecting its \textit{informativeness}. Sparsity (SPA) quantifies the \textit{conciseness} of the explanation. 

\textbf{Datasets.} Consistent with previous studies~\cite{tai2025redundancy,tai2026self}, we evaluate SD on four widely used benchmarks: one synthetic dataset, BA-2MOTIFS~\cite{luo2020parameterized}, and three real-world datasets, 3MR~\cite{rao2022quantitative}, BENZENE~\cite{morris2020tudataset}, and MUTAGENICITY~\cite{morris2020tudataset}. Details of the experimental setup are provided in \cref{app:setting}.



\subsection{Overall Performance of SD}

\begin{table*}[t]      
\caption{Experimental results of SD on SI-GNNs with GIN backbone across four datasets. \textbf{Bold} indicates better performance, while \underline{underlined} results are statistically significant ($p < 0.05$).}
\vspace{-0.2cm}
\label{tab:main_results}    
\begin{center}    
\setlength{\tabcolsep}{2pt} 
\resizebox{\linewidth}{!}{    
\begin{tabular}{ccccccccc}
\toprule        
\multirow{2}{*}{\vspace{-2mm} Method}
& \multicolumn{2}{c}{BA-2MOTIFS}         
& \multicolumn{2}{c}{3MR}         
& \multicolumn{2}{c}{BENZENE}         
& \multicolumn{2}{c}{MUTAGENICITY} \\        
\cmidrule(lr){2-3}\cmidrule(lr){4-5}\cmidrule(lr){6-7}\cmidrule(lr){8-9}        
& $\uparrow$ AUC (\%) & $\downarrow$ FID (\%)
& $\uparrow$ AUC (\%) & $\downarrow$ FID (\%)         
& $\uparrow$ AUC (\%) & $\downarrow$ FID (\%)         
& $\uparrow$ AUC (\%) & $\downarrow$ FID (\%) \\        
\midrule        
GAT~\citep{velivckovic2018graph}        
& 99.31$\pm$0.35 & 2.50$\pm$7.50
& 97.25$\pm$0.67 & {2.63$\pm$1.93}
& 83.51$\pm$2.24 & {2.06$\pm$0.65}
& 91.71$\pm$6.00 & 0.37$\pm$0.53 \\                
GAT+SD
& \textbf{99.47$\pm$0.28} & \underline{\textbf{0.80$\pm$2.40}}
& \underline{\textbf{98.68$\pm$0.40}} & {\textbf{2.15$\pm$1.17}}
& \underline{\textbf{87.39$\pm$1.57}} & \underline{\textbf{1.24$\pm$0.73}}
& \underline{\textbf{93.14$\pm$5.29}} & 1.22$\pm$1.14 \\ 
GAT+SD*
& \textbf{99.47$\pm$0.28} & \underline{\textbf{0.00$\pm$0.00}}
& \underline{\textbf{98.68$\pm$0.40}} & 6.37$\pm$4.41
& \underline{\textbf{87.39$\pm$1.57}} & \underline{\textbf{1.56$\pm$0.93}}
& \underline{\textbf{93.14$\pm$5.29}} & 1.25$\pm$1.34 \\
\midrule        
CAL~\citep{sui2022causal}               
& 98.64$\pm$1.33 & 17.10$\pm$17.21
& 96.25$\pm$1.59 & 8.17$\pm$2.52
& 77.64$\pm$3.05 & 4.97$\pm$3.44
& 96.39$\pm$1.39 & 2.06$\pm$1.14 \\  

CAL+SD
& \textbf{98.71$\pm$1.25} & 17.60$\pm$17.04
& \underline{\textbf{97.74$\pm$1.13}} & 8.24$\pm$2.54
& \textbf{79.12$\pm$3.05} & {4.90$\pm$3.52}
& \textbf{96.88$\pm$1.21} & 2.47$\pm$1.35 \\
CAL+SD*
& \textbf{98.71$\pm$1.25} & \underline{\textbf{1.50$\pm$2.38}}
& \underline{\textbf{97.74$\pm$1.13}} & \underline{\textbf{2.78$\pm$0.77}}
& \textbf{79.12$\pm$3.05} & \underline{\textbf{3.79$\pm$1.84}}
& \textbf{96.88$\pm$1.21} & 2.36$\pm$1.60 \\
\midrule        
SMGNN~\citep{azzolin2025beyond}              
& 99.32$\pm$0.36 & 0.50$\pm$0.92
& 97.00$\pm$0.77 & 1.90$\pm$1.01
& 84.38$\pm$2.71 & 2.07$\pm$0.62
& 98.12$\pm$0.39 & 1.72$\pm$1.09 \\      

SMGNN+SD
& \textbf{99.47$\pm$0.24} & \textbf{0.30$\pm$0.64}
& \underline{\textbf{98.59$\pm$0.51}} & \underline{\textbf{0.80$\pm$0.51}}
& \underline{\textbf{88.05$\pm$1.64}} & \underline{\textbf{1.12$\pm$0.50}}
& \underline{\textbf{98.51$\pm$0.36}} & \underline{\textbf{1.28$\pm$0.93}} \\     
SMGNN+SD*
& \textbf{99.47$\pm$0.24} & \underline{\textbf{0.00$\pm$0.00}}
& \underline{\textbf{98.59$\pm$0.51}} & 4.33$\pm$5.82
& \underline{\textbf{88.05$\pm$1.64}} & \underline{\textbf{1.08$\pm$0.58}}
& \underline{\textbf{98.51$\pm$0.36}} & \textbf{1.59$\pm$0.62} \\
\midrule        
GSAT~\citep{miao2022interpretable}      
& {99.30$\pm$0.47} & 0.00$\pm$0.00 
& 98.38$\pm$0.31 & 0.90$\pm$0.28
& 90.66$\pm$0.89 & 1.80$\pm$0.84 
& 99.01$\pm$0.31 & 1.11$\pm$0.61 \\          

GSAT+SD
& \textbf{99.44$\pm$0.40} & 0.00$\pm$0.00
& \underline{\textbf{99.22$\pm$0.23}} & \textbf{0.62$\pm$0.40}
& \underline{\textbf{92.17$\pm$0.71}} & \underline{\textbf{1.13$\pm$0.49}}
& \textbf{99.11$\pm$0.24} & \textbf{1.11$\pm$0.40} \\
GSAT+SD*
& \textbf{99.44$\pm$0.40} & 0.00$\pm$0.00
& \underline{\textbf{99.22$\pm$0.23}} & \underline{\textbf{0.38$\pm$0.59}}
& \underline{\textbf{92.17$\pm$0.71}} & \underline{\textbf{0.89$\pm$0.27}}
& \textbf{99.11$\pm$0.24} & 1.22$\pm$0.66 \\
\\[-3mm] \hdashline \\[-3mm]            
\diagbox[width=2.5cm]{}{}        
& $\uparrow$ ACC (\%) & $\downarrow$ SPA (\%)      
& $\uparrow$ ACC (\%) & $\downarrow$ SPA (\%)         
& $\uparrow$ ACC (\%) & $\downarrow$ SPA (\%)         
& $\uparrow$ ACC (\%) & $\downarrow$ SPA (\%) \\ 
\midrule 
GAT~\citep{velivckovic2018graph}        
& 97.10$\pm$8.70 & 69.98$\pm$6.02  
& 96.54$\pm$1.46 & {26.31$\pm$2.25}   
& 91.60$\pm$0.66 & 64.16$\pm$4.14   
& 92.97$\pm$0.78 & {88.66$\pm$8.80} \\      

GAT+SD                                  
& \textbf{98.60$\pm$4.20} & \underline{\textbf{64.35$\pm$6.49}}  
& \textbf{97.20$\pm$1.38} & \underline{\textbf{20.13$\pm$1.75}}           
& 91.39$\pm$1.23 & \underline{\textbf{53.01$\pm$5.18}}   
& \textbf{92.94$\pm$1.02} & \underline{\textbf{74.17$\pm$21.00}} \\
GAT+SD*
& \underline{\textbf{100.00$\pm$0.00}} & \underline{\textbf{64.35$\pm$6.49}}  
& \underline{\textbf{99.07$\pm$0.41}} & \underline{\textbf{20.13$\pm$1.75}} 
& \underline{\textbf{92.66$\pm$0.97}} & \underline{\textbf{53.01$\pm$5.18}}   
& \textbf{93.28$\pm$0.63} & \underline{\textbf{74.17$\pm$21.00}} \\
\midrule        
CAL~\citep{sui2022causal}               
& 91.90$\pm$11.68 & {66.99$\pm$6.50}   
& 94.22$\pm$2.11 & {26.65$\pm$1.96}  
& 84.31$\pm$5.66 & {60.80$\pm$10.75}   
& 91.22$\pm$1.37 & 73.09$\pm$14.58    \\

CAL+SD                                  
& \underline{\textbf{92.20$\pm$11.14}} & \underline{\textbf{66.09$\pm$6.72}}  
& 93.91$\pm$2.23 & \underline{\textbf{25.30$\pm$1.78}}   
& 84.28$\pm$5.81 & \underline{\textbf{60.06$\pm$10.86}}   
& 91.15$\pm$1.34 & \underline{\textbf{67.06$\pm$17.08}}  \\
CAL+SD*
& \underline{\textbf{99.80$\pm$0.40}} & \underline{\textbf{66.09$\pm$6.72}} 
& \underline{\textbf{97.58$\pm$1.89}} & \underline{\textbf{25.30$\pm$1.78}}
& \underline{\textbf{90.04$\pm$1.12}} & \underline{\textbf{60.06$\pm$10.86}} 
& \underline{\textbf{92.84$\pm$0.88}} & \underline{\textbf{67.06$\pm$17.08}} \\
\midrule        
SMGNN~\citep{azzolin2025beyond}              
& 95.50$\pm$12.52 & {61.06$\pm$4.47}  
& 97.30$\pm$1.31 & {19.60$\pm$2.13}   
& 91.12$\pm$0.62 & {46.73$\pm$6.39}   
& 89.53$\pm$0.99 & 20.50$\pm$4.78    \\      

SMGNN+SD                                
& 95.50$\pm$13.50 & \underline{\textbf{54.57$\pm$4.62}}  
& \textbf{98.44$\pm$1.06} & \underline{\textbf{11.44$\pm$1.40}}  
& \textbf{91.34$\pm$1.14} & \underline{\textbf{38.46$\pm$5.41}}   
& 89.19$\pm$0.79 & \underline{\textbf{16.57$\pm$3.61}}  \\  
SMGNN+SD*
& \underline{\textbf{100.00$\pm$0.00}} & \underline{\textbf{54.57$\pm$4.62}}
& \underline{\textbf{99.03$\pm$0.57}} & \underline{\textbf{11.44$\pm$1.40}}  
& \underline{\textbf{92.07$\pm$0.79}} & \underline{\textbf{38.46$\pm$5.41}}  
& \underline{\textbf{91.18$\pm$1.24}} & \underline{\textbf{16.57$\pm$3.61}}  \\  
\midrule        
GSAT~\citep{miao2022interpretable}      
& 100.00$\pm$0.00 & 74.44$\pm$2.63  
& 98.55$\pm$0.80 & 63.88$\pm$3.92   
& 91.48$\pm$0.87 & 57.10$\pm$10.18  
& 92.43$\pm$1.00 & 44.39$\pm$6.10    \\        

GSAT+SD                                 
& 100.00$\pm$0.00 & \underline{\textbf{70.24$\pm$2.92}}  
& \textbf{98.89$\pm$0.80} & \underline{\textbf{62.39$\pm$2.42}}  
& \textbf{91.72$\pm$0.92} & \underline{\textbf{54.44$\pm$8.14}}  
& 91.99$\pm$0.93 & \underline{\textbf{39.16$\pm$6.20}}   \\ 
GSAT+SD*
& 100.00$\pm$0.00 & \underline{\textbf{70.24$\pm$2.92}}  
& \underline{\textbf{99.55$\pm$0.27}} & \underline{\textbf{62.39$\pm$2.42}}  
& \underline{\textbf{92.91$\pm$0.47}} & \underline{\textbf{54.44$\pm$8.14}}  
& \underline{\textbf{93.38$\pm$0.34}} & \underline{\textbf{39.16$\pm$6.20}} \\
\bottomrule
\end{tabular}
}
\end{center}
\vspace{-0.2cm}
\end{table*}

\begin{figure}[t]
    \vspace{-0.3cm}
    \centering    
    \includegraphics[width=\linewidth]{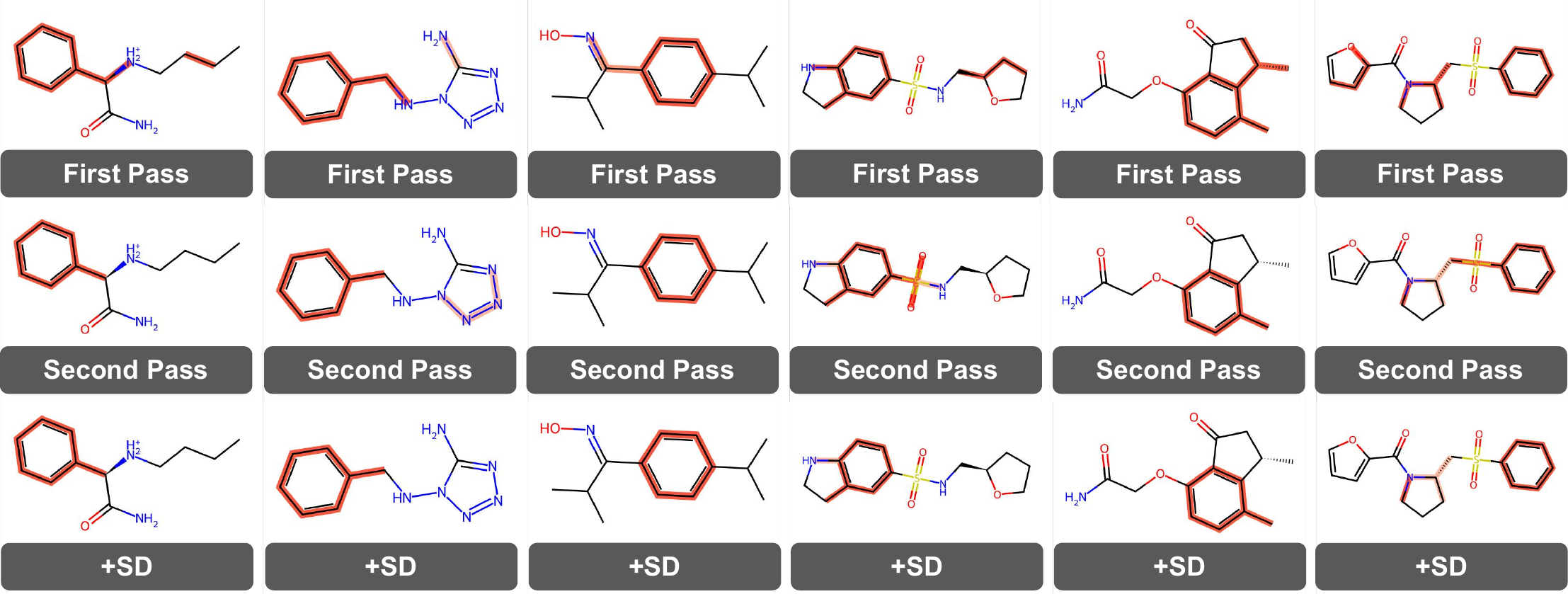}    
    \caption{Cases of SD on the BENZENE dataset. Top: first-pass explanation. Middle: second-pass explanation. Bottom: after SD. Unstable edges are suppressed while stable structures are preserved.}
    \label{fig:sd-case}
    \vspace{-0.2cm}
\end{figure}

We first evaluate the overall performance of SD on the GIN~\cite{xu2019how} backbone across four SI-GNNs and four datasets. Here, SD denotes the training-free explanation calibration step, while SD* denotes SD with the lightweight classifier adaptation described in \cref{subsec:eta-select}, used only to mitigate the induced distribution shift. As shown in \cref{tab:main_results}, SD consistently improves explanation quality. In particular, AUC is improved in all 16 cases, with gains of up to 3.9\%, while SPA is significantly reduced, indicating more concise and less redundant explanations. FID shows mixed behavior, which we believe is related to distribution shift and explanation redundancy, as further discussed in \cref{app:metrics}. In terms of predictive performance, applying SD to the original models leads to only minor accuracy changes, with most cases remaining stable and several even showing improvements. When further applying lightweight classifier adaptation (SD*), prediction accuracy improves consistently across all 16 cases, with gains of up to 8\%. We provide case studies in \cref{fig:sd-case} for intuitive understanding of how SD works, and report corresponding results on other GNN backbones in \cref{app:results}.



In \cref{subsec:eta-select}, we proposed selecting $\eta$ using validation accuracy after lightweight classifier adaptation. Here, we empirically validate this strategy. As shown in \cref{fig:eta-smgnn}, accuracy improves within a moderate range of $\eta$, suggesting that SD removes context-driven noise while preserving task-relevant information. When $\eta$ becomes too large, accuracy drops, indicating over-denoising. These results show that adapted validation accuracy is a practical proxy for choosing $\eta$ and can effectively capture the trade-off between explanation calibration and information loss.

\subsection{Complementarity Between SD and EE}\label{exp:sd-ee}

\begin{table*}[t]
\caption{Experimental results of combining SD with EE on SI-GNNs with GIN backbone.}
\vspace{-0.2cm}
\label{tab:with_EE}    
\begin{center}    
\setlength{\tabcolsep}{2pt}
\resizebox{\linewidth}{!}{    
\begin{tabular}{ccccccccc}        
\toprule        
\multirow{2}{*}{\vspace{-2mm} Method}         
& \multicolumn{2}{c}{BA-2MOTIFS}         
& \multicolumn{2}{c}{3MR}         
& \multicolumn{2}{c}{BENZENE}         
& \multicolumn{2}{c}{MUTAGENICITY} \\        
\cmidrule(lr){2-3}\cmidrule(lr){4-5}\cmidrule(lr){6-7}\cmidrule(lr){8-9}        
& $\uparrow$ AUC (\%) & $\uparrow$ ACC (\%)         
& $\uparrow$ AUC (\%) & $\uparrow$ ACC (\%)         
& $\uparrow$ AUC (\%) & $\uparrow$ ACC (\%)         
& $\uparrow$ AUC (\%) & $\uparrow$ ACC (\%) \\        
\midrule        

GAT+EE
& 99.56$\pm$0.16 & {100.00$\pm$0.00}
& 98.37$\pm$0.13 & 97.43$\pm$0.37
& 89.59$\pm$1.37 & {92.47$\pm$0.19}
& 97.07$\pm$0.98 & 93.81$\pm$0.29 \\

GAT+SD+EE
& \textbf{99.67$\pm$0.12} & 100.00$\pm$0.00 & \textbf{99.35$\pm$0.08} & \textbf{97.78$\pm$0.47} & \textbf{93.31$\pm$0.76} & 92.40$\pm$0.21 & \textbf{97.89$\pm$0.64} & \textbf{93.96$\pm$0.30} \\ 

\midrule        
CAL+EE
& {99.53$\pm$0.14} & 98.05$\pm$2.11
& 98.16$\pm$0.13 & {95.60$\pm$0.93}
& 84.83$\pm$1.23 & {87.59$\pm$1.82}
& 98.76$\pm$0.43 & 91.82$\pm$0.68 \\

CAL+SD+EE
& \textbf{99.55$\pm$0.12} & \textbf{98.19$\pm$2.11} & \textbf{99.10$\pm$0.09} & 95.40$\pm$0.97 & \textbf{85.90$\pm$1.22} & 87.52$\pm$1.88 & \textbf{98.88$\pm$0.35} & \textbf{91.87$\pm$0.67} \\ 

\midrule        
SMGNN+EE
& 99.59$\pm$0.16 & {100.00$\pm$0.00}
& 98.25$\pm$0.21 & 97.85$\pm$0.38
& 91.38$\pm$0.96 & 92.16$\pm$0.24
& 98.96$\pm$0.16 & {90.36$\pm$0.70} \\

SMGNN+SD+EE
& \textbf{99.66$\pm$0.10} & 100.00$\pm$0.00 & \textbf{99.25$\pm$0.05} & \textbf{99.40$\pm$0.20} & \textbf{93.99$\pm$0.64} & \textbf{92.31$\pm$0.40} & \textbf{99.09$\pm$0.11} & 90.05$\pm$0.67 \\

\midrule        
GSAT+EE
& 99.38$\pm$0.19 & 100.00$\pm$0.00
& 99.16$\pm$0.03 & 99.15$\pm$0.24
& 92.18$\pm$0.59 & 92.17$\pm$0.28
& 99.36$\pm$0.05 & {93.00$\pm$0.44} \\

GSAT+SD+EE
& \textbf{99.53$\pm$0.15} & 100.00$\pm$0.00 & \textbf{99.69$\pm$0.04} & \textbf{99.45$\pm$0.20} & \textbf{93.68$\pm$0.38} & \textbf{92.61$\pm$0.33} & \textbf{99.41$\pm$0.04} & 92.83$\pm$0.40 \\

\bottomrule
\end{tabular}
}
\end{center}
\vspace{-0.2cm}
\end{table*}

Besides SD, the other post-hoc explanation calibration strategy is Explanation Ensemble (EE)~\cite{tai2025redundancy}, which identifies unreliable edges based on cross-model inconsistency. Here, we investigate whether SD and EE can be combined. As shown in \cref{tab:with_EE}, applying SD before EE consistently yields better performance than using EE alone. This suggests that self-inconsistency captures a distinct source of explanation noise that is not fully addressed by cross-model inconsistency. In terms of efficiency, detecting self-inconsistency within a single model is significantly cheaper, introducing only about 4--6\% additional overhead, whereas EE increases computational cost by around 400\%. In practice, SD can serve as a lightweight calibration step, and when computational cost is not a concern, combining SD with EE can further improve the explanation quality.

\begin{figure}[t]    
    \centering    
    \includegraphics[width=\linewidth]{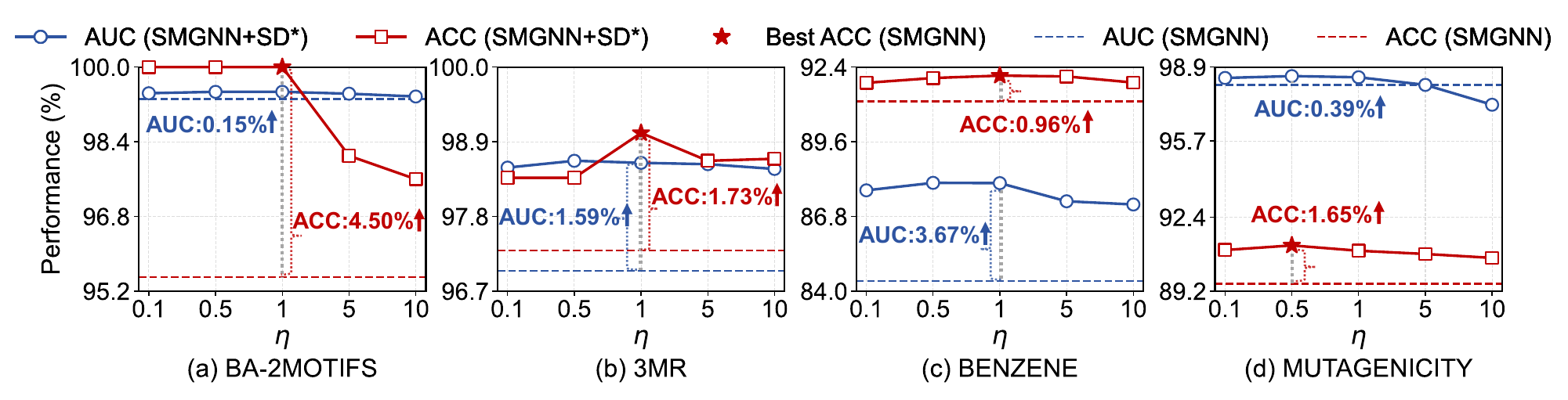}    
    \caption{Selecting $\eta$ via adapted predictive performance on SMGNN. $\eta$ is selected on validation data; test results are shown. Other SI-GNNs are reported in \cref{app:results}.}
    \label{fig:eta-smgnn}   
    \vspace{-0.2cm}
\end{figure}

\section{Conclusion}

In this work, we investigate an underexplored yet critical issue in SI-GNNs: self-inconsistency, where explanations may change under re-explanation. We revisit the message-passing mechanism of GNNs and identify perturbations in local contexts as the direct cause of this phenomenon. To explain why only certain edges are affected, we introduce a latent signal assignment perspective, which characterizes how models allocate importance to edges under conciseness regularization. Building on this understanding, we propose SD, a post-processing strategy that leverages self-inconsistency as an intrinsic signal to calibrate explanations. Experiments across multiple SI-GNN frameworks, GNN backbones, and benchmark datasets demonstrate that SD improves explanation quality. 


\bibliographystyle{unsrtnat}
\bibliography{main}

\appendix

\clearpage
\section{Extended Related Work}\label{app:related}

Unlike post-hoc methods that explain GNNs after training, SI-GNNs integrate explanation generation into the learning pipeline to ensure the model's rationale aligns with its prediction. Most of them follow a two-stage pipeline that identifies an explanatory graph subset via a learned mask and makes predictions based on it, with different methods varying in how the subset is extracted~\cite{tai2025redundancy}. Attention-based methods~\cite{velivckovic2018graph,knyazev2019understanding,lu2020gcan} utilize attention mechanisms~\cite{vaswani2017attention} to infer feature importance without additional conciseness regularization. Causal-learning methods~\cite{wu2022discovering,sui2022causal} leverage disentanglement or interventions~\cite{pearl2014interpretation} to uncover causal structures beyond spurious correlations. Size-constrained methods~\cite{luo2024towards,lin2020graph,azzolin2025beyond} explicitly regularize subset size or sparsity, promoting concise yet informative explanations. MI-based methods~\cite{yu2021graph,yu2022improving,miao2022interpretable,seo2023interpretable,chen2024interpretable} share a similar objective to size-constrained ones but achieve it by minimizing the MI between the explanatory subset and the full graph. Our work does not propose new SI-GNNs; instead, we focus on the trustworthiness of existing SI-GNNs. 

In addition to the above paradigms, there exist other variants of SI-GNNs, such as prototype-based methods. These methods make predictions by mapping input graphs to a set of learned semantic prototypes~\cite{dai2021towards,feng2022kergnns,zhang2022protgnn,xia2025graph}. However, they typically provide global, concept-level explanations rather than instance-level, edge-based interpretability, which is the focus of this work. Notably, certain instance-level variants of prototype-based methods (e.g., ProtGNN~\cite{zhang2022protgnn}) incorporate sparsity regularization and behave similarly to size-constrained SI-GNNs under a Lagrangian formulation. Therefore, the four SI-GNN paradigms considered in our main text can be viewed as representative of the dominant design space for instance-level explanations~\cite{tai2025redundancy}.

Beyond designing new SI-GNN architectures, a small but growing line of work has begun to investigate the trustworthiness of explanations under standard (i.e., no attack) usage conditions. Tai et al.~\cite{tai2025redundancy} identify a redundancy issue in SI-GNN explanations, showing that graph heterogeneity and validation-accuracy-based hyperparameter selection can make conciseness regularization too weak, leading to explanations contaminated by redundant and task-irrelevant edges. Azzolin et al.~\cite{azzolin2026gnn} show that overly strong sparsity regularization can lead to degenerate and unfaithful explanations. More recently, Tai et al. \cite{tai2026self} reveal the phenomenon of self-inconsistency, where explanations may change under re-explanation, and empirically connect it to explanation redundancy.

Among these works, two are particularly related to ours from different perspectives. First, Tai et al.~\cite{tai2025redundancy} propose Explanation Ensemble (EE), a post-hoc explanation calibration method. 
EE and SD share the same post-hoc calibration paradigm, but they identify unreliable edges through different signals: EE relies on cross-model inconsistency, whereas SD uses single-model self-inconsistency. Our experiments in the main text (\cref{exp:sd-ee}) show that SD is substantially more efficient than EE: SD requires only one additional forward pass (4\%--6\% additional overhead), whereas EE relies on multiple independently trained models (400\% additional overhead for 5 models). Meanwhile, the two strategies are complementary, suggesting that they capture different subsets of unreliable edges. 

Second, Tai et al.~\cite{tai2026self} investigate whether faithfulness-based objectives can be directly optimized during SI-GNN training. In this process, they identify a close connection between explanation self-consistency and faithfulness, and propose to improve explanation faithfulness by enforcing self-consistency during training. Their work mainly focuses on the effect of such optimization, while the mechanism behind why self-inconsistency arises remains underexplored. In contrast, our work studies why self-inconsistency arises and how it can be exploited as a post-hoc calibration signal. Moreover, their strategy requires fine-tuning SI-GNNs with an additional self-consistency objective and may depend on specific regularization settings, whereas SD is training-free, broadly applicable, and can be directly applied to pretrained SI-GNNs. We provide a detailed empirical comparison and discussion with training-time self-consistency regularization in \cref{app:with-sc-iclr}.


\section{Proofs of Theoretical Results}\label{app:proof}

\textbf{Proposition 1} (Budget-constrained signal allocation)\textbf{.}
\textit{Under Hypothesis~\ref{assump:three_state}, let $q\in(0,1)$ denote the target probability that the explanation budget is satisfied, and define:}
\begin{align}
    c_q = \mu_c + \frac{1}{2}\sqrt{\frac{q}{1-q}} .
\end{align}
\textit{A sufficient condition for the explanation budget to be satisfied with probability at least $q$ is:}
\begin{align}
K
\ge
|\mathcal{E}_p|\mu_p
+
|\mathcal{E}_n|\mu_n
+
|\mathcal{E}_c|c_q .
\end{align}
\textit{Equivalently, since $\mathcal{E}_p$, $\mathcal{E}_n$, and $\mathcal{E}_c$ form a partition of $\mathcal{E}$, this condition can be written as:}
\begin{align}
K
\ge
|\mathcal{E}|c_q
+
|\mathcal{E}_p|(\mu_p-c_q)
-
|\mathcal{E}_n|(c_q-\mu_n).
\end{align}

\begin{proof}
The deterministic contribution of positive-signal and negative-signal edges is:
\begin{align}
    |\mathcal{E}_p|\mu_p + |\mathcal{E}_n|\mu_n .
\end{align}
For each context-driven edge $e\in\mathcal{E}_c$, let $m_e\in[0,1]$ denote its context-dependent mask score, and define:
\begin{align}
    S_c = \sum_{e_{ij}\in\mathcal{E}_c} m_{ij} .
\end{align}
By definition,
\begin{align}
    \mathbb{E}[S_c] = |\mathcal{E}_c|\mu_c,
    \qquad
    0\le S_c\le |\mathcal{E}_c|.
\end{align}

We first bound the variance of $S_c$. 
Popoviciu's variance inequality states that, for any random variable $X$ supported on an interval $[a,b]$,
\begin{align}
    \mathrm{Var}(X) \le \frac{(b-a)^2}{4}.
\end{align}
Since $S_c\in[0,|\mathcal{E}_c|]$, we obtain:
\begin{align}
    \mathrm{Var}(S_c) \le \frac{|\mathcal{E}_c|^2}{4}.
\end{align}
Let
\begin{align}
    T = K - |\mathcal{E}_p|\mu_p - |\mathcal{E}_n|\mu_n .
\end{align}
The budget is satisfied when:
\begin{align}
    |\mathcal{E}_p|\mu_p + |\mathcal{E}_n|\mu_n + S_c \le K,
\end{align}
which is equivalent to $S_c\le T$.

We next lower-bound $\Pr(S_c\le T)$. 
Cantelli's inequality states that, for any random variable $X$ with finite variance and any $a>0$:
\begin{align}
    \Pr(X-\mathbb{E}[X] \le a)
    \ge
    \frac{a^2}{\mathrm{Var}(X)+a^2}.
\end{align}
When $T>\mathbb{E}[S_c]$, we set:
\begin{align}
    a = T-\mathbb{E}[S_c].
\end{align}
Then Cantelli's inequality gives:
\begin{align}
    \Pr(S_c\le T)
    &=
    \Pr(S_c-\mathbb{E}[S_c]\le T-\mathbb{E}[S_c]) \\
    &\ge
    \frac{(T-\mathbb{E}[S_c])^2}
    {\mathrm{Var}(S_c)+(T-\mathbb{E}[S_c])^2}.
\end{align}
Using $\mathbb{E}[S_c]=|\mathcal{E}_c|\mu_c$ and 
$\mathrm{Var}(S_c)\le |\mathcal{E}_c|^2/4$, we further have:
\begin{align}
    \Pr(S_c\le T)
    \ge
    \frac{(T-|\mathcal{E}_c|\mu_c)^2}
    {\frac{|\mathcal{E}_c|^2}{4}+(T-|\mathcal{E}_c|\mu_c)^2}.
\end{align}

To ensure $\Pr(S_c\le T)\ge q$, it suffices that:
\begin{align}
    \frac{(T-|\mathcal{E}_c|\mu_c)^2}
    {\frac{|\mathcal{E}_c|^2}{4}+(T-|\mathcal{E}_c|\mu_c)^2}
    \ge q.
\end{align}
Since $T>|\mathcal{E}_c|\mu_c$, this condition is equivalent to:
\begin{align}
    T-|\mathcal{E}_c|\mu_c
    \ge
    \frac{|\mathcal{E}_c|}{2}
    \sqrt{\frac{q}{1-q}}.
\end{align}
Substituting the definition of $T$ yields:
\begin{align}
K
\ge
|\mathcal{E}_p|\mu_p
+
|\mathcal{E}_n|\mu_n
+
|\mathcal{E}_c|
\left(
\mu_c+\frac{1}{2}\sqrt{\frac{q}{1-q}}
\right),
\end{align}
which gives Eq.~\eqref{eq:budget_basic} by the definition of $c_q$.

Finally, since $\mathcal{E}_p$, $\mathcal{E}_n$, and $\mathcal{E}_c$ form a partition of $\mathcal{E}$:
\begin{align}
    |\mathcal{E}_c|
    =
    |\mathcal{E}|-|\mathcal{E}_p|-|\mathcal{E}_n|.
\end{align}
Substituting this into Eq.~\eqref{eq:budget_basic} gives:
\begin{align}
K
&\ge
|\mathcal{E}_p|\mu_p
+
|\mathcal{E}_n|\mu_n
+
\big(|\mathcal{E}|-|\mathcal{E}_p|-|\mathcal{E}_n|\big)c_q \\
&=
|\mathcal{E}|c_q
+
|\mathcal{E}_p|(\mu_p-c_q)
-
|\mathcal{E}_n|(c_q-\mu_n),
\end{align}
which gives Eq.~\eqref{eq:budget_tradeoff}.
\end{proof}

\textbf{Proposition 2} (Ranking correction by SD)\textbf{.}
\textit{Consider an important edge $e^+$ and an unimportant edge $e^-$. 
Let $m^+$ and $m^-$ denote their first-pass importance scores, and let $\Delta s^+$ and $\Delta s^-$ denote their instability scores under re-explanation. 
Suppose the original explanation ranks them incorrectly, i.e., $m^+ < m^-$, and the unimportant edge is more unstable, i.e.,}
\begin{align}
    \Delta s^- > \Delta s^+ .
\end{align}
\textit{Ignoring the clipping operation in Eq.~\eqref{eq:sd}, SD corrects their relative order, i.e., $\tilde{m}^+ > \tilde{m}^-$ whenever}
\begin{align}
    \eta >
    \frac{m^- - m^+}
    {m^- \Delta s^- - m^+ \Delta s^+}.
\end{align}

\begin{proof}
Ignoring the clipping operation in Eq.~\eqref{eq:sd}, SD updates the two edge scores as:
\begin{align}
    \tilde m^+
    =
    (1-\eta\Delta s^+)m^+,
    \qquad
    \tilde m^-
    =
    (1-\eta\Delta s^-)m^- .
\end{align}
To correct the ranking, we need $\tilde m^+>\tilde m^-$. This is equivalent to:
\begin{align}
    (1-\eta\Delta s^+)m^+
    >
    (1-\eta\Delta s^-)m^- .
\end{align}
Expanding both sides gives:
\begin{align}
    m^+ - \eta m^+\Delta s^+
    >
    m^- - \eta m^-\Delta s^- .
\end{align}
Rearranging terms, we obtain:
\begin{align}
    \eta
    \left(
        m^- \Delta s^- - m^+ \Delta s^+
    \right)
    >
    m^- - m^+ .
\end{align}
Since $m^- > m^+$ and $\Delta s^- > \Delta s^+$, we have:
\begin{align}
    m^- \Delta s^- - m^+ \Delta s^+ > 0 .
\end{align}
Therefore, dividing both sides by this positive quantity yields:
\begin{align}
    \eta >
    \frac{m^- - m^+}
    {m^- \Delta s^- - m^+ \Delta s^+}.
\end{align}
Thus, under this condition, SD reverses the incorrect ordering and gives $\tilde m^+>\tilde m^-$.
\end{proof}

\textbf{Proposition 3} (Prediction stability under SD)\textbf{.}
\textit{Let $G_s^{(1)} = G \odot M^{(1)}$ and $\tilde{G}_s = G \odot \tilde{M}$. 
Assume the prediction function $f$ is locally differentiable with respect to the edge-weight vector. 
If we require the prediction shift induced by SD to be bounded by $\varepsilon>0$, i.e.,}
\begin{align}
    |f(\tilde{G}_s)-f(G_s^{(1)})| \le \varepsilon,
\end{align}
\textit{then a sufficient condition is:}
\begin{align}
    \eta
    \le
    \frac{\varepsilon}
    {
    \|\nabla f(M^{(1)})\|_\infty
    \sum_{e_{ij}\in\mathcal{E}} m_{ij}^{(1)}\Delta s_{ij}
    }.
\end{align}

\begin{proof}
For clarity, we view the prediction function as a function of the edge-weight mask while keeping the input graph $G$ fixed. 
That is, we write:
\begin{align}
    F(M) := f(G \odot M).
\end{align}
Then
\begin{align}
    f(G_s^{(1)}) = F(M^{(1)}),
    \qquad
    f(\tilde{G}_s) = F(\tilde{M}).
\end{align}

Since $F$ is locally differentiable around $M^{(1)}$, a first-order approximation gives:
\begin{align}
    F(\tilde{M}) - F(M^{(1)})
    \approx
    \nabla F(M^{(1)})^\top(\tilde{M}-M^{(1)}).
\end{align}
Taking absolute values and applying H\"older's inequality yields:
\begin{align}
    |F(\tilde{M}) - F(M^{(1)})|
    &\le
    \left|
    \nabla F(M^{(1)})^\top(\tilde{M}-M^{(1)})
    \right| \\
    &\le
    \|\nabla F(M^{(1)})\|_\infty
    \cdot
    \|\tilde{M}-M^{(1)}\|_1 .
\end{align}
Here, $\|\nabla F(M^{(1)})\|_\infty$ denotes the maximum absolute partial derivative w.r.t. any edge weight:
\begin{align}
    \|\nabla F(M^{(1)})\|_\infty
    =
    \max_{e_{ij}\in\mathcal{E}}
    \left|
    \frac{\partial F}{\partial m_{ij}^{(1)}}(M^{(1)})
    \right|.
\end{align}
For notational simplicity, we write $\nabla f(M^{(1)})$ for $\nabla F(M^{(1)})$, which gives:
\begin{align}
    |f(\tilde{G}_s)-f(G_s^{(1)})|
    \le
    \|\nabla f(M^{(1)})\|_\infty
    \cdot
    \|M^{(1)}-\tilde{M}\|_1 .
\end{align}

Next, ignoring the clipping operation in Eq.~\eqref{eq:sd}, SD updates each edge score as:
\begin{align}
    \tilde{m}_{ij}
    =
    (1-\eta \Delta s_{ij})m_{ij}^{(1)} .
\end{align}
Therefore,
\begin{align}
    |m_{ij}^{(1)}-\tilde{m}_{ij}|
    &=
    \left|
    m_{ij}^{(1)}
    -
    (1-\eta \Delta s_{ij})m_{ij}^{(1)}
    \right| \\
    &=
    \eta m_{ij}^{(1)}\Delta s_{ij} .
\end{align}
Summing over all edges gives:
\begin{align}
    \|M^{(1)}-\tilde{M}\|_1
    =
    \sum_{e_{ij}\in\mathcal{E}}
    |m_{ij}^{(1)}-\tilde{m}_{ij}|
    =
    \eta
    \sum_{e_{ij}\in\mathcal{E}}
    m_{ij}^{(1)}\Delta s_{ij} .
\end{align}
Substituting this into the previous bound yields:
\begin{align}
    |f(\tilde{G}_s)-f(G_s^{(1)})|
    \le
    \eta
    \|\nabla f(M^{(1)})\|_\infty
    \sum_{e\in\mathcal{E}}
    m_{ij}^{(1)}\Delta s_{ij} .
\end{align}
This completes the proof.
\end{proof}

\textbf{Remark.} The local differentiability assumption is mild in the context of SI-GNNs. In practice, the prediction function $f(G \odot M)$ is implemented by neural networks composed of affine transformations and standard activations (e.g., ReLU), which are piecewise linear and thus differentiable almost everywhere with respect to the edge-weight mask $M$.  Moreover, the masking operation $G \odot M$ is linear in $M$, and therefore does not introduce additional non-smoothness. Although non-differentiabilities may arise at a measure-zero set (e.g., activation boundaries or clipping points), the result can be interpreted in a local or subgradient sense.

\begin{proposition}[Prediction stability under stochastic masking]\label{prop:sd_stochastic_stability}
Consider stochastic binary masks sampled from the corresponding soft masks:
\begin{align}
    X^{(1)} \sim \prod_{e_{ij}\in\mathcal{E}}\mathrm{Bern}(m_{ij}^{(1)}),
    \qquad
    \tilde X \sim \prod_{e_{ij}\in\mathcal{E}}\mathrm{Bern}(\tilde m_{ij}).
\end{align}
For binary classification, we have $0\le f(G)\le 1$ for any graph $G$. Then,
\begin{align}
\left|
\mathbb{E}_{\tilde X}\!\left[f(G\odot \tilde X)\right]
-
\mathbb{E}_{X^{(1)}}\!\left[f(G\odot X^{(1)})\right]
\right|
\le
\eta
\sum_{e_{ij}\in\mathcal{E}}
m_{ij}^{(1)}\Delta s_{ij} .
\end{align}
Consequently, to make the expected prediction shift no larger than $\varepsilon$, it suffices to choose:
\begin{align}
    \eta
    \le
    \frac{\varepsilon}
    {\sum_{e_{ij}\in\mathcal{E}}m_{ij}^{(1)}\Delta s_{ij}}.
\end{align}
\end{proposition}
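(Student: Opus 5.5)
The plan is a coupling argument. Because $f$ takes values in $[0,1]$, the difference of the two expectations is at most the probability that the two random masks differ, and a union bound reduces that probability to a sum of per-edge discrepancies $|m_{ij}^{(1)}-\tilde m_{ij}|$. These are then computed exactly as in the proof of Proposition~\ref{prop:sd_stability}.

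\textbf{Step 1 (coupling).} Draw independent uniforms $U_{ij}\sim\mathrm{Unif}[0,1]$, one per edge $e_{ij}\in\mathcal{E}$, and set
\[
X^{(1)}_{ij}=\mathds{1}\{U_{ij}<m_{ij}^{(1)}\},\qquad \tilde X_{ij}=\mathds{1}\{U_{ij}<\tilde m_{ij}\}.
\]
Each marginal then has the required product-Bernoulli law, so both expectations in the statement may be computed under this joint law. Moreover, for each edge,
\[
\Pr\bigl(X^{(1)}_{ij}\neq\tilde X_{ij}\bigr)=\bigl|m_{ij}^{(1)}-\tilde m_{ij}\bigr|.
\]

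\textbf{Step 2 (bounding the expectation gap).} Since $f(G\odot\tilde X)-f(G\odot X^{(1)})$ vanishes when $\tilde X=X^{(1)}$ and has absolute value at most $1$ otherwise (because $0\le f\le 1$),
\[
\Bigl|\mathbb{E}\bigl[f(G\odot\tilde X)\bigr]-\mathbb{E}\bigl[f(G\odot X^{(1)})\bigr]\Bigr|
\le \mathbb{E}\bigl|f(G\odot\tilde X)-f(G\odot X^{(1)})\bigr|
\le \Pr\bigl(\tilde X\neq X^{(1)}\bigr).
\]
A union bound over edges then gives
\[
\Pr\bigl(\tilde X\neq X^{(1)}\bigr)\le\sum_{e_{ij}\in\mathcal{E}}\bigl|m_{ij}^{(1)}-\tilde m_{ij}\bigr|.
\]
An equivalent route is a hybrid argument: swap the edge distributions one at a time and telescope, noting that each swap changes the expectation by at most $|m_{ij}^{(1)}-\tilde m_{ij}|$, since $f$ is bounded in $[0,1]$.

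\textbf{Step 3 (per-edge discrepancy).} The point needing care is the clipping in Eq.~\eqref{eq:sd}, which Proposition~\ref{prop:sd_stability} ignored. Here it is harmless, so no approximation is needed. There are two cases.
\begin{itemize}
\item If $1-\eta\Delta s_{ij}\ge 0$, then $\tilde m_{ij}=(1-\eta\Delta s_{ij})m_{ij}^{(1)}$, and the difference is exactly $\eta m_{ij}^{(1)}\Delta s_{ij}$.
\item If $1-\eta\Delta s_{ij}<0$, then $\tilde m_{ij}=0$, and the difference is $m_{ij}^{(1)}<\eta\Delta s_{ij}m_{ij}^{(1)}$.
\end{itemize}
In both cases $|m_{ij}^{(1)}-\tilde m_{ij}|\le\eta m_{ij}^{(1)}\Delta s_{ij}$, and $\tilde m_{ij}\in[0,1]$, so the Bernoulli law for $\tilde X_{ij}$ is well defined. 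Summing over edges yields the claimed bound. The sufficient condition on $\eta$ follows by requiring the right-hand side to be at most $\varepsilon$ and dividing by $\sum_{e_{ij}\in\mathcal{E}}m_{ij}^{(1)}\Delta s_{ij}$ when this sum is positive; if the sum is zero, the shift is zero and any $\eta$ works.

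I expect no real obstacle. The only subtle points are justifying the coupling, that is, that the expectations depend only on the marginal laws, and handling the clipping case correctly.
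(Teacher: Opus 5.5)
Your proposal is correct and follows essentially the paper's route: the paper bounds the gap by the total variation distance between the two product-Bernoulli mask laws, then uses subadditivity of total variation over product measures with per-edge value $|m_{ij}^{(1)}-\tilde m_{ij}|$, and then applies the same two-case clipping bound; your shared-uniform coupling with a union bound is just the standard proof of those cited total-variation facts, so your version is self-contained, and it also covers the zero-denominator case for $\eta$. One trivial fix: in the clipped case write $m_{ij}^{(1)}\le\eta\Delta s_{ij}m_{ij}^{(1)}$ rather than a strict inequality, since $m_{ij}^{(1)}$ may be $0$.
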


\begin{proof}
Let $\mathbb{P}_{X^{(1)}}$ and $\mathbb{P}_{\tilde X}$ denote the probability distributions of the stochastic masks $X^{(1)}$ and $\tilde X$, respectively. 
Recall that the total variation distance between two distributions $\mathbb{P}$ and $\mathbb{Q}$ is defined as:
\begin{align}
    \mathrm{TV}(\mathbb{P},\mathbb{Q})
    :=
    \sup_{\mathcal{A}}
    |\mathbb{P}(\mathcal{A})-\mathbb{Q}(\mathcal{A})|,
\end{align}
where the supremum is taken over all measurable events $\mathcal{A}$. 

Intuitively, total variation measures the largest possible difference between the probabilities assigned by two distributions to the same event. A standard property of total variation distance (see, e.g.,~\cite{billingsley2013convergence}) states that, for any measurable function $g$ satisfying $0\le g\le 1$,
\begin{align}
    \left|
    \mathbb{E}_{X\sim\mathbb{P}}[g(X)]
    -
    \mathbb{E}_{Y\sim\mathbb{Q}}[g(Y)]
    \right|
    \le
    \mathrm{TV}(\mathbb{P},\mathbb{Q}).
    \label{eq:tv_function_bound}
\end{align}
In our setting, define:
\begin{align}
    g(X)=f(G\odot X).
\end{align}
Since we consider binary classification, the model output is a probability, and thus $0\le f(G)\le 1$ for any graph $G$. 
Therefore, $0\le g(X)\le 1$. 
Applying Eq.~\eqref{eq:tv_function_bound} with 
$\mathbb{P}=\mathbb{P}_{X^{(1)}}$ and 
$\mathbb{Q}=\mathbb{P}_{\tilde X}$ gives:
\begin{align}
\left|
\mathbb{E}_{\tilde X}\!\left[f(G\odot \tilde X)\right]
-
\mathbb{E}_{X^{(1)}}\!\left[f(G\odot X^{(1)})\right]
\right|
\le
\mathrm{TV}(\mathbb{P}_{X^{(1)}},\mathbb{P}_{\tilde X}).
\label{eq:tv_prediction_bound}
\end{align}

Next, we bound the total variation distance between the two mask distributions. 
Both $X^{(1)}$ and $\tilde X$ are binary masks sampled independently over edges. 
For each edge $e_{ij}$, the corresponding binary random variable follows a Bernoulli distribution:
\begin{align}
    X_{ij}^{(1)} \sim \mathrm{Bern}(m_{ij}^{(1)}),
    \qquad
    \tilde X_{ij} \sim \mathrm{Bern}(\tilde m_{ij}).
\end{align}
Thus, the full mask distributions are product Bernoulli distributions:
\begin{align}
    \mathbb{P}_{X^{(1)}}
    =
    \bigotimes_{e_{ij}\in\mathcal{E}}
    \mathrm{Bern}(m_{ij}^{(1)}),
    \qquad
    \mathbb{P}_{\tilde X}
    =
    \bigotimes_{e_{ij}\in\mathcal{E}}
    \mathrm{Bern}(\tilde m_{ij}).
\end{align}

We first consider a single edge $e_{ij}$. 
Let
\begin{align}
    \mathbb{P}_{ij}=\mathrm{Bern}(m_{ij}^{(1)}),
    \qquad
    \mathbb{Q}_{ij}=\mathrm{Bern}(\tilde m_{ij}).
\end{align}
A Bernoulli random variable only takes values in $\{0,1\}$. 
Therefore, the only nontrivial events are $\{1\}$ and $\{0\}$. 
For the event $\{1\}$, we have:
\begin{align}
    |\mathbb{P}_{ij}(\{1\})-\mathbb{Q}_{ij}(\{1\})|
    =
    |m_{ij}^{(1)}-\tilde m_{ij}|.
\end{align}
For the event $\{0\}$, we similarly have:
\begin{align}
    |\mathbb{P}_{ij}(\{0\})-\mathbb{Q}_{ij}(\{0\})|
    &=
    |(1-m_{ij}^{(1)})-(1-\tilde m_{ij})| \\
    &=
    |m_{ij}^{(1)}-\tilde m_{ij}|.
\end{align}
Hence,
\begin{align}
    \mathrm{TV}(\mathbb{P}_{ij},\mathbb{Q}_{ij})
    =
    |m_{ij}^{(1)}-\tilde m_{ij}|.
    \label{eq:tv_single_bernoulli}
\end{align}

Now we move from a single edge to the full mask. 
A standard tensorization bound for total variation distance (e.g., see \cite{lindvall2002lectures}) states that, for product distributions,
\begin{align}
    \mathrm{TV}
    \left(
    \bigotimes_{e_{ij}\in\mathcal{E}}\mathbb{P}_{ij},
    \bigotimes_{e_{ij}\in\mathcal{E}}\mathbb{Q}_{ij}
    \right)
    \le
    \sum_{e_{ij}\in\mathcal{E}}
    \mathrm{TV}(\mathbb{P}_{ij},\mathbb{Q}_{ij}).
    \label{eq:tv_tensorization}
\end{align}
Intuitively, the difference between two independently sampled masks is no larger than the sum of their edge-wise distribution differences. 
Combining Eq.~\eqref{eq:tv_single_bernoulli} and Eq.~\eqref{eq:tv_tensorization}, we obtain:
\begin{align}
    \mathrm{TV}(\mathbb{P}_{X^{(1)}},\mathbb{P}_{\tilde X})
    \le
    \sum_{e_{ij}\in\mathcal{E}}
    |m_{ij}^{(1)}-\tilde m_{ij}|.
    \label{eq:tv_sum_bound}
\end{align}

It remains to relate the edge-wise difference $|m_{ij}^{(1)}-\tilde m_{ij}|$ to the SD update. 
By definition, SD calibrates the edge score as:
\begin{align}
    \tilde m_{ij}
    =
    \max\left(0,(1-\eta\Delta s_{ij})m_{ij}^{(1)}\right).
\end{align}
Since the multiplicative factor is at most $1$, we have:
\begin{align}
    \tilde m_{ij} \le m_{ij}^{(1)}.
\end{align}
Therefore,
\begin{align}
    |m_{ij}^{(1)}-\tilde m_{ij}|
    =
    m_{ij}^{(1)}-\tilde m_{ij}.
\end{align}
We consider two cases.

If $(1-\eta\Delta s_e)m_e^{(1)}\ge 0$, then clipping is inactive and
\begin{align}
    \tilde m_{ij}
    =
    (1-\eta\Delta s_{ij})m_{ij}^{(1)}.
\end{align}
Thus,
\begin{align}
    m_{ij}^{(1)}-\tilde m_{ij}
    =
    \eta\;m_{ij}^{(1)}\Delta s_{ij}.
\end{align}

If $(1-\eta\Delta s_{ij})m_{ij}^{(1)}<0$, then clipping is active and $\tilde m_{ij}=0$. 
In this case,
\begin{align}
    m_{ij}^{(1)}-\tilde m_{ij}
    =
    m_{ij}^{(1)}.
\end{align}
The condition $(1-\eta\Delta s_{ij})m_{ij}^{(1)}<0$ implies $\eta\Delta s_{ij}>1$ whenever $m_{ij}^{(1)}>0$, and therefore
\begin{align}
    m_{ij}^{(1)}
    \le
    \eta m_{ij}^{(1)}\Delta s_{ij}.
\end{align}
If $m_{ij}^{(1)}=0$, the inequality holds trivially. 
Hence, in all cases,
\begin{align}
    |m_{ij}^{(1)}-\tilde m_{ij}|
    \le
    \eta m_{ij}^{(1)}\Delta s_{ij}.
    \label{eq:sd_edge_bound}
\end{align}

Substituting Eq.~\eqref{eq:sd_edge_bound} into Eq.~\eqref{eq:tv_sum_bound}, we obtain:
\begin{align}
    \mathrm{TV}(\mathbb{P}_{X^{(1)}},\mathbb{P}_{\tilde X})
    \le
    \eta
    \sum_{e_{ij}\in\mathcal{E}}
    m_{ij}^{(1)}\Delta s_{ij}.
    \label{eq:tv_sd_bound}
\end{align}
Finally, combining Eq.~\eqref{eq:tv_prediction_bound} and Eq.~\eqref{eq:tv_sd_bound} gives:
\begin{align}
\left|
\mathbb{E}_{\tilde X}\!\left[f(G\odot \tilde X)\right]
-
\mathbb{E}_{X^{(1)}}\!\left[f(G\odot X^{(1)})\right]
\right|
\le
\eta
\sum_{e_{ij}\in\mathcal{E}}
m_{ij}^{(1)}\Delta s_{ij}.
\end{align}

Therefore, to ensure that the expected prediction shift is at most $\varepsilon$, it suffices to require:
\begin{align}
    \eta
    \sum_{e_{ij}\in\mathcal{E}}
    m_{ij}^{(1)}\Delta s_{ij}
    \le
    \varepsilon,
\end{align}
which is equivalent to
\begin{align}
    \eta
    \le
    \frac{\varepsilon}
    {\sum_{e_{ij}\in\mathcal{E}}m_{ij}^{(1)}\Delta s_{ij}}.
\end{align}
This completes the proof.
\end{proof}

\textbf{Remark.} In most SI-GNNs, discrete explanatory subgraphs are obtained via stochastic sampling (e.g., Gumbel-Sigmoid) during training, while deterministic soft masks are used at inference time for stability. This discrepancy leads to a mismatch between the training and inference distributions of explanations. To account for this, we additionally analyze prediction stability under stochastic masking by modeling explanations as independent Bernoulli samples and deriving an expectation-based bound. \cref{prop:sd_stochastic_stability} complements the deterministic analysis in Proposition~\ref{prop:sd_stability}.

\section{Implementation Details}\label{app:setting}

\subsection{SI-GNN Pipelines}

A typical SI-GNN consists of three components: an explainer, a GNN encoder, and a classifier. Given an input graph $G = (\mathcal{V}, \mathcal{E})$, the encoder first computes node representations:
\begin{align}
    \mathbf{H} = h_Z(G),
\end{align}
where $\mathbf{H} = \{\mathbf{h}_i\}_{i\in\mathcal{V}}$ denotes node embeddings.

Based on these embeddings, the explainer predicts a soft edge mask $M \in [0,1]^{|\mathcal{E}|}$, where each entry $m_{ij}$ reflects the importance of edge $e_{ij}$. A common parameterization is:
\begin{align}
    m_{ij} = \sigma\!\left( \mathrm{MLP}([\mathbf{h}_i; \mathbf{h}_j]) \right),
\end{align}
where $[\cdot;\cdot]$ denotes concatenation and $\sigma(\cdot)$ is the sigmoid function.
During training, discrete explanatory subgraphs are typically obtained via stochastic sampling. Following prior work~\cite{miao2022interpretable,luo2024towards}, this is implemented using the Gumbel--Sigmoid trick~\cite{jang2017categorical} to enable differentiable sampling:
\begin{align}
    {m}_{ij} := \sigma\!\left( \frac{\log \epsilon - \log(1-\epsilon) + m_{ij}}{\tau} \right),
\end{align}
where $\epsilon \sim \mathrm{Uniform}(0,1)$ and $\tau$ is a temperature parameter. At inference time, the deterministic mask $M$ is used directly.
The explanatory graph is then constructed as $G_s = G \odot M$, and fed back into the encoder to obtain a graph-level representation:
\begin{align}
    \mathbf{z}_s = \mathrm{POOL}\!\left( h_Z(G_s) \right),
\end{align}
where $\mathrm{POOL}$ is a permutation-invariant readout function. Finally, a classifier produces the prediction:
\begin{align}
    \hat{Y} = h_{\hat{Y}}(\mathbf{z}_s).
\end{align}
\textbf{Remark on CAL.} CAL~\cite{sui2022causal} differs from the standard SI-GNN pipeline in how it uses the explanatory subset for prediction. CAL employs three classifiers corresponding to the three terms in \cref{eq:type_2}: the first takes $G_s$ as input, the second takes $\bar{G}_s$, and the third takes the intervened graph $G_s \cup \bar{G}_s'$ as input. During training, CAL applies intervention to $\bar{G}_s$, whereas no intervention is applied at inference time. Therefore, the third classifier effectively receives the full graph $G$ at inference, and its output is used as CAL's final prediction. Despite this difference, CAL still computes edge importance scores from node representations produced by message passing. Therefore, re-explanation can perturb these representations and change the resulting edge scores, so the context-perturbation analysis in \cref{subsec:local-context} remains applicable. However, this design choice affects the reliability of fidelity evaluation, which we will discuss below.

\subsection{Experimental Settings}

\begin{table}[t]
    \centering
    \caption{Datasets and their URLs.}
    \label{tab:data_url}
    \resizebox{\textwidth}{!}{
    \begin{tabular}{|c|c|}
    \hline
    DATASET & URL \\
    \hline
    BA-2MOTIFS & \url{https://github.com/Graph-COM/GSAT?tab=readme-ov-file}\\
    \hline
    3MR & \url{https://drive.google.com/drive/folders/1b0MowzK4LSlkih3ie1bnj6IkjRqPAKH5}\\
    \hline
    BENZENE & \url{https://chrsmrrs.github.io/datasets/docs/datasets}\\
    \hline
    MUTAGENICITY & \url{https://github.com/flyingdoog/PGExplainer/tree/master/dataset}\\
    \hline
    \end{tabular}
    }
\end{table}

Our proposed SD is a training-free post-processing strategy. Throughout all experiments, we directly use the pretrained checkpoints released by Tai et al.~\cite{tai2025redundancy}, where 10 models are trained under different random seeds for each SI-GNN. The only stage involving additional optimization is the lightweight classifier adaptation used for selecting the denoising strength $\eta$ and mitigating the induced distribution shift (SD*). Following Tai et al.~\cite{tai2025redundancy}, after loading the pretrained checkpoints, we freeze the explainer and GNN encoder and fine-tune only the classifier for 10 epochs. 

In practice, we find that hyperparameter selection can also be simplified: moderate values of $\eta$ (e.g., $\eta=0.5$ or $\eta=1$) consistently yield good performance across different settings, providing reasonable default choices without extensive tuning.

For completeness, we summarize the SI-GNN training settings below, although these settings are inherited directly from prior work rather than introduced by SD itself. All SI-GNN baselines are implemented within a unified framework to ensure fair and reproducible comparison. Each model adopts a two-layer GIN encoder~\citep{xu2019powerful} (hidden dimensions: 64, 64; dropout rate: 0.3). The explainer is a three-layer MLP (hidden dimensions: 256, 64, 1; dropout rate: 0.5) that predicts edge importance scores, and the classifier is another three-layer MLP (hidden dimensions: 64, 64, 1) for final graph-level prediction. The only exception is GSAT on the BA-2MOTIFS dataset, for which we observe that setting $\beta = 0.05$ (instead of $\beta = 1$ as in Tai et al.~\cite{tai2025redundancy}) yields higher validation accuracy and consistently better results. Since EE can aggregate an arbitrary number of models (up to ten in our setup), we ensemble five models per SI-GNN type to balance accuracy and efficiency. During inference, soft edge scores are used as explanations.

\textbf{Computing resources.} All experiments were conducted using PyTorch and executed on one NVIDIA RTX 4090 GPU with Intel Core i7-13700KF CPU. The NVIDIA driver version is 545.23.06, and the CUDA version is 12.3. Compared to a standard SI-GNN, SD requires only 4--6\% more time. 

\textbf{Open Access to Data and Code.} All datasets are published and can be downloaded from the Internet (see \cref{tab:data_url}). 
Code will be publicly released in a future version.

\subsection{Metrics}\label{app:metrics}

We evaluate explanation quality from four complementary perspectives: accuracy, informativeness, conciseness, and faithfulness.

\textbf{Accuracy (AUC).} We adopt the ROC-AUC metric to measure how well the model distinguishes truly explanatory edges from irrelevant ones~\cite{wu2022discovering,miao2022interpretable,rao2022quantitative,luo2024towards}. AUC quantifies the agreement between the generated importance scores and ground-truth annotations. Following~\cite{faber2021comparing}, we note that the validity of AUC depends on the reliability of the provided ground-truth explanations; all datasets used in this work have been independently verified to align with the decision rationale of SI-GNNs~\cite{tai2025redundancy}.

\textbf{Informativeness (ACC).} We further report the classification accuracy when predictions are made solely on the explanatory subgraph $G_s$~\cite{wu2022discovering,amara2022graphframex,luo2024towards}. This metric reflects how informative the identified subset is—if the explanation retains task-relevant information, the downstream classification performance should be high.

\textbf{Conciseness (SPA).} To quantify how compact and human-readable an explanation is, we introduce Soft Sparsity (SPA), which measures the average activation level of edge importance scores:
\begin{align}
    \mathrm{SPA}(G, G_s) = \frac{1}{|\mathcal{E}|} \sum_{e_{ij} \in \mathcal{E}} m_{ij},
\end{align}
Due to the design principle of SD, our strategy inevitably leads to a decrease in the SPA metric. Despite that, SPA serves as a meaningful indicator of explanation quality—the sharper distinction between important and unimportant edges reflects improved human interpretability.

\textbf{Faithfulness (FID).} Faithfulness evaluates whether the explanatory subgraph $G_s$ truly supports the model’s original decision~\citep{yuan2022explainability}. We adopt Fidelity$^-$ (FID), defined as:
\begin{align}
\mathrm{FID}^{-}(G,G_s) = 1 - \mathds{1}\big(c(f(G_s)) = c(f(G))\big),
\end{align}
where $c(\cdot)$ denotes the predicted class. A lower FID implies that the model's prediction remains stable when restricted to the explanatory subset, indicating higher faithfulness. Another commonly used metric is Fidelity$^+$ (FID$^+$), which measures whether the prediction changes after removing the explanatory subgraph:
\begin{align}
\mathrm{FID}^{+}(G,G_s) = 1 - \mathds{1}\big(c(f(G \setminus G_s)) = c(f(G))\big).
\end{align}
Although fidelity metrics are widely used for evaluating explanation faithfulness, they are perturbation-based by nature and may introduce distribution shift during evaluation. As a result, a low fidelity score does not necessarily indicate poor explanation quality; it may also reflect limited model generalization under perturbed graph distributions~\cite{faber2021comparing}. This issue is particularly severe for FID$^+$, because it evaluates the model on the complement graph $G\setminus G_s$, which is never used to train the GNN encoder. In contrast, FID$^-$ is relatively less affected, since the encoder has at least been exposed to explanatory subgraphs $G_s$ during training. Nevertheless, both FID$^-$ and FID$^+$ can still be influenced by OOD effects~\citep{tai2025redundancy,azzolin2025reconsidering,tai2026self}. 

Besides, fidelity metrics can be biased toward redundant explanations. Since FID$^-$ evaluates whether the selected subgraph alone preserves the prediction, explanations containing many redundant edges can obtain low FID even if they are not concise or plausible. For example, GAT on MUTAGENICITY achieves a very low FID of $0.37\%$, but its sparsity is as high as $88.66\%$, indicating that the explanation still contains many edges. This is directly misaligned with the goal of SD, which aims to removes unimportant edges and improves explanation conciseness. 

Therefore, fidelity-based evaluation is inherently less suitable for assessing denoising-oriented calibration methods such as ours. Although we report and analyze fidelity results below for completeness, our main evaluation of explanation quality relies on AUC computed against reliable human-annotated explanations. This choice follows the recommendation of prior work that, when trustworthy ground-truth explanations are available, they provide a more direct basis for evaluating explanation plausibility~\citep{faber2021comparing}. The reliability of the ground-truth explanations used in our benchmark datasets has also been validated in prior SI-GNN studies~\citep{tai2025redundancy}.

\textbf{Remark.} We observe that FID can be particularly unstable for CAL across all GNN backbones when SD is applied without classifier adaptation (see \cref{tab:main_results,tab:main_results_graphsage,tab:with_EE_gatedgcn}). We attribute this to the special prediction mechanism of CAL. Unlike standard SI-GNNs that make predictions directly from the explanatory subgraph, CAL uses the full graph for final prediction at inference time. Therefore, reducing edge scores through SD does not necessarily remove decision-relevant evidence in the same way as in other SI-GNNs. Instead, it may merely introduce a stronger distribution shift, making the fidelity evaluation less stable and sometimes worse. After lightweight classifier adaptation, this issue is alleviated, which supports our analysis that the degradation is mainly caused by distribution shift rather than by a loss of explanation quality.


\section{Additional Results}\label{app:results}

\subsection{Selecting $\eta$ via adapted predictive performance}

\begin{figure}[t]    
    \centering    
    \includegraphics[width=\linewidth]{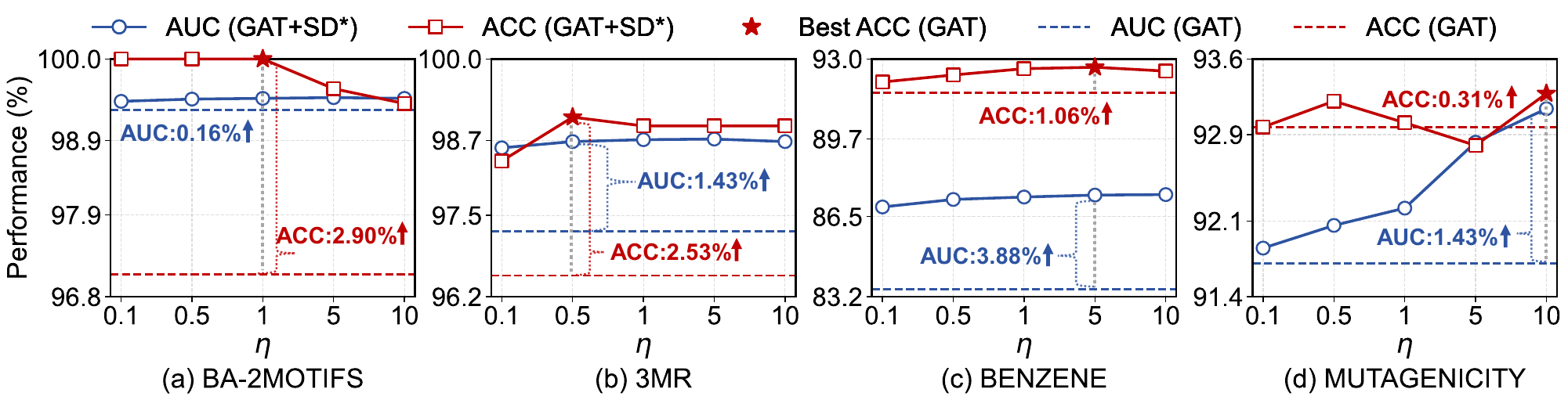}    
    \caption{Selecting $\eta$ via adapted predictive performance on GAT.}
    \label{fig:eta-gat}   
\end{figure}

\begin{figure}[t]    
    \centering    
    \includegraphics[width=\linewidth]{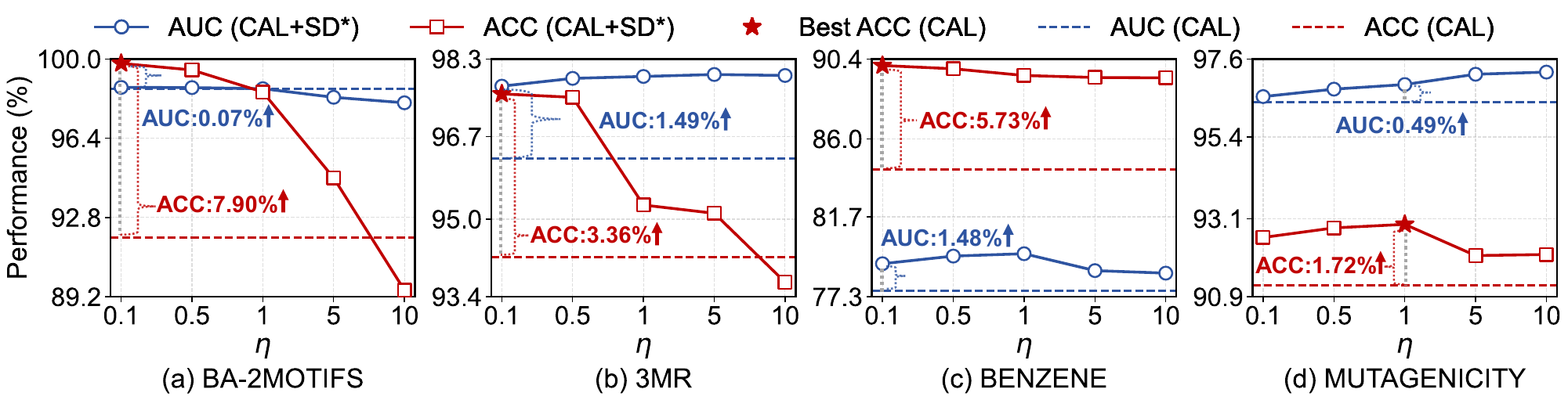}    
    \caption{Selecting $\eta$ via adapted predictive performance on CAL.}
    \label{fig:eta-cal}   
\end{figure}

\begin{figure}[t]    
    \centering    
    \includegraphics[width=\linewidth]{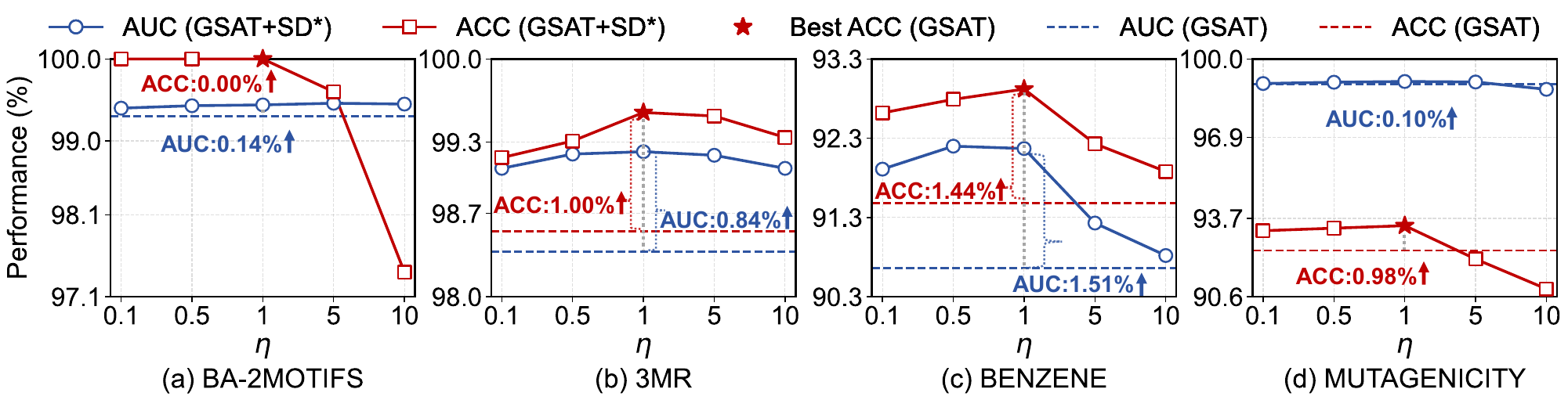}    
    \caption{Selecting $\eta$ via adapted predictive performance on GSAT.}
    \label{fig:eta-gsat}   
\end{figure}

In \cref{subsec:eta-select}, we proposed selecting the denoising strength $\eta$ based on validation accuracy after lightweight classifier adaptation. In the main text, we illustrated this strategy using SMGNN. Here, we provide the corresponding results for the other three SI-GNNs. As shown in \cref{fig:eta-gat}-\cref{fig:eta-gsat}, similar trends consistently appear across different architectures. Moderate values of $\eta$ typically improve predictive performance after adaptation, suggesting that SD removes context-driven noise while preserving task-relevant information. In contrast, excessively large $\eta$ leads to performance degradation, indicating over-denoising and information loss. These results further support the feasibility and generality of using adapted predictive performance to select $\eta$ in practice.

\subsection{Sparsity Analysis}

\begin{figure}[t]    
    \centering    
    \includegraphics[width=\linewidth]{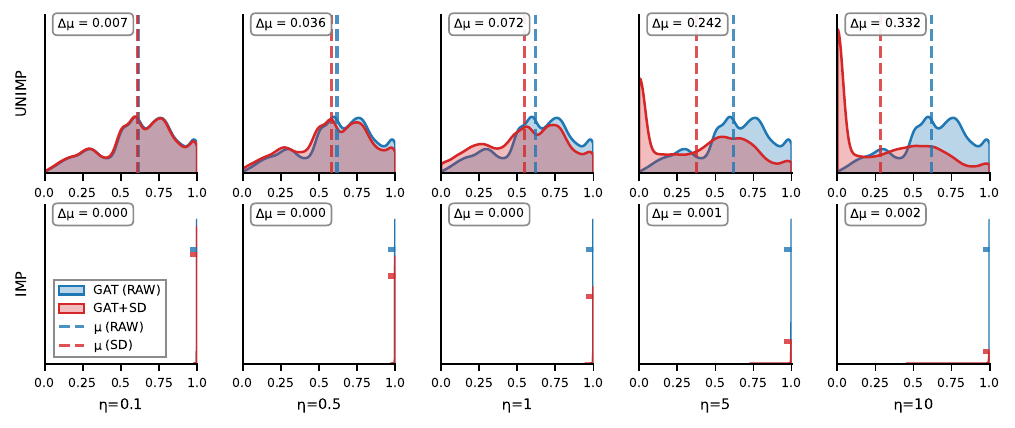}    
    \caption{Edge-score distributions of GAT under different $\eta$ on the BA-2MOTIFS dataset.} 
    \label{fig:sparsity-1}
\end{figure}

\begin{figure}[t]    
    \centering    
    \includegraphics[width=\linewidth]{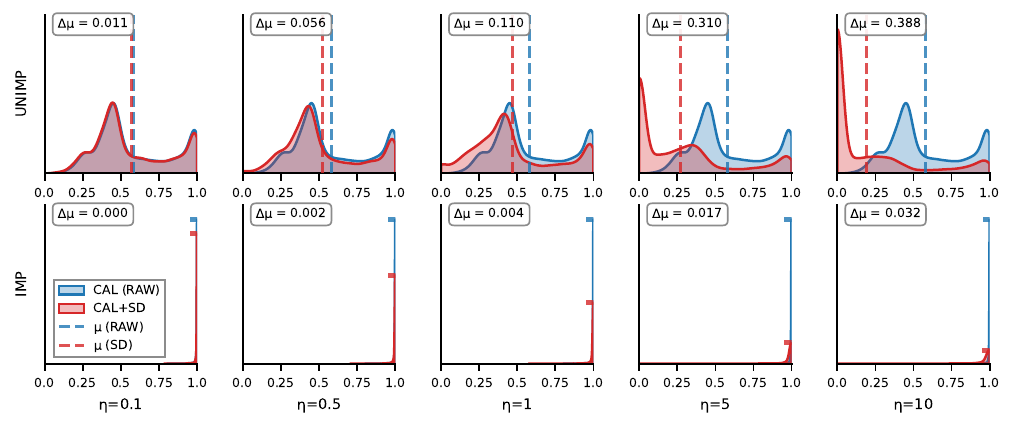}    
    \caption{Edge-score distributions of CAL under different $\eta$ on the BA-2MOTIFS dataset.} 
\end{figure}

\begin{figure}[t]    
    \centering    
    \includegraphics[width=\linewidth]{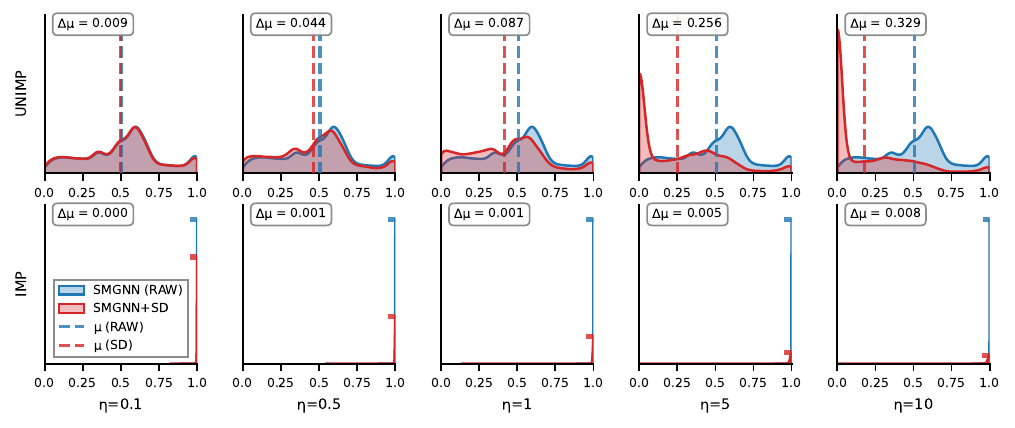}    
    \caption{Edge-score distributions of SMGNN under different $\eta$ on the BA-2MOTIFS dataset.} 
\end{figure}

\begin{figure}[t]    
    \centering    
    \includegraphics[width=\linewidth]{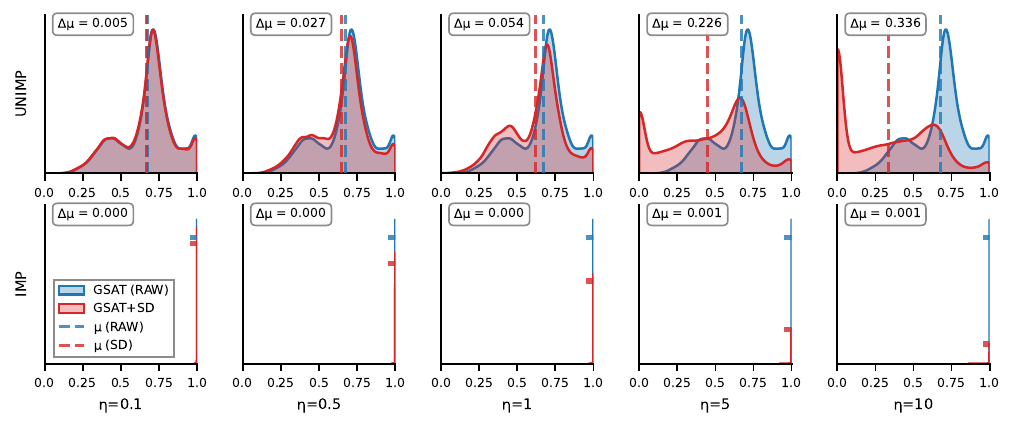}    
    \caption{Edge-score distributions of GSAT under different $\eta$ on the BA-2MOTIFS dataset.} 
\end{figure}

\begin{figure}[t]    
    \centering    
    \includegraphics[width=\linewidth]{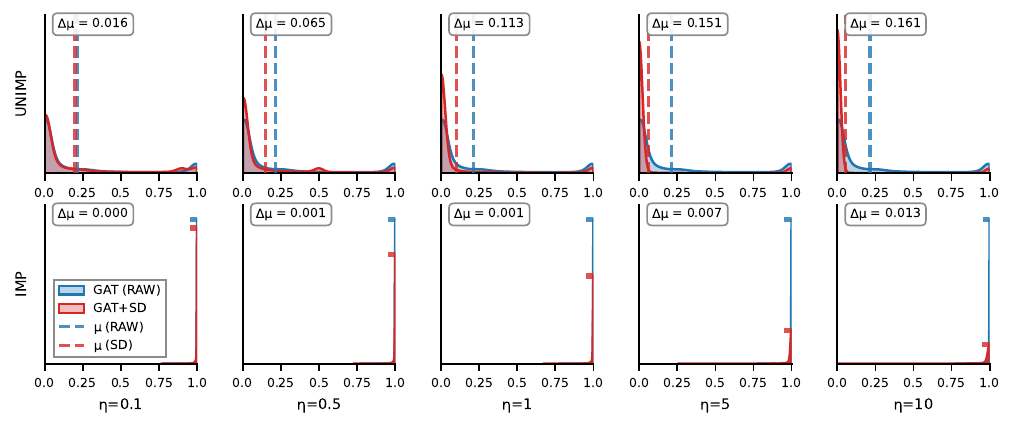}    
    \caption{Edge-score distributions of GAT under different $\eta$ on the 3MR dataset.} 
\end{figure}

\begin{figure}[t]    
    \centering    
    \includegraphics[width=\linewidth]{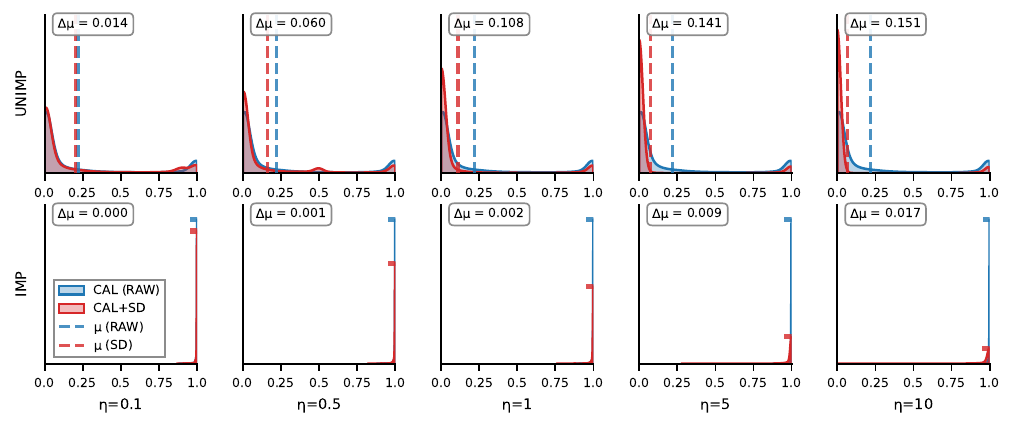}    
    \caption{Edge-score distributions of CAL under different $\eta$ on the 3MR dataset.} 
\end{figure}

\begin{figure}[t]    
    \centering    
    \includegraphics[width=\linewidth]{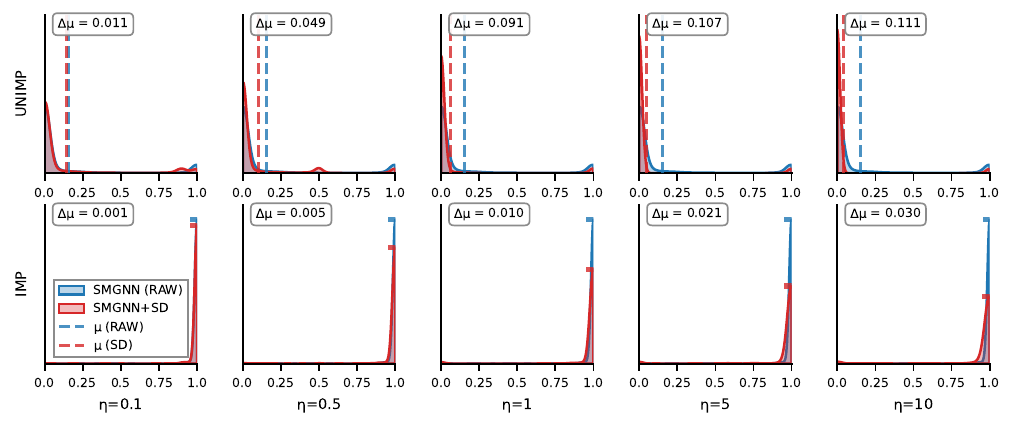}    
    \caption{Edge-score distributions of SMGNN under different $\eta$ on the 3MR dataset.} 
\end{figure}

\begin{figure}[t]    
    \centering    
    \includegraphics[width=\linewidth]{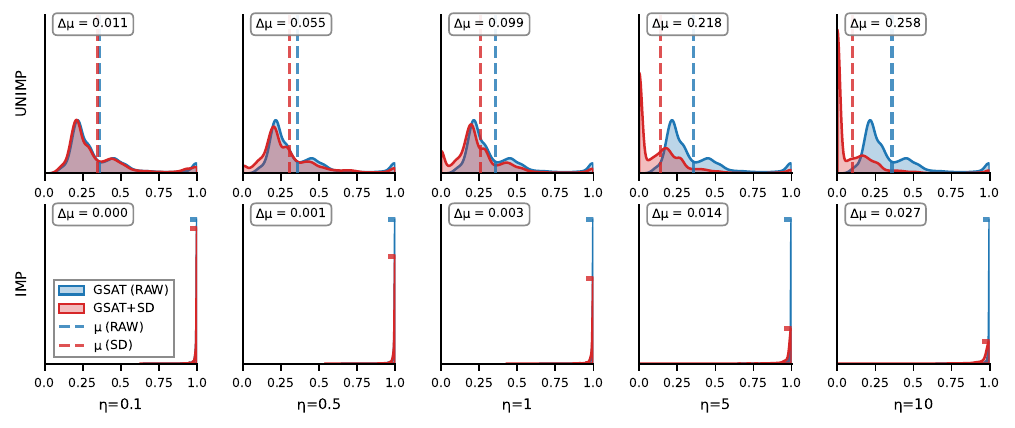}    
    \caption{Edge-score distributions of GSAT under different $\eta$ on the 3MR dataset.} 
\end{figure}

\begin{figure}[t]    
    \centering    
    \includegraphics[width=\linewidth]{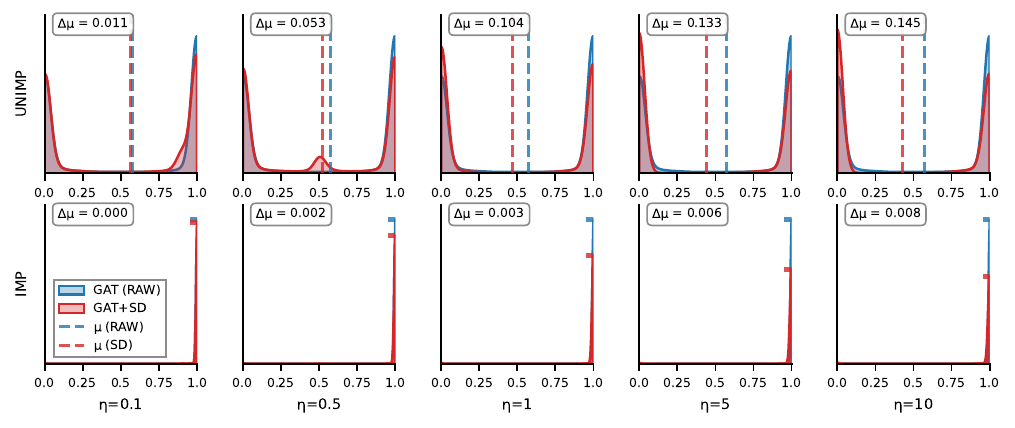}    
    \caption{Edge-score distributions of GAT under different $\eta$ on the BENZENE dataset.} 
\end{figure}

\begin{figure}[t]    
    \centering    
    \includegraphics[width=\linewidth]{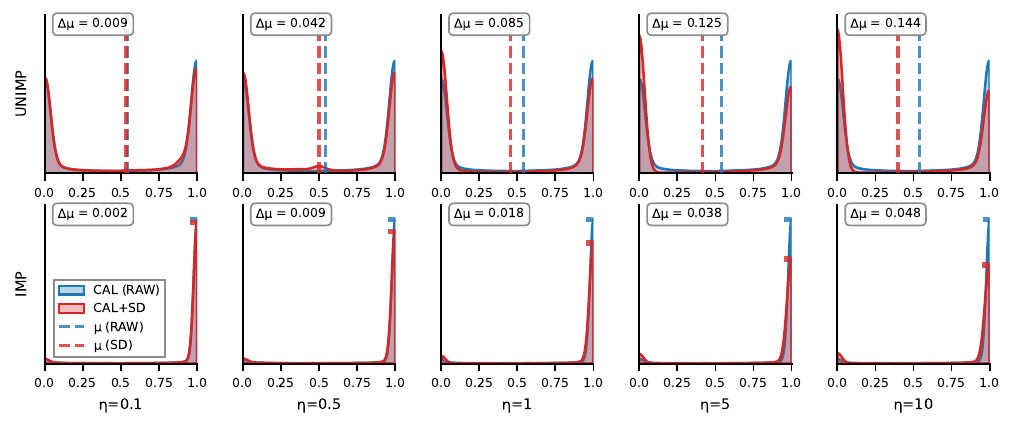}    
    \caption{Edge-score distributions of CAL under different $\eta$ on the BENZENE dataset.} 
\end{figure}

\begin{figure}[t]    
    \centering    
    \includegraphics[width=\linewidth]{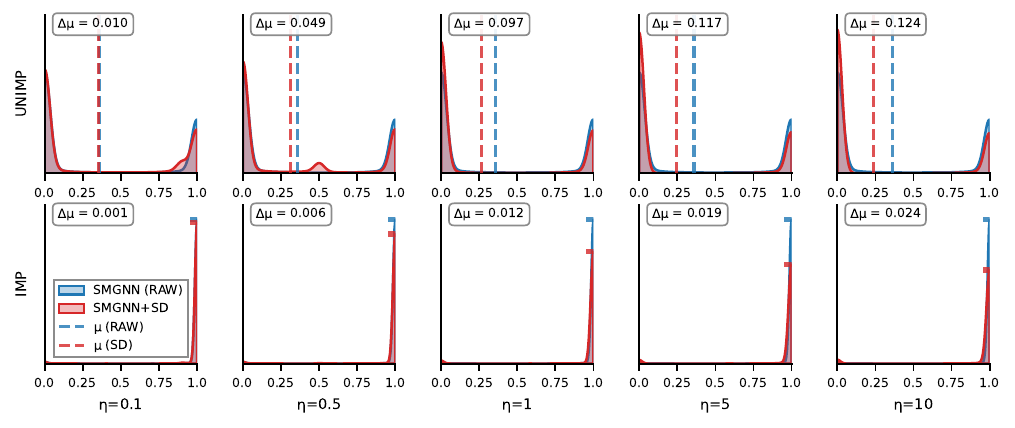}    
    \caption{Edge-score distributions of SMGNN under different $\eta$ on the BENZENE dataset.} 
\end{figure}

\begin{figure}[t]    
    \centering    
    \includegraphics[width=\linewidth]{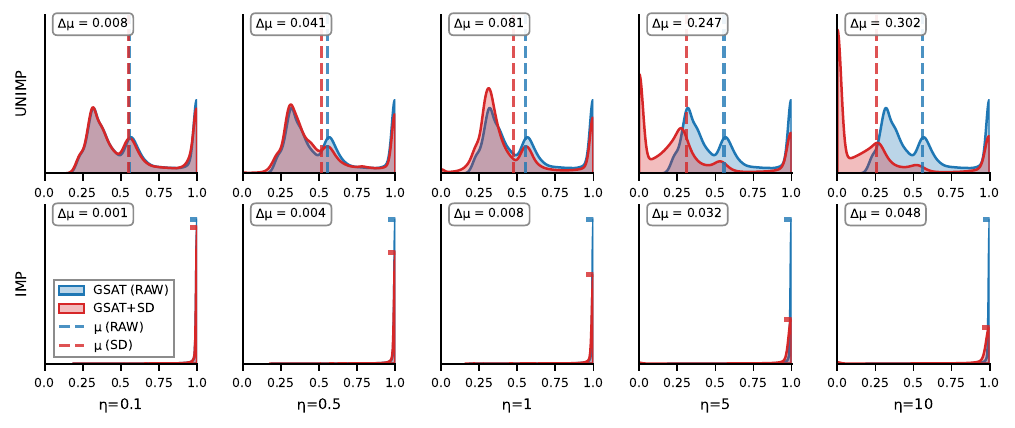}    
    \caption{Edge-score distributions of GSAT under different $\eta$ on the BENZENE dataset.} 
\end{figure}

\begin{figure}[t]    
    \centering    
    \includegraphics[width=\linewidth]{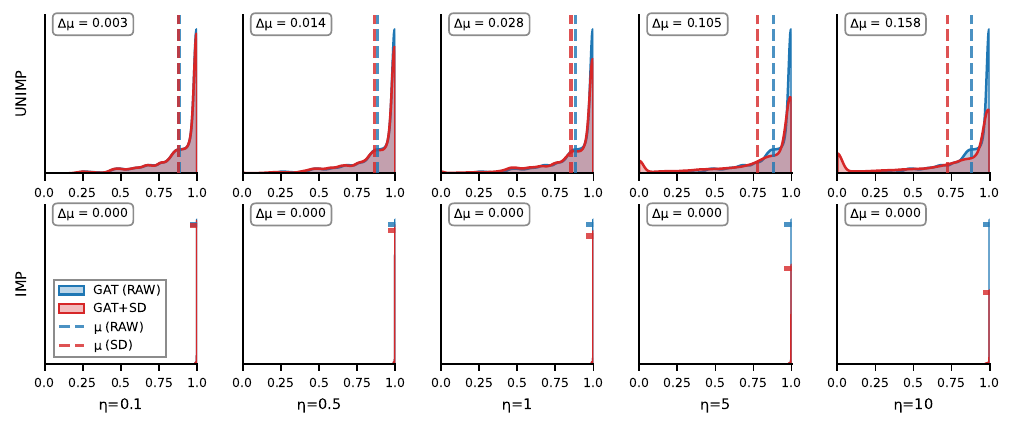}    
    \caption{Edge-score distributions of GAT under different $\eta$ on the MUTAGENICITY dataset.} 
\end{figure}

\begin{figure}[t]    
    \centering    
    \includegraphics[width=\linewidth]{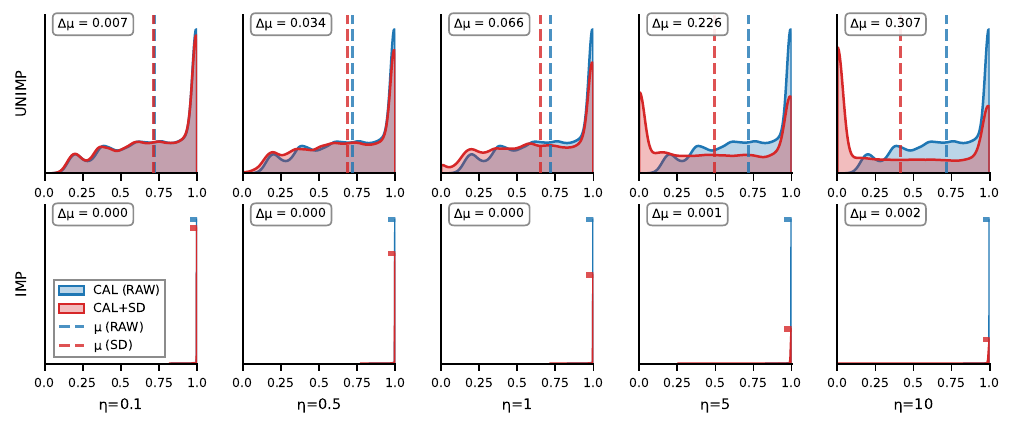}    
    \caption{Edge-score distributions of CAL under different $\eta$ on the MUTAGENICITY dataset.} 
\end{figure}

\begin{figure}[t]    
    \centering    
    \includegraphics[width=\linewidth]{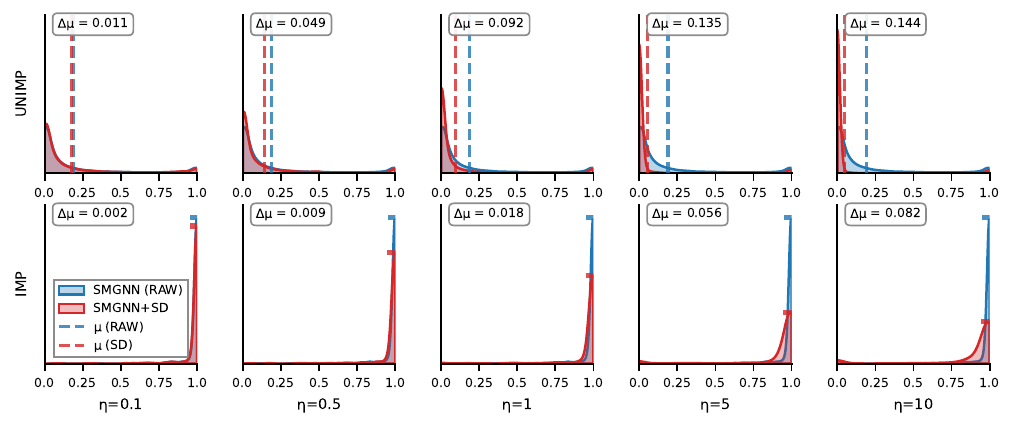}    
    \caption{Edge-score distributions of SMGNN under different $\eta$ on the MUTAGENICITY dataset.} 
\end{figure}

\begin{figure}[t]    
    \centering    
    \includegraphics[width=\linewidth]{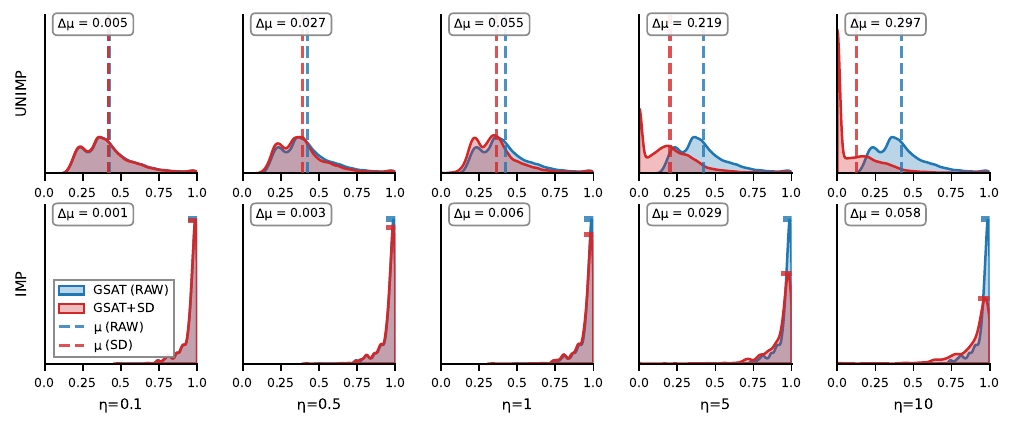}    
    \caption{Edge-score distributions of GSAT under different $\eta$ on the MUTAGENICITY dataset.}
    \label{fig:sparsity-16}
\end{figure}

Since SD always decreases edge scores, it naturally reduces SPA. The key question is whether this reduction mainly affects noisy edges or also suppresses truly important structures. As shown in \cref{fig:sparsity-1}-\cref{fig:sparsity-16}, SD primarily reduces the scores of unimportant edges, while the scores of important edges remain largely stable. This behavior makes the separation between important and unimportant edges clearer, leading to more concise explanations that are easier for humans to interpret and use. As $\eta$ increases, the denoising effect becomes stronger. AUC decreases only when $\eta$ becomes excessively large. This happens because not all unimportant edges can be identified through self-inconsistency, while some important edges may also exhibit mild instability due to factors such as budget tightening or imperfect signal allocation. Therefore, an overly large $\eta$ may over-suppress informative edges and rank them below self-consistent but unimportant ones. In practice, this issue can be mitigated by the hyperparameter selection strategy discussed in \cref{subsec:eta-select}, which controls the strength of explanation calibration based on validation performance.

\subsection{Comparison with Training-Time Self-Consistency Regularization}\label{app:with-sc-iclr}

\begin{table}[t]      
\caption{Comparison between the proposed SD and training-time self-consistency regularization (SC) on SI-GNNs with the GIN backbone across four datasets. SC denotes self-consistency training~\cite{tai2026self}, while SC* indicates variants with additional conciseness regularization for stability. \textbf{Bold} indicates better performance, while \underline{underlined} results are statistically significant ($p<0.05$).}
\label{tab:with-sc-iclr}    
\begin{center}    
\setlength{\tabcolsep}{2pt} 
\resizebox{\linewidth}{!}{    
\begin{tabular}{ccccccccc}
\toprule        
\multirow{2}{*}{\vspace{-2mm} Method}
& \multicolumn{2}{c}{BA-2MOTIFS}         
& \multicolumn{2}{c}{3MR}         
& \multicolumn{2}{c}{BENZENE}         
& \multicolumn{2}{c}{MUTAGENICITY} \\        
\cmidrule(lr){2-3}\cmidrule(lr){4-5}\cmidrule(lr){6-7}\cmidrule(lr){8-9}        
& $\uparrow$ AUC (\%) & $\downarrow$ FID (\%)
& $\uparrow$ AUC (\%) & $\downarrow$ FID (\%)         
& $\uparrow$ AUC (\%) & $\downarrow$ FID (\%)         
& $\uparrow$ AUC (\%) & $\downarrow$ FID (\%) \\        
\midrule        
GAT~\citep{velivckovic2018graph}        
& 99.31$\pm$0.35 & 2.50$\pm$7.50
& 97.25$\pm$0.67 & {2.63$\pm$1.93}
& 83.51$\pm$2.24 & {2.06$\pm$0.65}
& 91.71$\pm$6.00 & 0.37$\pm$0.53 \\                
GAT+SD
& \textbf{99.47$\pm$0.28} & \underline{\textbf{0.80$\pm$2.40}}
& \underline{\textbf{98.68$\pm$0.40}} & {\textbf{2.15$\pm$1.17}}
& \underline{\textbf{87.39$\pm$1.57}} & \underline{\textbf{1.24$\pm$0.73}}
& \underline{\textbf{93.14$\pm$5.29}} & 1.22$\pm$1.14 \\ 
GAT+SD*
& \textbf{99.47$\pm$0.28} & \underline{\textbf{0.00$\pm$0.00}}
& \underline{\textbf{98.68$\pm$0.40}} & 6.37$\pm$4.41
& \underline{\textbf{87.39$\pm$1.57}} & \underline{\textbf{1.56$\pm$0.93}}
& \underline{\textbf{93.14$\pm$5.29}} & 1.25$\pm$1.34 \\
GAT+SC~\citep{tai2026self}        
& 99.14$\pm$2.15 & \underline{\bf0.00$\pm$0.00}
& \bf97.66$\pm$1.00 & \underline{\bf0.10$\pm$0.00}
& \bf85.13$\pm$5.09 & \underline{0.82$\pm$0.61}
& 81.79$\pm$12.31 & \underline{\bf0.00$\pm$0.00} \\ 
GAT+SC*~\citep{tai2026self}        
& \underline{\bf99.87$\pm$0.05} & \underline{\bf0.00$\pm$0.00}
& \underline{\bf98.39$\pm$0.88} & \underline{\bf0.00$\pm$0.00}
& \underline{\bf90.07$\pm$1.56} & \underline{\bf0.97$\pm$0.59}
& \underline{\bf98.30$\pm$0.54} & 0.61$\pm$0.54 \\ 
\midrule        
CAL~\citep{sui2022causal}               
& 98.64$\pm$1.33 & 17.10$\pm$17.21
& 96.25$\pm$1.59 & 8.17$\pm$2.52
& 77.64$\pm$3.05 & 4.97$\pm$3.44
& 96.39$\pm$1.39 & 2.06$\pm$1.14 \\  
CAL+SD
& \textbf{98.71$\pm$1.25} & 17.60$\pm$17.04
& \underline{\textbf{97.74$\pm$1.13}} & 8.24$\pm$2.54
& \textbf{79.12$\pm$3.05} & {4.90$\pm$3.52}
& \textbf{96.88$\pm$1.21} & 2.47$\pm$1.35 \\
CAL+SD*
& \textbf{98.71$\pm$1.25} & \underline{\textbf{1.50$\pm$2.38}}
& \underline{\textbf{97.74$\pm$1.13}} & \underline{\textbf{2.78$\pm$0.77}}
& \textbf{79.12$\pm$3.05} & \underline{\textbf{3.79$\pm$1.84}}
& \textbf{96.88$\pm$1.21} & 2.36$\pm$1.60 \\
CAL+SC~\citep{tai2026self}        
& \underline{\bf98.80$\pm$2.41} & \underline{\bf0.00$\pm$0.00}
& \underline{\bf97.23$\pm$0.49} & \underline{\bf1.56$\pm$0.76}
& \underline{\bf88.74$\pm$3.06} & \underline{\bf2.45$\pm$0.92}
& \bf96.52$\pm$2.07 & \underline{\bf0.03$\pm$0.10} \\ 
CAL+SC*~\citep{tai2026self}        
& \underline{\bf98.90$\pm$2.44} & \underline{\bf0.00$\pm$0.00}
& \underline{\bf97.64$\pm$0.76} & \bf5.02$\pm$5.86
& \underline{\bf89.87$\pm$0.98} & \underline{\bf3.11$\pm$1.11}
& \underline{\bf97.93$\pm$0.46} & 4.22$\pm$1.20 \\ 
\midrule        
SMGNN~\citep{azzolin2025beyond}              
& 99.32$\pm$0.36 & 0.50$\pm$0.92
& 97.00$\pm$0.77 & 1.90$\pm$1.01
& 84.38$\pm$2.71 & 2.07$\pm$0.62
& 98.12$\pm$0.39 & 1.72$\pm$1.09 \\
SMGNN+SD
& \textbf{99.47$\pm$0.24} & \textbf{0.30$\pm$0.64}
& \underline{\textbf{98.59$\pm$0.51}} & \underline{\textbf{0.80$\pm$0.51}}
& \underline{\textbf{88.05$\pm$1.64}} & \underline{\textbf{1.12$\pm$0.50}}
& \underline{\textbf{98.51$\pm$0.36}} & \underline{\textbf{1.28$\pm$0.93}} \\     
SMGNN+SD*
& \textbf{99.47$\pm$0.24} & \underline{\textbf{0.00$\pm$0.00}}
& \underline{\textbf{98.59$\pm$0.51}} & 4.33$\pm$5.82
& \underline{\textbf{88.05$\pm$1.64}} & \underline{\textbf{1.08$\pm$0.58}}
& \underline{\textbf{98.51$\pm$0.36}} & \textbf{1.59$\pm$0.62} \\
SMGNN+SC~\citep{tai2026self}        
& \underline{\bf99.87$\pm$0.05} & \underline{\bf0.00$\pm$0.00}
& \underline{\bf98.39$\pm$0.88} & \underline{\bf0.00$\pm$0.00}
& \underline{\bf90.07$\pm$1.56} & \underline{\bf0.97$\pm$0.59}
& \underline{\bf98.30$\pm$0.54} & 0.61$\pm$0.54 \\ 
\midrule        
GSAT~\citep{miao2022interpretable}      
& {99.30$\pm$0.47} & 0.00$\pm$0.00 
& 98.38$\pm$0.31 & 0.90$\pm$0.28
& 90.66$\pm$0.89 & 1.80$\pm$0.84 
& 99.01$\pm$0.31 & 1.11$\pm$0.61 \\          
GSAT+SD
& \textbf{99.44$\pm$0.40} & 0.00$\pm$0.00
& \underline{\textbf{99.22$\pm$0.23}} & \textbf{0.62$\pm$0.40}
& \underline{\textbf{92.17$\pm$0.71}} & \underline{\textbf{1.13$\pm$0.49}}
& \textbf{99.11$\pm$0.24} & \textbf{1.11$\pm$0.40} \\
GSAT+SD*
& \textbf{99.44$\pm$0.40} & 0.00$\pm$0.00
& \underline{\textbf{99.22$\pm$0.23}} & \underline{\textbf{0.38$\pm$0.59}}
& \underline{\textbf{92.17$\pm$0.71}} & \underline{\textbf{0.89$\pm$0.27}}
& \textbf{99.11$\pm$0.24} & 1.22$\pm$0.66 \\
GSAT+SC~\citep{tai2026self}        
& \bf99.30$\pm$0.12 & 0.00$\pm$0.00
& \underline{\bf98.91$\pm$0.29} & \underline{\bf0.31$\pm$0.29}
& \underline{\bf92.80$\pm$0.36} & \underline{\bf0.74$\pm$0.14}
& \underline{\bf99.38$\pm$0.05} & \underline{\bf0.17$\pm$0.23} \\ 
\\[-3mm] \hdashline \\[-3mm]            
\diagbox[width=2.5cm]{}{}        
& $\uparrow$ ACC (\%) & $\downarrow$ SPA (\%)      
& $\uparrow$ ACC (\%) & $\downarrow$ SPA (\%)         
& $\uparrow$ ACC (\%) & $\downarrow$ SPA (\%)         
& $\uparrow$ ACC (\%) & $\downarrow$ SPA (\%) \\ 
\midrule 
GAT~\citep{velivckovic2018graph}        
& 97.10$\pm$8.70 & 69.98$\pm$6.02  
& 96.54$\pm$1.46 & {26.31$\pm$2.25}   
& 91.60$\pm$0.66 & 64.16$\pm$4.14   
& 92.97$\pm$0.78 & {88.66$\pm$8.80} \\      
GAT+SD                                  
& \textbf{98.60$\pm$4.20} & \underline{\textbf{64.35$\pm$6.49}}  
& \textbf{97.20$\pm$1.38} & \underline{\textbf{20.13$\pm$1.75}}           
& 91.39$\pm$1.23 & \underline{\textbf{53.01$\pm$5.18}}   
& \textbf{92.94$\pm$1.02} & \underline{\textbf{74.17$\pm$21.00}} \\
GAT+SD*
& \underline{\textbf{100.00$\pm$0.00}} & \underline{\textbf{64.35$\pm$6.49}}  
& \underline{\textbf{99.07$\pm$0.41}} & \underline{\textbf{20.13$\pm$1.75}} 
& \underline{\textbf{92.66$\pm$0.97}} & \underline{\textbf{53.01$\pm$5.18}}   
& \textbf{93.28$\pm$0.63} & \underline{\textbf{74.17$\pm$21.00}} \\
GAT+SC~\citep{tai2026self}        
& \underline{\bf100.00$\pm$0.00} & 76.96$\pm$15.04
& \underline{\bf99.55$\pm$0.16} & \underline{\bf15.83$\pm$4.24}
& \underline{\bf92.46$\pm$0.32} & \underline{\bf50.94$\pm$11.00}
& \underline{\bf93.65$\pm$0.77} & 99.99$\pm$0.02 \\ 
GAT+SC*~\citep{tai2026self}        
& \underline{\bf99.80$\pm$0.60} & \underline{\bf62.75$\pm$2.69}
& \underline{\bf99.65$\pm$0.00} & \underline{\bf11.91$\pm$3.41}
& \underline{\bf92.57$\pm$0.44} & \underline{\bf35.36$\pm$3.86}
& 91.18$\pm$1.07 & \underline{\bf9.55$\pm$1.52} \\ 
\midrule        
CAL~\citep{sui2022causal}               
& 91.90$\pm$11.68 & {66.99$\pm$6.50}   
& 94.22$\pm$2.11 & {26.65$\pm$1.96}  
& 84.31$\pm$5.66 & {60.80$\pm$10.75}   
& 91.22$\pm$1.37 & 73.09$\pm$14.58    \\
CAL+SD                                  
& \underline{\textbf{92.20$\pm$11.14}} & \underline{\textbf{66.09$\pm$6.72}}  
& 93.91$\pm$2.23 & \underline{\textbf{25.30$\pm$1.78}}   
& 84.28$\pm$5.81 & \underline{\textbf{60.06$\pm$10.86}}   
& 91.15$\pm$1.34 & \underline{\textbf{67.06$\pm$17.08}}  \\
CAL+SD*
& \underline{\textbf{99.80$\pm$0.40}} & \underline{\textbf{66.09$\pm$6.72}} 
& \underline{\textbf{97.58$\pm$1.89}} & \underline{\textbf{25.30$\pm$1.78}}
& \underline{\textbf{90.04$\pm$1.12}} & \underline{\textbf{60.06$\pm$10.86}} 
& \underline{\textbf{92.84$\pm$0.88}} & \underline{\textbf{67.06$\pm$17.08}} \\
CAL+SC~\citep{tai2026self}        
& \underline{\bf100.00$\pm$0.00} & 71.17$\pm$15.59
& \underline{\bf97.82$\pm$0.97} & \underline{\bf20.57$\pm$3.24}
& \underline{\bf91.77$\pm$0.45} & \underline{\bf40.74$\pm$8.31}
& \underline{\bf93.11$\pm$0.70} & \underline{\bf50.03$\pm$0.10} \\ 
CAL+SC*~\citep{tai2026self}        
& \underline{\bf100.00$\pm$0.00} & \underline{\bf23.12$\pm$1.46}
& \underline{\bf98.27$\pm$0.93} & \underline{\bf17.06$\pm$3.81}
& \underline{\bf91.49$\pm$0.54} & \underline{\bf34.66$\pm$2.48}
& 90.88$\pm$0.71 & \underline{\bf10.11$\pm$1.45} \\ 
\midrule        
SMGNN~\citep{azzolin2025beyond}              
& 95.50$\pm$12.52 & {61.06$\pm$4.47}  
& 97.30$\pm$1.31 & {19.60$\pm$2.13}   
& 91.12$\pm$0.62 & {46.73$\pm$6.39}   
& 89.53$\pm$0.99 & 20.50$\pm$4.78    \\      
SMGNN+SD                                
& 95.50$\pm$13.50 & \underline{\textbf{54.57$\pm$4.62}}  
& \textbf{98.44$\pm$1.06} & \underline{\textbf{11.44$\pm$1.40}}  
& \textbf{91.34$\pm$1.14} & \underline{\textbf{38.46$\pm$5.41}}   
& 89.19$\pm$0.79 & \underline{\textbf{16.57$\pm$3.61}}  \\  
SMGNN+SD*
& \underline{\textbf{100.00$\pm$0.00}} & \underline{\textbf{54.57$\pm$4.62}}
& \underline{\textbf{99.03$\pm$0.57}} & \underline{\textbf{11.44$\pm$1.40}}  
& \underline{\textbf{92.07$\pm$0.79}} & \underline{\textbf{38.46$\pm$5.41}}  
& \underline{\textbf{91.18$\pm$1.24}} & \underline{\textbf{16.57$\pm$3.61}}  \\  
SMGNN+SC~\citep{tai2026self}        
& \underline{\bf99.80$\pm$0.60} & 62.75$\pm$2.69
& \underline{\bf99.65$\pm$0.00} & 11.91$\pm$3.41
& \underline{\bf92.57$\pm$0.44} & \underline{\bf35.36$\pm$3.86}
& \underline{\bf91.18$\pm$1.07} & \underline{\bf9.55$\pm$1.52} \\ 
\midrule        
GSAT~\citep{miao2022interpretable}      
& 100.00$\pm$0.00 & 74.44$\pm$2.63  
& 98.55$\pm$0.80 & 63.88$\pm$3.92   
& 91.48$\pm$0.87 & 57.10$\pm$10.18  
& 92.43$\pm$1.00 & 44.39$\pm$6.10    \\        
GSAT+SD                                 
& 100.00$\pm$0.00 & \underline{\textbf{70.24$\pm$2.92}}  
& \textbf{98.89$\pm$0.80} & \underline{\textbf{62.39$\pm$2.42}}  
& \textbf{91.72$\pm$0.92} & \underline{\textbf{54.44$\pm$8.14}}  
& 91.99$\pm$0.93 & \underline{\textbf{39.16$\pm$6.20}}   \\ 
GSAT+SD*
& 100.00$\pm$0.00 & \underline{\textbf{70.24$\pm$2.92}}  
& \underline{\textbf{99.55$\pm$0.27}} & \underline{\textbf{62.39$\pm$2.42}}  
& \underline{\textbf{92.91$\pm$0.47}} & \underline{\textbf{54.44$\pm$8.14}}  
& \underline{\textbf{93.38$\pm$0.34}} & \underline{\textbf{39.16$\pm$6.20}} \\
GSAT+SC~\citep{tai2026self}        
& 100.00$\pm$0.00 & \underline{\bf60.87$\pm$1.42}
& \underline{\bf99.55$\pm$0.16} & \underline{\bf37.60$\pm$2.73}
& \underline{\bf92.31$\pm$0.32} & 73.81$\pm$0.94
& \underline{\bf93.48$\pm$0.48} & 47.09$\pm$1.41 \\
\bottomrule
\end{tabular}
}
\end{center}
\end{table}

Recent work~\cite{tai2026self} mitigates explanation self-inconsistency by adding a self-consistency regularization term during training. We include this strategy for reference to clarify the relationship between training-time self-consistency regularization (SC) and our post-hoc SD strategy.

As shown in \cref{tab:main_results}, SC often achieves stronger explanation performance than SD, which is expected because it directly optimizes the model parameters with an additional consistency objective. However, this gain comes at the cost of retraining or fine-tuning the SI-GNN, and its effectiveness can depend on model-specific regularization settings. For example, GAT~\cite{velivckovic2018graph} and CAL~\cite{sui2022causal} require additional conciseness regularization to remain stable under self-consistency training, denoted as SC*. 

It is also worth noting that the lower FID values of SC do not necessarily imply substantially better explanations than SD. The motivation of Tai et al.~\cite{tai2026self} is faithfulness optimization, and self-consistency is studied because of its direct connection to faithfulness. Therefore, SC is naturally favored under fidelity-based evaluation. As discussed in \cref{app:metrics}, fidelity metrics can be affected by distribution shift and should be interpreted sparingly.

In contrast, SD targets a different use case: it is a training-free post-processing strategy that can be directly applied to pretrained SI-GNNs without modifying their objectives, architectures, or training procedures. From this perspective, SD is more closely aligned with EE~\cite{tai2025redundancy}, because both are post-hoc explanation calibration strategies, and we compare with EE in the main text.

\subsection{Results on Other GNN Backbones}\label{app:gnn-backbone}

In the main text, we report results using GIN as the backbone. 
To further evaluate the generality of SD, we conduct additional experiments with two other GNN backbones: GraphSAGE~\cite{hamilton2017inductive} and GatedGCN~\cite{bresson2017residual}. The results are reported in \cref{tab:main_results_graphsage}--\cref{tab:with_EE_gatedgcn}. 

Overall, SD generally improves explanation quality (plausibility, sparsity) across different backbones and is complementary with EE. After lightweight classifier adaptation, SD also improves downstream predictive performance, further demonstrating its robustness and general applicability.

\begin{table*}[t]
\caption{Experimental results of SD on SI-GNNs with GraphSAGE backbone across four datasets. \textbf{Bold} indicates better performance, while \underline{underlined} results are statistically significant ($p < 0.05$).}
\label{tab:main_results_graphsage}
\begin{center}
\setlength{\tabcolsep}{2pt}
\resizebox{\linewidth}{!}{
\begin{tabular}{ccccccccc}
\toprule
\multirow{2}{*}{\vspace{-2mm} Method}
& \multicolumn{2}{c}{BA-2MOTIFS}
& \multicolumn{2}{c}{3MR}
& \multicolumn{2}{c}{BENZENE}
& \multicolumn{2}{c}{MUTAGENICITY} \\
\cmidrule(lr){2-3}\cmidrule(lr){4-5}\cmidrule(lr){6-7}\cmidrule(lr){8-9}
& $\uparrow$ AUC (\%) & $\downarrow$ FID (\%)
& $\uparrow$ AUC (\%) & $\downarrow$ FID (\%)
& $\uparrow$ AUC (\%) & $\downarrow$ FID (\%)
& $\uparrow$ AUC (\%) & $\downarrow$ FID (\%) \\
\midrule
GAT~\citep{velivckovic2018graph} & 99.16$\pm$1.15 & {12.80$\pm$8.61} & 99.19$\pm$0.07 & 0.45$\pm$0.68 & 92.10$\pm$0.88 & 1.89$\pm$0.91 & 97.29$\pm$1.50 & 3.18$\pm$1.37 \\
GAT+SD & \textbf{99.22$\pm$1.14} & 13.80$\pm$10.83 & \underline{\textbf{99.80$\pm$0.06}} & 0.66$\pm$1.00 & \underline{\textbf{93.51$\pm$1.00}} & \textbf{1.67$\pm$0.74} & \textbf{97.88$\pm$1.38} & \textbf{2.97$\pm$1.40} \\
GAT+SD* & \textbf{99.22$\pm$1.14} & \textbf{11.40$\pm$13.74} & \underline{\textbf{99.80$\pm$0.06}} & 0.66$\pm$1.27 & \underline{\textbf{93.51$\pm$1.00}} & 2.38$\pm$0.88 & \textbf{97.88$\pm$1.38} & 3.55$\pm$1.23 \\

\midrule
CAL~\citep{sui2022causal} & 99.70$\pm$0.28 & \textbf{28.70$\pm$14.30} & 98.95$\pm$0.15 & {0.52$\pm$0.42} & 91.21$\pm$0.85 & \textbf{6.23$\pm$2.98} & 97.82$\pm$0.73 & \textbf{2.03$\pm$0.72} \\
CAL+SD & \textbf{99.73$\pm$0.31} & 36.50$\pm$21.95 & \underline{\textbf{99.70$\pm$0.08}} & 2.32$\pm$2.56 & \underline{\textbf{92.75$\pm$0.87}} & 14.81$\pm$5.80 & \underline{\textbf{98.58$\pm$0.29}} & 5.95$\pm$1.58 \\
CAL+SD* & \textbf{99.73$\pm$0.31} & 32.00$\pm$15.24 & \underline{\textbf{99.70$\pm$0.08}} & 0.97$\pm$1.68 & \underline{\textbf{92.75$\pm$0.87}} & 10.27$\pm$6.04 & \underline{\textbf{98.58$\pm$0.29}} & 10.88$\pm$4.36 \\

\midrule
SMGNN~\citep{azzolin2025beyond} & 99.72$\pm$0.25 & {28.10$\pm$14.73} & 98.41$\pm$0.61 & 1.00$\pm$1.14 & 91.04$\pm$2.17 & 3.24$\pm$1.38 & 98.39$\pm$0.49 & 2.03$\pm$0.66 \\
SMGNN+SD & \textbf{99.83$\pm$0.25} & {29.60$\pm$17.37} & \underline{\textbf{99.34$\pm$0.43}} & {0.97$\pm$1.69} & \textbf{92.34$\pm$2.55} & \textbf{2.97$\pm$1.37} & \textbf{98.82$\pm$0.46} & \underline{\textbf{1.05$\pm$0.49}} \\
SMGNN+SD* & \textbf{99.83$\pm$0.25} & 33.30$\pm$12.47 & \underline{\textbf{99.34$\pm$0.43}} & \textbf{0.76$\pm$0.89} & \textbf{92.34$\pm$2.55} & 4.76$\pm$2.80 & \textbf{98.82$\pm$0.46} & 1.62$\pm$0.78 \\

\midrule
GSAT~\citep{miao2022interpretable} & {99.10$\pm$1.06} & \textbf{1.60$\pm$2.33} & 99.41$\pm$0.21 & 0.38$\pm$0.48 & 92.72$\pm$0.79 & 1.20$\pm$0.20 & 99.16$\pm$0.10 & 0.81$\pm$0.61 \\
GSAT+SD & 99.06$\pm$1.37 & 4.10$\pm$6.32 & \underline{\textbf{99.83$\pm$0.18}} & {0.21$\pm$0.35} & \textbf{93.81$\pm$0.90} & \textbf{1.18$\pm$0.26} & \textbf{99.17$\pm$0.09} & {0.74$\pm$0.56} \\
GSAT+SD* & 99.06$\pm$1.37 & 5.10$\pm$7.58 & \underline{\textbf{99.83$\pm$0.18}} & \textbf{0.07$\pm$0.14} & \textbf{93.81$\pm$0.90} & 1.78$\pm$0.48 & \textbf{99.17$\pm$0.09} & \textbf{0.47$\pm$0.48} \\
\\[-3mm] \hdashline \\[-3mm]
\diagbox[width=2.5cm]{}{}
& $\uparrow$ ACC (\%) & $\downarrow$ SPA (\%)
& $\uparrow$ ACC (\%) & $\downarrow$ SPA (\%)
& $\uparrow$ ACC (\%) & $\downarrow$ SPA (\%)
& $\uparrow$ ACC (\%) & $\downarrow$ SPA (\%) \\
\midrule
GAT~\citep{velivckovic2018graph} & {95.10$\pm$12.74} & 49.26$\pm$4.54 & {99.52$\pm$0.70} & 35.70$\pm$4.93 & {92.31$\pm$1.34} & 46.48$\pm$17.34 & {91.32$\pm$1.31} & 30.51$\pm$10.93 \\
GAT+SD & 94.00$\pm$14.81 & \underline{\textbf{47.75$\pm$4.59}} & 99.34$\pm$1.00 & \underline{\textbf{34.40$\pm$5.10}} & \textbf{92.17$\pm$1.34} & \underline{\textbf{45.53$\pm$17.31}} & 91.28$\pm$1.23 & \underline{\textbf{29.05$\pm$10.90}} \\
GAT+SD* & \underline{\textbf{100.00$\pm$0.00}} & \underline{\textbf{47.75$\pm$4.59}} & \underline{\textbf{100.00$\pm$0.00}} & \underline{\textbf{34.40$\pm$5.10}} & \underline{\bf93.23$\pm$1.05} & \underline{\textbf{45.53$\pm$17.31}} & \underline{\textbf{92.23$\pm$0.96}} & \underline{\textbf{29.05$\pm$10.90}} \\

\midrule
CAL~\citep{sui2022causal} & {94.00$\pm$8.53} & 39.14$\pm$3.05 & 99.72$\pm$0.51 & 29.19$\pm$1.96 & {87.03$\pm$3.57} & 33.87$\pm$6.76 & {88.92$\pm$0.66} & 25.41$\pm$8.24 \\
CAL+SD & 93.70$\pm$10.01 & \underline{\textbf{37.31$\pm$2.97}} & \textbf{99.79$\pm$0.42} & \underline{\textbf{24.88$\pm$1.76}} & 86.75$\pm$3.58 & \underline{\textbf{32.98$\pm$6.57}} & 88.68$\pm$1.10 & \underline{\textbf{13.79$\pm$3.72}} \\
CAL+SD* & \underline{\textbf{98.80$\pm$2.68}} & \underline{\textbf{37.31$\pm$2.97}} & \bf99.72$\pm$0.30 & \underline{\textbf{24.88$\pm$1.76}} & \underline{\textbf{91.08$\pm$1.28}} & \underline{\textbf{32.98$\pm$6.57}} & \underline{\textbf{89.90$\pm$1.53}} & \underline{\textbf{13.79$\pm$3.72}} \\

\midrule
SMGNN~\citep{azzolin2025beyond} & {95.70$\pm$10.92} & 35.91$\pm$7.87 & 98.37$\pm$1.90 & 22.77$\pm$6.10 & {86.88$\pm$8.80} & 27.90$\pm$4.05 & {89.73$\pm$1.21} & 15.27$\pm$2.05 \\
SMGNN+SD & 94.50$\pm$13.55 & \underline{\textbf{34.65$\pm$7.55}} & {98.41$\pm$2.30} & \underline{\textbf{18.78$\pm$5.40}} & 86.49$\pm$9.24 & \underline{\textbf{27.14$\pm$4.02}} & 89.22$\pm$0.95 & \underline{\textbf{11.73$\pm$1.35}} \\
SMGNN+SD* & \underline{\textbf{99.90$\pm$0.30}} & \underline{\textbf{34.65$\pm$7.55}} & \underline{\textbf{99.79$\pm$0.35}} & \underline{\textbf{18.78$\pm$5.40}} & \underline{\textbf{91.54$\pm$1.52}} & \underline{\textbf{27.14$\pm$4.02}} & \underline{\textbf{91.86$\pm$1.58}} & \underline{\textbf{11.73$\pm$1.35}} \\

\midrule
GSAT~\citep{miao2022interpretable} & {99.20$\pm$1.47} & 58.26$\pm$5.61 & 99.62$\pm$0.24 & 37.23$\pm$4.63 & 93.62$\pm$0.29 & 54.98$\pm$3.25 & {91.45$\pm$1.04} & 38.67$\pm$0.67 \\
GSAT+SD & 97.70$\pm$5.92 & \underline{\textbf{56.85$\pm$5.63}} & {99.76$\pm$0.35} & \textbf{34.34$\pm$4.96} & {93.65$\pm$0.30} & \underline{\textbf{54.01$\pm$3.32}} & 91.39$\pm$1.01 & \underline{\textbf{38.27$\pm$0.70}} \\
GSAT+SD* & \underline{\textbf{100.00$\pm$0.00}} & \underline{\textbf{56.85$\pm$5.63}} & \underline{\textbf{99.93$\pm$0.14}} & \underline{\textbf{34.34$\pm$4.96}} & \textbf{93.72$\pm$0.20} & \underline{\textbf{54.01$\pm$3.32}} & \underline{\textbf{93.14$\pm$0.88}} & \underline{\textbf{38.27$\pm$0.70}} \\
\bottomrule
\end{tabular}
}
\end{center}
\end{table*}

\begin{table}[t]
\caption{Experimental results of combining SD with EE on SI-GNNs with GraphSAGE backbone.}
\label{tab:with_EE_graphsage}
\begin{center}
\setlength{\tabcolsep}{2pt}
\resizebox{\linewidth}{!}{
\begin{tabular}{ccccccccc}
\toprule
\multirow{2}{*}{\vspace{-2mm} Method}
& \multicolumn{2}{c}{BA-2MOTIFS}
& \multicolumn{2}{c}{3MR}
& \multicolumn{2}{c}{BENZENE}
& \multicolumn{2}{c}{MUTAGENICITY} \\
\cmidrule(lr){2-3}\cmidrule(lr){4-5}\cmidrule(lr){6-7}\cmidrule(lr){8-9}
& $\uparrow$ AUC (\%) & $\uparrow$ ACC (\%)& $\uparrow$ AUC (\%) & $\uparrow$ ACC (\%)& $\uparrow$ AUC (\%) & $\uparrow$ ACC (\%)& $\uparrow$ AUC (\%) & $\uparrow$ ACC (\%) \\
\midrule


GAT+EE & 99.48$\pm$0.15 & 99.90$\pm$0.31 & 99.39$\pm$0.05 & 100.00$\pm$0.02 & 94.25$\pm$0.31 & 93.71$\pm$0.56 & 98.78$\pm$0.28 & 92.77$\pm$0.52 \\

GAT+SD+EE & \textbf{99.58$\pm$0.13} & 99.88$\pm$0.37 & \textbf{99.96$\pm$0.01} & \textbf{100.00$\pm$0.00} & \textbf{95.08$\pm$0.34} & 93.65$\pm$0.57 & \textbf{99.09$\pm$0.25} & 92.60$\pm$0.53 \\

\midrule


CAL+EE & 99.74$\pm$0.07 & 98.67$\pm$0.77 & 99.30$\pm$0.06 & 100.00$\pm$0.03 & 92.94$\pm$0.28 & 88.61$\pm$1.17 & 98.67$\pm$0.19 & 89.64$\pm$0.75 \\

CAL+SD+EE & \textbf{99.83$\pm$0.07} & \textbf{98.86$\pm$0.56} & \textbf{99.94$\pm$0.02} & \textbf{100.00$\pm$0.00} & \textbf{93.99$\pm$0.30} & 88.43$\pm$1.22 & \textbf{99.12$\pm$0.13} & \textbf{90.12$\pm$0.89} \\

\midrule


SMGNN+EE & 99.74$\pm$0.06 & 99.41$\pm$0.49 & 99.07$\pm$0.06 & 99.67$\pm$0.45 & 93.05$\pm$0.50 & 90.10$\pm$1.89 & 98.94$\pm$0.07 & 90.37$\pm$0.76 \\

SMGNN+SD+EE & \textbf{99.91$\pm$0.03} & \textbf{99.52$\pm$0.50} & \textbf{99.85$\pm$0.08} & \textbf{99.93$\pm$0.21} & \textbf{93.77$\pm$0.59} & 89.65$\pm$2.12 & \textbf{99.22$\pm$0.04} & 90.09$\pm$0.68 \\

\midrule


GSAT+EE & 99.54$\pm$0.12 & 100.00$\pm$0.00 & 99.65$\pm$0.04 & 99.65$\pm$0.00 & 93.66$\pm$0.26 & 93.95$\pm$0.16 & 99.29$\pm$0.04 & 91.68$\pm$0.37 \\

GSAT+SD+EE & \textbf{99.58$\pm$0.14} & \textbf{100.00$\pm$0.00} & \textbf{99.99$\pm$0.01} & \textbf{99.98$\pm$0.08} & \textbf{94.51$\pm$0.31} & \textbf{93.97$\pm$0.17} & \textbf{99.29$\pm$0.03} & 91.67$\pm$0.37 \\

\bottomrule
\end{tabular}
}
\end{center}
\end{table}


\begin{table*}[t]
\caption{Experimental results of SD on SI-GNNs with GatedGCN backbone across four datasets. \textbf{Bold} indicates better performance, while \underline{underlined} results are statistically significant ($p < 0.05$).}
\label{tab:main_results_gatedgcn}
\begin{center}
\setlength{\tabcolsep}{2pt}
\resizebox{\linewidth}{!}{
\begin{tabular}{ccccccccc}
\toprule
\multirow{2}{*}{\vspace{-2mm} Method}
& \multicolumn{2}{c}{BA-2MOTIFS}
& \multicolumn{2}{c}{3MR}
& \multicolumn{2}{c}{BENZENE}
& \multicolumn{2}{c}{MUTAGENICITY} \\
\cmidrule(lr){2-3}\cmidrule(lr){4-5}\cmidrule(lr){6-7}\cmidrule(lr){8-9}
& $\uparrow$ AUC (\%) & $\downarrow$ FID (\%)
& $\uparrow$ AUC (\%) & $\downarrow$ FID (\%)
& $\uparrow$ AUC (\%) & $\downarrow$ FID (\%)
& $\uparrow$ AUC (\%) & $\downarrow$ FID (\%) \\
\midrule
GAT~\citep{velivckovic2018graph} & 99.12$\pm$0.90 & 18.50$\pm$14.98 & 99.10$\pm$0.16 & 0.17$\pm$0.23 & 90.40$\pm$2.31 & 3.19$\pm$1.61 & 97.87$\pm$0.90 & 2.91$\pm$1.47 \\
GAT+SD & \textbf{99.14$\pm$1.09} & \textbf{17.70$\pm$14.95} & \underline{\textbf{99.67$\pm$0.16}} & \textbf{0.10$\pm$0.16} & \textbf{91.64$\pm$2.58} & \textbf{2.89$\pm$1.44} & \textbf{98.32$\pm$0.80} & \textbf{2.77$\pm$1.37} \\
GAT+SD* & \textbf{99.14$\pm$1.09} & \bf{18.30$\pm$15.86} & \underline{\textbf{99.67$\pm$0.16}} & 0.52$\pm$1.34 & \textbf{91.64$\pm$2.58} & 3.99$\pm$1.30 & \textbf{98.32$\pm$0.80} & 3.58$\pm$1.69 \\

\midrule
CAL~\citep{sui2022causal} & 99.81$\pm$0.08 & {30.90$\pm$19.56} & 98.97$\pm$0.20 & 0.45$\pm$0.31 & 90.68$\pm$0.90 & 5.84$\pm$1.53 & 98.07$\pm$0.38 & \textbf{3.38$\pm$1.17} \\
CAL+SD & \underline{\textbf{99.88$\pm$0.09}} & 31.20$\pm$20.62 & \underline{\textbf{99.60$\pm$0.18}} & 9.79$\pm$12.79 & \underline{\textbf{91.89$\pm$1.06}} & 12.29$\pm$4.43 & \underline{\textbf{98.44$\pm$0.73}} & 6.15$\pm$1.73 \\
CAL+SD* & \underline{\textbf{99.88$\pm$0.09}} & \textbf{29.60$\pm$18.21} & \underline{\textbf{99.60$\pm$0.18}} & 2.35$\pm$3.58 & \underline{\textbf{91.89$\pm$1.06}} & 10.45$\pm$3.84 & \underline{\textbf{98.44$\pm$0.73}} & 7.20$\pm$3.18 \\

\midrule
SMGNN~\citep{azzolin2025beyond} & {98.23$\pm$2.44} & 26.70$\pm$17.86 & 98.78$\pm$0.17 & 0.90$\pm$0.75 & 88.88$\pm$3.84 & 4.48$\pm$2.12 & 98.50$\pm$0.30 & 1.66$\pm$0.46 \\
SMGNN+SD & 98.02$\pm$2.99 & \bf{26.50$\pm$16.52} & \underline{\textbf{99.51$\pm$0.17}} & \textbf{0.35$\pm$0.71} & \textbf{89.97$\pm$4.17} & \textbf{3.87$\pm$1.77} & \underline{\textbf{98.91$\pm$0.26}} & \textbf{1.39$\pm$0.44} \\
SMGNN+SD* & 98.02$\pm$2.99 & \textbf{26.20$\pm$17.55} & \underline{\textbf{99.51$\pm$0.17}} & \bf0.45$\pm$0.31 & \textbf{89.97$\pm$4.17} & 6.83$\pm$4.70 & \underline{\textbf{98.91$\pm$0.26}} & 1.99$\pm$0.78 \\

\midrule
GSAT~\citep{miao2022interpretable} & {98.49$\pm$2.62} & {1.10$\pm$2.47} & 99.52$\pm$0.10 & 0.28$\pm$0.14 & 92.65$\pm$1.01 & {1.08$\pm$0.18} & 99.21$\pm$0.08 & 0.64$\pm$0.32 \\
GSAT+SD & 98.47$\pm$2.82 & 1.70$\pm$2.93 & \underline{\textbf{99.92$\pm$0.06}} & \underline{\textbf{0.00$\pm$0.00}} & \textbf{92.97$\pm$1.53} & \textbf{0.74$\pm$0.48} & \textbf{99.22$\pm$0.07} & \textbf{0.54$\pm$0.34} \\
GSAT+SD* & 98.47$\pm$2.82 & \textbf{1.10$\pm$1.81} & \underline{\textbf{99.92$\pm$0.06}} & \underline{\textbf{0.00$\pm$0.00}} & \textbf{92.97$\pm$1.53} & 1.16$\pm$0.46 & \textbf{99.22$\pm$0.07} & \textbf{0.54$\pm$0.31} \\

\\[-3mm] \hdashline \\[-3mm]
\diagbox[width=2.5cm]{}{}
& $\uparrow$ ACC (\%) & $\downarrow$ SPA (\%)
& $\uparrow$ ACC (\%) & $\downarrow$ SPA (\%)
& $\uparrow$ ACC (\%) & $\downarrow$ SPA (\%)
& $\uparrow$ ACC (\%) & $\downarrow$ SPA (\%) \\
\midrule
GAT~\citep{velivckovic2018graph} & {98.90$\pm$2.34} & 51.47$\pm$7.62 & 99.58$\pm$0.48 & 32.96$\pm$2.90 & {90.24$\pm$3.07} & 37.19$\pm$12.59 & {90.57$\pm$1.96} & 28.49$\pm$9.36 \\
GAT+SD & 98.60$\pm$3.23 & \underline{\textbf{46.96$\pm$7.91}} & {99.65$\pm$0.49} & \underline{\textbf{31.78$\pm$2.68}} & 90.02$\pm$3.46 & \underline{\textbf{36.11$\pm$12.58}} & 90.54$\pm$1.88 & \underline{\textbf{27.10$\pm$9.28}} \\
GAT+SD* & \underline{\textbf{100.00$\pm$0.00}} & \underline{\textbf{46.96$\pm$7.91}} & \underline{\textbf{99.93$\pm$0.14}} & \underline{\textbf{31.78$\pm$2.68}} & \underline{\textbf{91.71$\pm$1.58}} & \underline{\textbf{36.11$\pm$12.58}} & \underline{\textbf{92.60$\pm$1.03}} & \underline{\textbf{27.10$\pm$9.28}} \\

\midrule
CAL~\citep{sui2022causal} & {99.00$\pm$1.00} & 38.37$\pm$2.69 & 99.31$\pm$0.41 & 27.64$\pm$4.10 & {87.16$\pm$2.01} & 33.85$\pm$7.54 & {87.77$\pm$1.22} & 23.77$\pm$5.76 \\
CAL+SD & 98.90$\pm$1.14 & \underline{\textbf{36.62$\pm$2.75}} & \textbf{99.38$\pm$0.48} & \underline{\textbf{23.48$\pm$3.58}} & 87.03$\pm$2.09 & \underline{\textbf{32.89$\pm$7.43}} & \bf{88.62$\pm$2.23} & \underline{\textbf{11.38$\pm$2.23}} \\
CAL+SD* & 98.30$\pm$5.10 & \underline{\textbf{36.62$\pm$2.75}} & 99.03$\pm$0.65 & \underline{\textbf{23.48$\pm$3.58}} & \underline{\textbf{90.34$\pm$1.03}} & \underline{\textbf{32.89$\pm$7.43}} & \underline{\textbf{89.97$\pm$1.11}} & \underline{\textbf{11.38$\pm$2.23}} \\

\midrule
SMGNN~\citep{azzolin2025beyond} & {91.90$\pm$16.03} & 37.46$\pm$4.68 & 98.48$\pm$1.17 & 20.85$\pm$4.02 & {82.66$\pm$9.47} & 25.40$\pm$4.97 & 89.93$\pm$1.04 & 15.25$\pm$2.57 \\
SMGNN+SD & {91.30$\pm$16.03} & \underline{\textbf{36.14$\pm$9.63}} & \bf{99.07$\pm$1.08} & \underline{\textbf{14.64$\pm$3.24}} & 81.81$\pm$10.19 & \underline{\textbf{24.61$\pm$4.95}} & {89.97$\pm$1.09} & \underline{\textbf{14.46$\pm$2.48}} \\
SMGNN+SD* & \underline{\textbf{99.50$\pm$1.50}} & \underline{\bf36.14$\pm$9.63} & \underline{\textbf{99.97$\pm$0.10}} & \underline{\textbf{14.64$\pm$3.24}} & \underline{\bf90.89$\pm$1.97} & \underline{\textbf{24.61$\pm$4.95}} & \underline{\bf91.76$\pm$1.26} & \underline{\bf14.46$\pm$2.48} \\

\midrule
GSAT~\citep{miao2022interpretable} & {100.00$\pm$0.00} & 57.33$\pm$5.22 & 99.72$\pm$0.14 & 35.43$\pm$2.37 & 93.43$\pm$0.36 & 57.57$\pm$3.47 & {91.42$\pm$1.36} & 39.54$\pm$1.50 \\
GSAT+SD & 99.90$\pm$0.30 & \underline{\textbf{56.13$\pm$5.24}} & \underline{\textbf{100.00$\pm$0.00}} & \underline{\textbf{29.80$\pm$2.36}} & {93.14$\pm$0.28} & \underline{\textbf{48.54$\pm$4.24}} & 91.39$\pm$1.37 & \underline{\textbf{39.16$\pm$1.50}} \\
GSAT+SD* & {100.00$\pm$0.00} & \underline{\bf56.13$\pm$5.24} & \underline{\bf100.00$\pm$0.00} & \underline{\textbf{29.80$\pm$2.36}} & \textbf{93.63$\pm$0.28} & \underline{\bf48.54$\pm$4.24} & \underline{\bf92.91$\pm$0.77} & \underline{\textbf{39.16$\pm$1.50}} \\

\bottomrule
\end{tabular}
}
\end{center}
\end{table*}

\begin{table*}[t]
\caption{Experimental results of combining SD with EE on SI-GNNs with GatedGCN backbone.}
\label{tab:with_EE_gatedgcn}
\begin{center}
\setlength{\tabcolsep}{2pt}
\resizebox{\linewidth}{!}{
\begin{tabular}{ccccccccc}
\toprule
\multirow{2}{*}{\vspace{-2mm} Method}
& \multicolumn{2}{c}{BA-2MOTIFS}
& \multicolumn{2}{c}{3MR}
& \multicolumn{2}{c}{BENZENE}
& \multicolumn{2}{c}{MUTAGENICITY} \\
\cmidrule(lr){2-3}\cmidrule(lr){4-5}\cmidrule(lr){6-7}\cmidrule(lr){8-9}
& $\uparrow$ AUC (\%) & $\uparrow$ ACC (\%)& $\uparrow$ AUC (\%) & $\uparrow$ ACC (\%)& $\uparrow$ AUC (\%) & $\uparrow$ ACC (\%)& $\uparrow$ AUC (\%) & $\uparrow$ ACC (\%) \\
\midrule


GAT+EE & 99.53$\pm$0.11 & 99.97$\pm$0.16 & 99.34$\pm$0.03 & 99.83$\pm$0.22 & 92.54$\pm$0.61 & {91.68$\pm$1.17} & 98.82$\pm$0.27 & {92.88$\pm$0.60} \\

GAT+SD+EE & \textbf{99.64$\pm$0.11} & \textbf{99.98$\pm$0.14} & \textbf{99.90$\pm$0.04} & \textbf{99.88$\pm$0.20} & \textbf{93.45$\pm$0.68} & 91.37$\pm$1.28 & \textbf{99.09$\pm$0.23} & 92.87$\pm$0.59 \\

\midrule


CAL+EE & 99.83$\pm$0.03 & 99.48$\pm$0.50 & 99.21$\pm$0.08 & 99.72$\pm$0.24 & 92.45$\pm$0.40 & {88.93$\pm$0.89} & 98.86$\pm$0.28 & 88.50$\pm$0.73 \\

CAL+SD+EE & \textbf{99.92$\pm$0.03} & \textbf{99.57$\pm$0.53} & \textbf{99.86$\pm$0.06} & \textbf{99.83$\pm$0.17} & \textbf{93.39$\pm$0.41} & 88.89$\pm$0.92 & \textbf{98.90$\pm$0.28} & \textbf{88.67$\pm$0.80} \\

\midrule


SMGNN+EE & 99.26$\pm$0.29 & 98.86$\pm$0.95 & 99.16$\pm$0.05 & 99.25$\pm$0.44 & 91.98$\pm$0.97 & {84.85$\pm$4.18} & 98.88$\pm$0.10 & {90.74$\pm$0.75} \\

SMGNN+SD+EE & \textbf{99.54$\pm$0.23} & \textbf{98.97$\pm$0.96} & \textbf{99.84$\pm$0.04} & \textbf{99.64$\pm$0.34} & \textbf{92.54$\pm$1.03} & 83.90$\pm$4.60 & \textbf{99.14$\pm$0.05} & 90.69$\pm$0.73 \\

\midrule


GSAT+EE & 99.52$\pm$0.20 & {100.00$\pm$0.00} & 99.71$\pm$0.03 & 99.65$\pm$0.00 & 93.73$\pm$0.31 & 93.67$\pm$0.20 & 99.32$\pm$0.02 & {91.98$\pm$0.47} \\

GSAT+SD+EE & \textbf{99.55$\pm$0.21} & \textbf{100.00$\pm$0.00} & \textbf{99.99$\pm$0.00} & \textbf{100.00$\pm$0.00} & \textbf{94.55$\pm$0.32} & \textbf{93.74$\pm$0.21} & \textbf{99.34$\pm$0.02} & 91.97$\pm$0.46 \\

\bottomrule
\end{tabular}
}
\end{center}
\end{table*}

\section{Limitations}\label{app:limitation}

Our latent signal assignment hypothesis should be viewed as a mechanistic perspective rather than a directly identifiable latent-variable model. The proposed positive-signal, negative-signal, and context-driven edge categories are inferred indirectly through their observed behaviors under re-explanation, rather than explicitly estimated or uniquely identifiable from model parameters. As a result, our analysis does not constitute a rigorous causal proof of why self-inconsistency arises. Instead, it provides an explanatory framework supported by theoretical analysis, empirical observations, and the effectiveness of the resulting calibration strategy. More broadly, understanding why neural networks produce unstable or unreliable explanations remains fundamentally challenging due to the highly distributed and non-transparent nature of learned representations. Similar to many recent studies on emergent behaviors in modern deep learning systems, our work relies on combining empirical phenomena, theoretical abstractions, and practical validation to develop plausible mechanistic interpretations. We hope this work can serve as a step toward more rigorous understanding and reliability analysis of self-interpretable graph learning systems.

SD leverages self-inconsistency for denoising. While both single-model self-inconsistency (as used in SD) and cross-model inconsistency (as used in EE) provide useful signals, they are inherently incomplete and cannot capture all noise sources. Thus, although SD (and its combination with EE) improves explanation quality, it does not guarantee fully trustworthy explanations. This limitation motivates future work on new criteria for assessing GNN explanation trustworthiness.

\section{Ethics Statement \& Broader Impacts}\label{app:impacts}

This work uses only publicly available benchmark datasets for graph explanation tasks. No human subjects, personally identifiable information, or sensitive data are involved. Our work aims to improve explanation quality, which has positive implications for transparency and trustworthiness in AI systems. Trustworthy explanations are particularly important in risk-sensitive domains where graph learning models may support human decisions, such as scientific discovery, drug discovery, healthcare, and finance. In these settings, misleading explanations can affect downstream interpretation and decision-making, while more trustworthy explanations can help users better assess model behavior and avoid over-interpreting spurious patterns. By identifying and suppressing context-driven explanation noise, our method may contribute to safer and more reliable use of GNNs in such applications.




\end{document}